\title{Sampling with Riemannian Hamiltonian Monte Carlo in a Constrained Space}
\author{%
  Yunbum Kook \\
  Georgia Tech\\
  \texttt{yb.kook@gatech.edu}\\
  % examples of more authors
  \And
  Yin Tat Lee \\
  Microsoft Research,\\ University of Washington\\
  \texttt{yintat@uw.edu} \\
  \AND
  Ruoqi Shen \\
  University of Washington \\
  \texttt{shenr3@cs.washington.edu}\\
  \And
  Santosh S. Vempala \\
  Georgia Tech \\
  \texttt{vempala@gatech.edu} \\
  % \And
  % Coauthor \\
  % Affiliation \\
  % Address \\
  % \texttt{email} \\
}
\numberwithin{equation}{section}
\numberwithin{figure}{section}
\theoremstyle{plain}
\newtheorem{thm}{Theorem}
\theoremstyle{plain}
\theoremstyle{plain}
\newtheorem{lem}[thm]{Lemma}
\theoremstyle{plain}
\theoremstyle{remark}
\newtheorem*{rem*}{Remark}
\theoremstyle{definition}
\theoremstyle{remark}
\newtheorem{rem}[thm]{Remark}
\theoremstyle{plain}
\newtheorem*{question*}{Question}
\theoremstyle{plain}
\theoremstyle{plain}
\newtheorem*{corollary}{Corollary}
\theoremstyle{plain}
\newtheorem{definition}{Definition}
\newcommand{\Par}[1]{\left(#1\right)}
\newcommand{\Brack}[1]{\left[#1\right]}
\newcommand{\bx}{\overline{x}}
\newcommand{\bv}{\overline{v}}
\newcommand{\bh}{\overline{H}}
\newcommand{\N}{\mathcal N}
\newcommand{\half}{\frac 1 2}
\newcommand{\bg}{\overline{g}}
\newcommand{\TV}{\mathrm{TV}}
\newcommand{\del}{\partial}
\definecolor{blue}{rgb}{0.3, 0.3, 0.7}
\definecolor{purple}{rgb}{0.8, 0.3, 0.7}
\newcommand{\norm}[1]{\left\lVert#1\right\rVert}
\begin{document}
\global\long\def\defeq{\stackrel{\mathrm{{\scriptscriptstyle def}}}{=}}%
\global\long\def\norm#1{\left\Vert #1\right\Vert }%
\global\long\def\R{\mathbb{R}}%
\global\long\def\tcal{\mathbb{\mathcal{T}}}%
\global\long\def\mcal{\mathbb{\mathcal{M}}}%
\global\long\def\sp{\textrm{sp}}%
\global\long\def\half{\frac{1}{2}}%
\global\long\def\mhalf{-\frac{1}{2}}%
\global\long\def\ot{\frac{1}{3}}%
\global\long\def\tt{\frac{2}{3}}%

\global\long\def\Rn{\mathbb{R}^{n}}%
\global\long\def\tr{\mathrm{Tr}}%
\global\long\def\diag{\mathrm{diag}}%
\global\long\def\Diag{\mathrm{Diag}}%
\global\long\def\cov{\mathrm{Cov}}%
\global\long\def\E{\mathbb{E}}%
\global\long\def\P{\mathbf{P}}%
\global\long\def\Var{\mathrm{Var}}%
\global\long\def\rank{\mathrm{rank}}%
\global\long\def\pdet{\mathrm{pdet}}%
\global\long\def\range{\mathrm{Range}}%
\global\long\def\nulls{\mathrm{Null}}%
\global\long\def\ox{\overline{x}}%
\global\long\def\init{\mathrm{(init)}}%
\global\long\def\xmid{x_{\text{mid}}}%
\global\long\def\vmid{v_{\text{mid}}}%
\global\long\def\mid{\text{mid}}%
\global\long\def\tx{\widetilde{x}}%
\global\long\def\tv{\widetilde{v}}%
\global\long\def\T{\mathcal{T}}%
\global\long\def\Px{P^{(x)}}%
\global\long\def\Pv{P^{(y)}}%
\global\long\def\vbad{V_{\text{bad}}}%
\global\long\def\vgood{V_{\text{good}}}%
\global\long\def\idftn{\text{id}}%

\maketitle

\begin{abstract}
We demonstrate for the first time that ill-conditioned, non-smooth,
constrained distributions in very high dimension, upwards of 100,000,
can be sampled efficiently \emph{in practice}. Our algorithm incorporates
constraints into the Riemannian version of Hamiltonian Monte Carlo
and maintains sparsity. This allows us to achieve a mixing rate independent
of condition numbers.

On benchmark data sets from systems biology and linear programming,
our algorithm outperforms existing packages by orders of magnitude.
In particular, we achieve a 1,000-fold speed-up for sampling from
the largest published human metabolic network (RECON3D). Our package
has been incorporated into the COBRA toolbox.
\end{abstract}

\section{Introduction}

\paragraph*{Sampling is Fundamental.}

Sampling algorithms arise naturally in models of statistical physics,
e.g., Ising, Potts models for magnetism, Gibbs model for gases, etc.
These models directly suggest Markov chain algorithms for sampling the corresponding configurations.  In the Ising model where the vertices of a graph are assigned a spin, i.e., $\pm1$, in each step, we pick a vertex at random and flip its spin with some probability. The probability is chosen so that the distribution of the vector of all spins approaches a target distribution where the probability exponentially decays with the number of agreements in spin for pairs corresponding to edges of the graph. In the Gibbs model, particles move randomly with collisions and their motion is often modeled as reflecting Brownian motion.
Sampling with Markov chains is today the primary algorithmic approach for high-dimensional
sampling. For some fundamental problems, sampling with Markov chains
is the only known efficient approach or the only approach to have
guarantees of efficiency. Two notable examples are sampling perfect
matchings of a bipartite graph and sampling points from a convex body.
These are the core subroutines for estimating the permanent of a nonnegative
matrix and estimating the volume of a convex body, respectively. The
solution space for these problems scales exponentially with the dimension.
In spite of this, polynomial-time algorithms have been discovered
for both problems. The current best permanent algorithm scales as
$n^{7}$ (time) \cite{bezakova2008accelerating,jerrum2004polynomial},
while the current best volume algorithm scales as $n^{3}$ (number
of membership tests) \cite{jia2021reducing}. For the latter, the
first polynomial-time algorithm had a complexity of $n^{27}$ \cite{dyer1991random},
and the current best complexity is the result of many breakthrough
discoveries, including general-purpose algorithms and analysis tools.

\paragraph*{Sampling is Ubiquitous.}

The need for efficient high-dimensional sampling arises in many fields.
A notable setting is \emph{metabolic networks} in systems biology.
A constraint-based model of a metabolic network consists of $m$ metabolites
and $n$ reactions, and a set of equalities and inequalities that
define a set of feasible steady state reaction rates (fluxes): 
\[
\Omega=\left\{ v\in\R^{n} \,\vert\, Sv=0,\,l\le v\le u,\,c^{T}v=\alpha\right\} ,
\]
where $S$ is a stoichiometric matrix with coefficients for each metabolite
and reaction. The linear equalities ensure that the fluxes into and
out of every node are balanced. The inequalities arise from thermodynamical
and environmental constraints. Sampling constraint-based models is
a powerful tool for evaluating the metabolic capabilities of biochemical
networks \cite{lewis2012constraining,thiele2013community}. While
the most common distribution used is uniform over the feasible region,
researchers have also argued for sampling from the Gaussian density
restricted to the feasible region; the latter has the advantage that
the feasible set does not have to be bounded. A previous approach
to sampling, using hit-and-run with rounding \cite{haraldsdottir2017chrr},
has been incorporated into the COBRA package \cite{heirendt2019creation}
for metabolic systems analysis (Bioinformatics).

A second example of mathematical interest is the problem of computing
the \href{http://math.sfsu.edu/beck/birkhoff/volumes.html}{volume}
of the Birkhoff polytope. For a given dimension $n$, the Birkhoff
polytope is the set of all doubly stochastic $n\times n$ matrices
(or the convex hull of all permutation matrices). This object plays
a prominent role in algebraic geometry, probability, and other fields.
Computing its volume has been pursued using algebraic representations;
however exact computations become intractable even for $n=11$, requiring
years of computation time. Hit-and-run has been used to show that
sampling-based volume computation can go to higher dimension \cite{cousins2016practical},
with small error of estimation. However, with existing sampling implementations,
going beyond $n=20$ seems prohibitively expensive.

A third example is from machine learning, a field that is increasingly
turning to \emph{sampling }models of data according to their performance
in some objective. One such commonly used criterion is the logistic
regression function. The popularity of logistic regression has led
to sampling being incorporated into widely used packages such as STAN
\cite{stan}, PyMC3 \cite{salvatier2016probabilistic}, and Pyro \cite{bingham2019pyro}. However, those packages in general do not {run on} the constraint-based models we are interested in.

\paragraph*{Problem Description.}

In this paper, we consider the problem of sampling from distributions
whose densities are of the form
\begin{equation}
e^{-f(x)}\text{ subject to }Ax=b,x\in K\label{eq:problem-1}
\end{equation}
where $f$ is a convex function and $K$ is a convex body. We assume
that a self-concordant barrier $\phi$ for $K$ is given. Note that
any convex body has a self-concordant barrier \cite{lee2021universal}
and there are explicit barriers for convex bodies that come up in 
practical applications~\cite{nesterov1994interior}, so this is a mild assumption.
We introduce an efficient algorithm for the problem when
$K$ is a product of convex bodies $K_{i}$, each with small dimension. Many
practical instances can be written in this form. As a special case, the algorithm can handle $K$ in the
form of $\{x\in\Rn:l_{i}\leq x_{i}\leq u_{i}\text{ for all }i\in[n]\}$
with $l_{i}\in\R\cup\{-\infty\}$ and $u_{i}\in\R\cup\{+\infty\}$,
which is the common model structure in systems biology. Moreover,
any generalized linear model $\exp(-\sum f_{i}(a_{i}^{\top}x-b_{i}))$,
e.g., the logistic model, can be rewritten in the form 
\begin{equation}
\exp(-\sum t_{i})\text{ subject to }Ax=b+s,(s,t)\in K\label{eq:problem-2}
\end{equation}
where $K=\Pi K_{i}$ and each $K_{i}=\{(s_{i},t_{i}):f_{i}(s_{i})\leq t_{i}\}$
is a two-dimensional convex body.

\paragraph{The Challenges of Practical Sampling.}

High dimensional sampling has been widely studied in both the theoretical
computer science and the statistics communities. Many popular samplers
are first-order methods, such as MALA \cite{roberts1996exponential},
basic HMC \cite{neal2011mcmc,duane1987hybrid} and NUTS \cite{hoffman2014no},
which update the Markov chain based on the gradient information of
$f$. The runtime of such methods can depend on the condition number
of the function $f$ \cite{dwivedi2018log,lee2020logsmooth,chen2020fast,cheng2018underdamped,shen2019randomized}.
However, the condition number of real-world applications can be very
large. For example, RECON1~\cite{king2016bigg}, a reconstruction
of the human metabolic network, can have condition number as large
as $10^{6}$ due to the dramatically different orders of different
chemicals' concentrations. Motivated by sampling from ill-conditioned
distributions, another class of samplers use higher-order information
such as Hessian of $f$ to take into account the local structure of
the problems \cite{simsekli2016stochastic,chewi2020exponential}.
However, such samplers cannot handle non-smooth distributions, such
as hinge-loss, lasso, or uniform densities over polytopes.

For non-smooth distributions, the best polytime methods are based
on discretizations of Brownian motion, e.g., the Ball walk \cite{kannan1997random}
(and its affine-invariant cousin, the Dikin walk \cite{kannan2012random}),
which takes a random step in a ball of a fixed size around the current
point. Hit-and-Run \cite{lovasz2006hit} builds on these by avoiding
an explicit step size and going to a random point along a random line
through the current point. Both approaches hit the same bottleneck
--- in a polytope that contains a unit ball, the step size should
be $O(1/\sqrt{n})$ to avoid stepping out of the body with large probability.
This leads to quadratic bounds (in dimension) on the number of steps
to ``mix''. 

Due to the reduction mentioned in (\ref{eq:problem-2}), non-smooth
distributions can be translated to the form in (\ref{eq:problem-1})
with constraint $K$. Both the first and higher-order sampler and
the polytime non-smooth samplers have their limitations in handling
distributions with non-smooth objective function or constraint $K$. 
Given the limitations of all previous samplers, a natural question
we want to ask is the following.
\begin{question*}
Can we develop a practically efficient sampler that can handle the constrained problem in
\eqref{eq:problem-1} and preserve sparsity\footnote{When $A$ is sparse, preserving the sparsity of $A$ can greatly enhance both the runtime and the space efficiency.} with mixing time independent
of the condition number?
\end{question*}
In some applications, smoothness and condition number can be controlled
with tailor-made models. Our goal here is to propose a general solver
that can sample from any non-smooth distributions as given. For traditional
samplers such as the Ball walk and Hit-and-Run, as mentioned earlier,
the step size needs to be small so that the process does not step
out. An approach that gets around this bottleneck is Hamiltonian Monte
Carlo (HMC), where the next step is given by a point along a Hamiltonian-preserving
curve according to a suitably chosen Hamiltonian. It has two advantages.
First, the steps are no longer straight lines in Euclidean space,
and we no longer have the concern of ``stepping out''. Second, the
process is \emph{symplectic} (so measure-preserving), and hence the
filtering step is easy to compute. It was shown in \cite{lee2018convergence}
that significantly longer steps can be taken and the process with
a convergence analysis in the setting of Hessian manifolds, leading
to subquadratic convergence for uniformly sampling polytopes.

To make this practical, however, is a formidable challenge. There are two
high-level difficulties. One is that many real-world instances are
\emph{highly skewed} (far from isotropic) and hence it is important to use
the local geometry of the density function. This means efficiently
computing or maintaining second-order information such as a Hessian
of the logarithm of the density. This can be done in the Riemannian
HMC (RHMC) framework \cite{girolami2011riemann,lee2018convergence},
but the computation of the next step requires solving the Hamiltonian
ODE to high accuracy, which in turn needs the computation of leverage
scores, a procedure that takes at least matrix-multiplication time
in the worst case. Another important difficulty is maintaining hard
linear constraints. Existing high-dimensional packages do not allow for
constraints (they must be somehow incorporated into the target density),
and RHMC is usually considered with a full-dimensional feasible region
such as a full-dimensional polytope. This can also be done in the
presence of linear equalities by working in the affine subspace defined
by the equalities, but this has the effect of \emph{losing any sparsity}
inherent in the problem and turning all coefficient matrices and objective
coefficients into dense objects, thereby potentially incurring a quadratic
blow-up. 

\paragraph*{Our Solution: Constrained Riemannian Hamiltonian Monte Carlo (CRHMC).}

We develop a constrained version of RHMC, maintaining both \emph{sparsity}
and \emph{constraints}. Our refinement of RHMC ensures that the process satisfies
the given constraints throughout, without incurring a significant
overhead in time or sparsity. It works even if the resulting feasible
region is poorly conditioned. Since many instances in practice are
ill-conditioned and have degeneracies, we believe this is a crucial
aspect. Our algorithm outperforms existing packages by orders of magnitude. 

In Section \ref{sec:Algorithm}, we give the main ingredients of the algorithm and discuss how we overcome the challenges that prevent us from sampling efficiently in practice. Following that,
in Section \ref{sec:Experiments}, we present empirical results on
several benchmark datasets, showing that CRHMC successfully samples
much larger models than previously known to be possible, and is significantly
faster in terms of rate of convergence (``number of steps'') and
total sampling time. Our complete package is available on \href{https://github.com/ConstrainedSampler/PolytopeSamplerMatlab}{GitHub}.
{We refer the reader to \hyperref[app:outline]{Appendix} for theory, notations, and definitions.}

\section{Algorithm: Constrained RHMC}

\label{sec:Algorithm}

In this section, we propose a constrained Riemannian Hamiltonian Monte
Carlo (CRHMC\footnote{pronounced ``crumch''.}) algorithm to sample
from a distributions of the form
\[
e^{-f(x)}\text{ subject to }c(x)=0 \text{ and $x\in K$ for some convex body }K,
\]
where the constraint function $c:\Rn\rightarrow\R^{m}$ satisfies
the property that the Jacobian $Dc(x)$ has full rank for all $x$
such that $c(x)=0$. It is useful to keep in mind the case when $c(x)=0$
is an affine subspace $Ax=b$, in which case $Dc(x)=A$, and the full-rank
condition simply says that the rows of $A$ are independent.

We refer readers to \cite{andersen1983rattle, brubaker2012family, reich1993symplectic} for preliminary versions of CRHMC called the constrained Hamiltonian Monte Carlo (CHMC).
In particular, a framework in \cite{brubaker2012family} can be extended to CRHMC when $K=\Rn$, and in fact they mention CRHMC as a possible variant. 
However, their algorithm for CRHMC requires eigenvalue decomposition and is not efficient for large problems, {which takes $n^3$ time and $n^2$ space per MCMC step in practice. 
In this section, we propose an algorithm that overcomes those limitations and satisfies the additional constraint $K$ by using a local metric induced by the Hessian of self-concordant barriers, leading to $n^{1.5}$ time and $n$ space in practice.

\subsection{Basics of CRHMC\label{subsec:Basics-of-RHMC}}

To introduce our algorithm, we first recall the RHMC algorithm (Algorithm
\ref{alg:HMC}). In RHMC, we extend the space $x$ to the pair $(x,v)$,
where $v$ denotes the \emph{velocity}. Instead of sampling from $e^{-f(x)}$,
RHMC samples from the distribution $e^{-H(x,v)}$, where $H(x,v)$
is the Hamiltonian, and then outputs $x$. To make sure the distribution
is correct, we choose the Hamiltonian such that the marginal of $e^{-H(x,v)}$
along $v$ is proportional to $e^{-f(x)}$. One common choice of $H(x,v)$
is 
\begin{equation}
H(x,v)=f(x)+\frac{1}{2}v^{\top}M(x)^{-1}v+\frac{1}{2}\log\det M(x),\label{eq:standard_Ham}
\end{equation}
where $M(x)$ is a position-dependent positive definite matrix defined
on $\Rn$.

\begin{algorithm2e}[h!]

\caption{$\texttt{Riemannian Hamiltonian Monte Carlo}$ (RHMC)}

\label{alg:HMC}

\SetAlgoLined

\textbf{Input:} Initial point $x^{(0)}$, step size $h$

\For{$k=1,2,\cdots$}{

\tcp{Step 1: resample $v$}

Sample $v^{(k-\frac{1}{2})}\sim\mathcal{N}(0,M(x^{(k-1)}))$ and set
$x^{(k-\frac{1}{2})}\leftarrow x^{(k-1)}$.

\ 

\tcp{Step 2: Hamiltonian dynamics}

Solve the ODE 
\begin{equation}
\frac{dx}{dt}=\frac{\partial H(x,v)}{\partial v},\ \frac{dv}{dt}=-\frac{\partial H(x,v)}{\partial x} \label{eq:ham_ODE}
\end{equation}
with $H$ defined in (\ref{eq:standard_Ham}) and the initial point
given by $(x^{(k-\frac{1}{2})},v^{(k-\frac{1}{2})})$.

Set $x^{(k)}\leftarrow x(h)$ and $v^{(k)}\leftarrow v(h)$.

}

\textbf{Output:} $x^{(k)}$

\end{algorithm2e}

\label{subsec:Basics-of-CRHMC}
To extend RHMC to the constrained case, we need to make sure both
Step 1 and Step 2 satisfy the constraints, so the Hamiltonian dynamics
has to maintain $c(x)=0$ throughout Step 2.
Note that
\begin{equation}
\frac{d}{dt}c(x_{t})=Dc(x_{t})\cdot\frac{dx_{t}}{dt}=Dc(x_{t})\cdot\frac{\partial H(x_{t},v_{t})}{\partial v_{t}}, \label{eq:dc_cond}
\end{equation}
where $Dc(x)$ is the Jacobian of $c$ at $x$. With $H$ defined
in (\ref{eq:standard_Ham}), Condition (\ref{eq:dc_cond}) becomes
$Dc(x)M(x)^{-1}v=0$. However, for full rank $Dc(x)$, if $M(x)$
is invertible, then $\range(v)=\range(\mathcal{N}(0,M(x)))=\Rn$ immediately
violates this condition due to $\textrm{dim(}\nulls(Dc(x)M^{-1}(x)\textrm{)})=n-m$.
To get around this issue, we use a non-invertible matrix $M(x)$ with
its pseudo-inverse $M(x)^{\dagger}$ to satisfy $Dc(x)M(x)^{\dagger}v=0$
for any $v\in\range(M(x))$. Since we want the step to be able to
move in all directions satisfying $c(x)=0$, we impose the following
condition with $\range(M(x))=\range(M(x)^{\dagger})$ in mind:
\begin{equation}
\range(M(x))=\nulls(Dc(x))\text{ for all }x\in\Rn,\label{eq:CHMC_cond1}
\end{equation}
which can be achieved by $M(x)$ proposed soon.

Under the condition (\ref{eq:CHMC_cond1}), we sample $v$ from $\mathcal{N}(0,M(x))$
in Step 1, which is equivalent to sampling from $e^{-H(x,v)}$ subject
to $v\in\range(M(x))=\nulls(Dc(x))$. Also, the stationary distribution
of CRHMC should be proportional to
\[
e^{-H(x,v)}\text{ subject to }c(x)=0\text{ and }v\in\nulls(Dc(x)).
\]
Here, to maintain $v\in\nulls(Dc(x))$ during Step 2 we add a Lagrangian
term to $H$. Without the Lagrangian term, $v_{t}$ would escape from
$\nulls(Dc(x_{t}))=\range(M(x_{t}))$ in Step 2 as seen in the proof of Lemma~\ref{lem:CHMC_inva}, which contradicts $\range(v_{t})=\range(\mathcal{N}(0,M(x_{t})))=\range(M(x_{t}))$.
The constrained Hamiltonian we propose is (See its rigorous derivation in Lemma~\ref{lem:CHMC_inva})
\begin{equation}
H(x,v)=\overline{H}(x,v)+\lambda(x,v)^{\top}c(x)\quad\text{with}\quad\overline{H}(x,v)=f(x)+\frac{1}{2}v^{\top}M(x)^{\dagger}v+\log\pdet(M(x))\label{eq:const_H}
\end{equation}
where
$
\lambda(x, v)
=
(Dc(x)Dc(x)^{\top})^{-1}\left(D^{2}c(x)[v,\frac{dx}{dt}]-Dc(x)\frac{\partial\overline{H}(x, v)}{\partial x}\right).
$
Here, $\pdet$ denotes pseudo-determinant and $\lambda(x,v)$ is picked
so that $v\in\nulls(Dc(x))$. An algorithmic description of CRHMC is the same as Algorithm~\ref{alg:HMC}
with the constrained $H$ in place of the unconstrained $\overline{H}$. We show the convergence of CRHMC to the correct distribution $\exp(-f(x))$ in Appendix \ref{subsec:Stationarity-of-CRHMC}.

\paragraph{Choice of $M$ via Self-concordant Barriers.}

The construction of the Hamiltonian \eqref{eq:const_H} relies on having
a family of positive semi-definite matrix $M(x)$ satisfying the condition
(\ref{eq:CHMC_cond1}) (i.e., $\range(M(x))=\nulls(Dc(x))$). One
natural choice is the orthogonal projection to $\nulls(Dc(x))$: 
\begin{equation}
	Q(x)=I-Dc(x)^{\top}(Dc(x)Dc(x)^{\top})^{-1}Dc(x)\label{eq:Qx},
\end{equation}
which is similar to the choice in \cite{brubaker2012family}. 

For the problem we care about, there are additional constraints on
$x$ other than $\{c(x)=0\}$. In the standard HMC algorithm, we have  $\frac{dx}{dt}\sim\mathcal{N}(0,M(x)^{-1})$.
For example, for a simple constraint $K=[0, 1]$, to ensure every direction is moving towards/away from $x=0$ multiplicatively,
a natural choice of $M$ is $M(x)=\diag(x^{-2})$. 
For general convex body $K$, we can use
a \textit{self-concordant barrier}, a function defined on $K$ such that $\phi(x)$ is self-concordant and  $\phi(x)\rightarrow+\infty$
as $x\rightarrow\partial K$. 
Using the barrier $\phi$, we can define the local metric based on 
$g(x)=\nabla^{2}\phi(x)$.
Intuitively, as the sampler approaches $\partial K$, the local metric stretches accordingly so that the Hamiltonian dynamics never passes the barrier, respecting $x\in K$ throughout.

In summary, we need $M(x)$ to have its range match the null space of $Dc(x)$ and agree with $g(x)$ in its range. 
We can verify that $M(x)=Q(x)^{\top}g(x) Q(x)$, where $Q(x)$ is the symmetric matrix defined in (\ref{eq:Qx}), satisfies these two constraints.

\subsection{Efficient Computation of $\partial H/\partial x$ and $\partial H/\partial v$\label{subsec:Efficient}}

With $M(x)=Q(x)^{\top}g(x) Q(x)$, we have all the pieces of the algorithm. However, using this naive algorithm to compute $\partial H/\partial x$
and $\partial H/\partial v$, we face several challenges. 

\begin{enumerate}\setlength{\itemindent}{-2em}
	\item The algorithm involves computing the pseudo-inverse and its derivatives, which takes $O(n^{3})$ except for very special matrices. 
	\item The Lagrangian term in the constrained Hamiltonian dynamics requires additional computation such as the second-order derivative of $c(x)$. 
	\item A naive approach to computing leverage scores in $\partial H/\partial x$ results in a very dense matrix.
\end{enumerate}
Those challenges make the algorithm hard to implement and inefficient, especially when the dimension is high. In the following paragraphs, we give an overview of how we overcome each of the challenges above. We defer a more detailed discussion of our approaches and the proofs to Appendix~\ref{subsec:postponed_sec_efficient}. 

\paragraph{Avoiding Pseudo-inverse and Pseudo-determinant.}

We are able to show equivalent formulas for $M(x)^{\dagger}$ and $\log\pdet M(x)$ that can take advantage of sparse linear system solvers. In particular, we show that $M(x)^{\dagger}=g(x)^{-\frac{1}{2}}\cdot (I- P(x))\cdot g(x)^{-\frac{1}{2}}$, where 
\begin{equation}
\label{eq:formula_inverse}
P(x)=g(x)^{-\frac{1}{2}}\cdot Dc(x)^{\top}(Dc(x)\cdot g(x)^{-1}\cdot Dc(x)^{\top})^{-1}Dc(x)\cdot g(x)^{-\frac{1}{2}}.
\end{equation} As mentioned earlier, a majority of convex bodies appearing in practice
are of the form $K=\prod_{i}K_{i}$, where $K_{i}$ are constant dimensional
convex bodies. 
In this case, we will choose $g(x)$ to be a block diagonal matrix with each block of size $O(1)$. Hence, the bottleneck of applying $P(x)$ to a vector is simply solving a linear system of the form
$(Dc\cdot g^{-1}\cdot Dc^{\top})u=b$ for some $b$. The existing sparse linear system solvers can solve large classes of sparse
linear system much faster than $O(n^{3})$ time \cite{demmel1997applied}.
For $\log\pdet M(x)$, we show
\begin{equation}
\label{eq:formula_pdet}
\log\pdet(M(x))=\log\det g(x)+\log\det\left(Dc(x)\cdot g(x)^{-1}\cdot Dc(x)^{\top}\right)-\log\det\left(Dc(x)\cdot Dc(x)^{\top}\right).
\end{equation}
This simplification allows us to take advantage of sparse Cholesky decomposition.
 We prove \eqref{eq:formula_inverse} and \eqref{eq:formula_pdet} in Lemma~\ref{lem:formula_inverse} and Lemma~\ref{lem:formula_pdet} in Appendix~\ref{subsec:Avoiding-pseudo-inverse}. 
The formulas \eqref{eq:formula_inverse} and \eqref{eq:formula_pdet} avoid the expensive pseudo-inverse and pseudo-determinant computations, and significantly improve the practical performance of our algorithm. 

\paragraph{Simplification for Subspace Constraints.}
For the case $c(x)=Ax-b$, the Hamiltonian is now
$${H}(x,v)= f(x)+\frac{1}{2}v^{\top}g^{-\frac{1}{2}}\left(I-P\right)g^{-\frac{1}{2}}v+\frac{1}{2}\left(\log\det g+\log\det Ag^{-1}A^{\top}-\log\det AA^{\top}\right)+\lambda^{\top}c,$$
where $P=g^{-\frac{1}{2}}A^{\top}(Ag^{-1}A^{\top})^{-1}Ag^{-\frac{1}{2}}$.
 The key observation is that the algorithm
only needs to know $x(h)$ in the HMC dynamics, and not $v(h)$. Thus,
we can replace ${H}$ by any other that produces the same $x(h)$. 
We show in Lemma~\ref{lem:ham_deri} (Appendix~\ref{subsec:Simplification-for-subspace}) that the dynamics corresponding to ${H}$ above is equivalent to the dynamics that corresponds to a much simpler Hamiltonian:
\[
H(x,v)=f(x)+\frac{1}{2}v^{\top}g^{-\frac{1}{2}}\left(I-P\right)g^{-\frac{1}{2}}v+\frac{1}{2}\left(\log\det g+\log\det Ag^{-1}A^{\top}\right).\] 	
Furthermore, we have
\begin{align*}
	\frac{dx}{dt}  =g^{-\frac{1}{2}}\left(I-P\right)g^{-\frac{1}{2}}v,\;\;
	\frac{dv}{dt}  =-\nabla f(x)+\frac{1}{2}Dg\left[\frac{dx}{dt},\frac{dx}{dt}\right]-\frac{1}{2}\tr(g^{-\frac{1}{2}}\left(I-P\right)g^{-\frac{1}{2}}Dg).\end{align*}

\paragraph{Efficient Computation of Leverage Score.}

Even after simplifying the Hamiltonian
as above, we still have
a term for the leverage scores, $\tr(g^{-\frac{1}{2}}\left(I-P\right)g^{-\frac{1}{2}}Dg)$
in $\frac{dv}{dt}$ so that we need to compute the diagonal entries of $P=g^{-\frac{1}{2}}A^{\top}(Ag^{-1}A^{\top})^{-1}Ag^{-\frac{1}{2}}$
to compute $\frac{dv}{dt}$. Since $(Ag^{-1}A^{\top})^{-1}$
can be extremely dense even when $A$ is very sparse, a naive approach
such as direct computation of the inverse can lead to a dense-matrix multiplication. To avoid dense-matrix multiplication, our approach is based on
the fact that certain entries of $(Ag^{-1}A^{\top})^{-1}$ can be
computed as fast as computing sparse Cholesky decomposition of $Ag^{-1}A^{\top}$
\cite{takahashi1973formation,campbell1995computing}, which can be
$O(n)$ time faster than computing $(Ag^{-1}A^{\top})^{-1}$ in many
settings. We first compute the Cholesky decomposition
to obtain a sparse triangular matrix $L$ such that $LL^{\top}=Ag^{-1}A^{\top}$. Then, we show that only entries of $Ag^{-1}A^{\top}$ in
$\sp(L)\cup\sp(L^{\top})$ matter in computing $\diag(A^{\top}(Ag^{-1}A^{\top})^{-1}A)$, where $\sp(L)$ is the sparsity pattern of $L$. We give the details of our approach in Appendix~\ref{subsec:Efficient-Computation-of}.

\subsection{Discretization\label{subsec:Discretization}}

Explicit integrators such as leapfrog integrator, which are commonly used for Hamiltonian Monte Carlo, are no longer symplectic on general Riemannian manifolds (see Appendix~\ref{subsec:Discretized-CRHMC-based}).
Even though there have been some attempts \cite{pihajoki2015explicit} to make explicit integrators work in the Riemannian setting, its variants do not work for ill-conditioned problems.

Our algorithm uses the \emph{implicit midpoint method} (Algorithm~\ref{alg:IMM}) to discretize the Hamiltonian process into steps
of step size $h$ and run the process for $T$ iterations. 
This integrator is reversible and symplectic (so measure-preserving) \cite{hairer2006geometric}, which allows us to use a Metropolis filter to ensure the distribution is correct so that we no longer need to solve ODE to accuracy to maintain the correct stationary distribution.
We write  $H(x,v)=\overline{H}_{1}(x,v)+\overline{H}_{2}(x,v)$,
where
\begin{align*}
	\overline{H}_{1}(x,v) & =f(x)+\frac{1}{2}\left(\log\det g(x)+\log\det Ag(x)^{-1}A^{\top}\right),\\
	\overline{H}_{2}(x,v) & =\frac{1}{2}v^{\top}g(x)^{-\frac{1}{2}}\left(I-P(x)\right)g(x)^{-\frac{1}{2}}v.
\end{align*}
Starting from $(x_{0},v_{0})$, in the first step of the integrator,
we run the process on the Hamiltonian $\overline{H}_{1}$ with step
size $\frac{h}{2}$ to get $(x_{1/3},v_{1/3})$. In the second step of the integrator, we run the process on $\overline{H}_{2}$
with step size $h$ by solving
\begin{align*}
	x_{\frac{2}{3}}  =x_{\frac{1}{3}}+h\frac{\partial\overline{H}_{2}}{\partial v}\left(\frac{x_{\frac{1}{3}}+x_{\frac{2}{3}}}{2},\frac{v_{\frac{1}{3}}+v_{\frac{2}{3}}}{2}\right),\ \;\;\;\;
	v_{\frac{2}{3}}  =v_{\frac{1}{3}}-h\frac{\partial\overline{H}_{2}}{\partial x}\left(\frac{x_{\frac{1}{3}}+x_{\frac{2}{3}}}{2},\frac{v_{\frac{1}{3}}+v_{\frac{2}{3}}}{2}\right),
\end{align*}
iteratively using the Newton's method. 
This step involves computing the Cholesky decomposition of $(Ag^{-1}A^\top)^{-1}$ using the Cholesky decomposition of $Ag^{-1}A^\top$. 
In the third step, we run the process on the Hamiltonian $\overline{H}_{1}$ with step size $\frac{h}{2}$ again to get $(x_{1},v_{1})$. 

We state the complete algorithm (Algorithm~\ref{alg:CHMC_JL} and Algorithm~\ref{alg:IMM}) with details on the step size in Appendix~\ref{subsec:Discretized-CRHMC-based} and give the theoretical guarantees in Appendix~\ref{subsec:Theoretical-Guarantees} (convergence of implicit midpoint method)
and Appendix~\ref{subsec:Choosing} (independence of condition number).

\section{Experiments}
\label{sec:Experiments}

In this section, we demonstrate the efficiency of our sampler using experiments on real-world datasets and compare our sampler with existing samplers.
We demonstrate that CRHMC is able to sample larger models than previously known to be possible, and is significantly faster in terms of rate of convergence and sampling time in Section~\ref{subsec:Sub-linear-Mixing-Time}, along with convergence test in Section~\ref{subsec:unif-test}.
We examine its behavior on benchmark instances such as simplices and Birkhoff polytopes in Section~\ref{subsec:CRHMC-on-Benchmark}.

\subsection{Experimental Setting\label{subsec:exp_setting}}

\paragraph*{Settings.}

We performed experiments on the Standard DS12 v2 model from MS Azure cloud, which has a 2.1GHz Intel Xeon Platinum 8171M CPU and 28GB memory.
In the experiments, we used our MATLAB and C++ implementation of CRHMC\footnote{Our package can be run to sample from general logconcave densities and has a feature for parallelization.}, which is available \href{https://github.com/ConstrainedSampler/PolytopeSamplerMatlab}{here} and has been integrated into the COBRA toolbox.

We used twelve constraint-based metabolic models from molecular systems biology in the \href{https://github.com/opencobra/cobratoolbox}{COBRA Toolbox v3.0}~\cite{heirendt2019creation}
and ten real-world LP examples randomly chosen from \href{http://www.netlib.org/lp/data/}{NETLIB LP test sets}. 
A polytope from each model is defined by 
$
\{x\in\R^{n}:Ax=b,\,l\leq x\leq u\}
$
for $A\in\R^{m\times n},b\in\R^{m},$ and $l,u\in\R^{n}$, which is
input to CRHMC for uniform sampling. 
We describe in Appendix~\ref{subsec:expDetails} how we preprocessed these dataset, along with full information about the datasets in Table~\ref{tab:data}.

\paragraph*{Comparison.}

We used as a baseline the Coordinate Hit-and-Run (CHAR) implemented in two different languages. The former is Coordinate Hit-and-Run with Rounding (CHRR) written in MATLAB~\cite{cousins2016practical,haraldsdottir2017chrr}
and the latter is the same algorithm (CDHR) with an R interface and a C++ library, VolEsti~\cite{chalkis2020volesti}. 
We refer readers to Appendix~\ref{subsec:expDetails} for the details of these algorithms and our comparison setup.
We note that popular sampling packages such as STAN and Pyro were not included in the experiments as they do not support constrained-based models. Even after transforming our dataset to their formats, the transformed dataset were too ill-conditioned for those algorithms to run.
{CHMC in \cite{brubaker2012family} works only for manifolds implicitly defined by $\{c(x)=0\}$ for continuously differentiable $c(x)$ with $Dc(x)$ full-rank everywhere, so we could not use it for comparison.}

\begin{figure*}
\begin{minipage}{0.47\textwidth}
\begin{centering}
\includegraphics[scale=0.3]{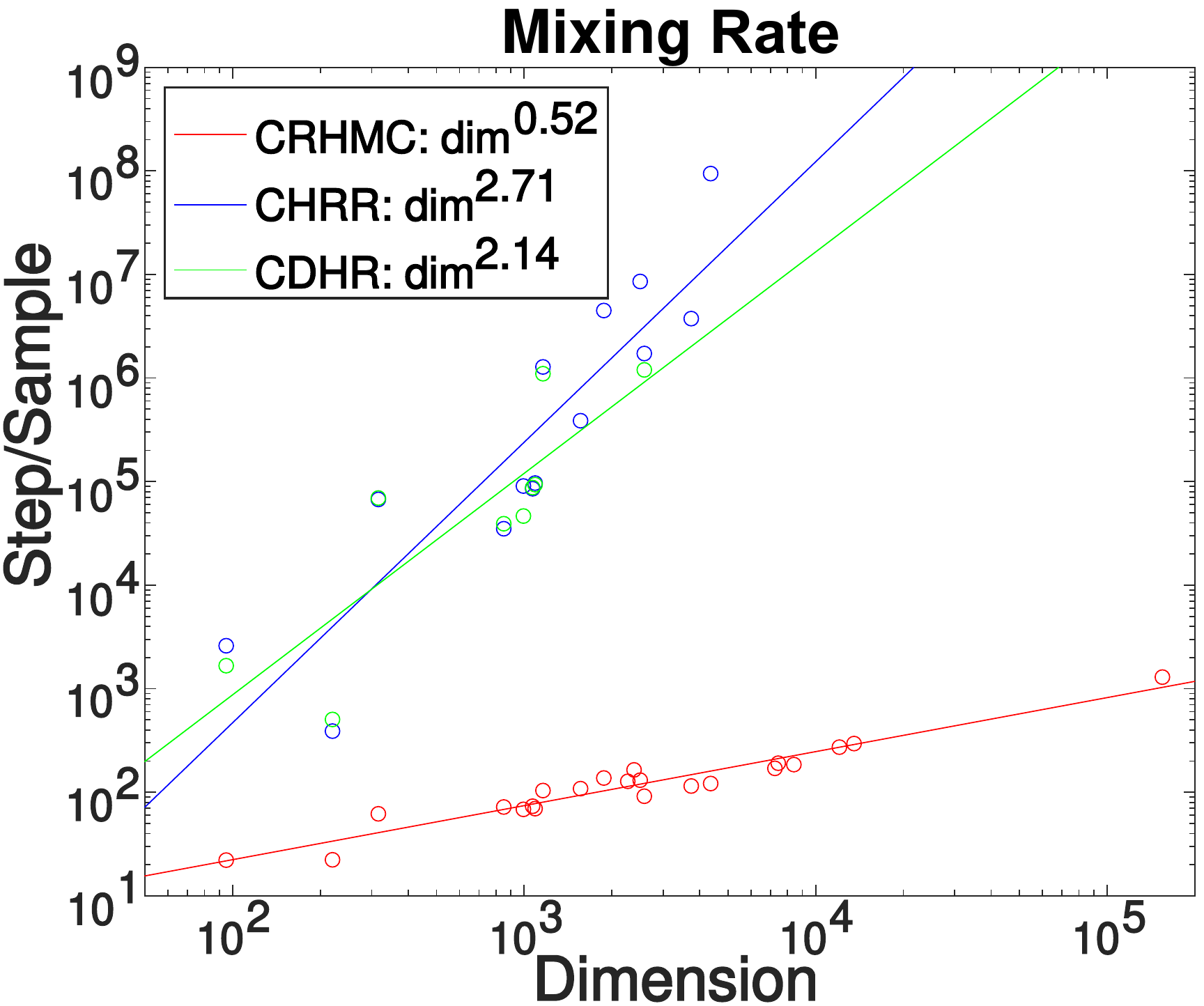} \\$\ $\includegraphics[scale=0.3]{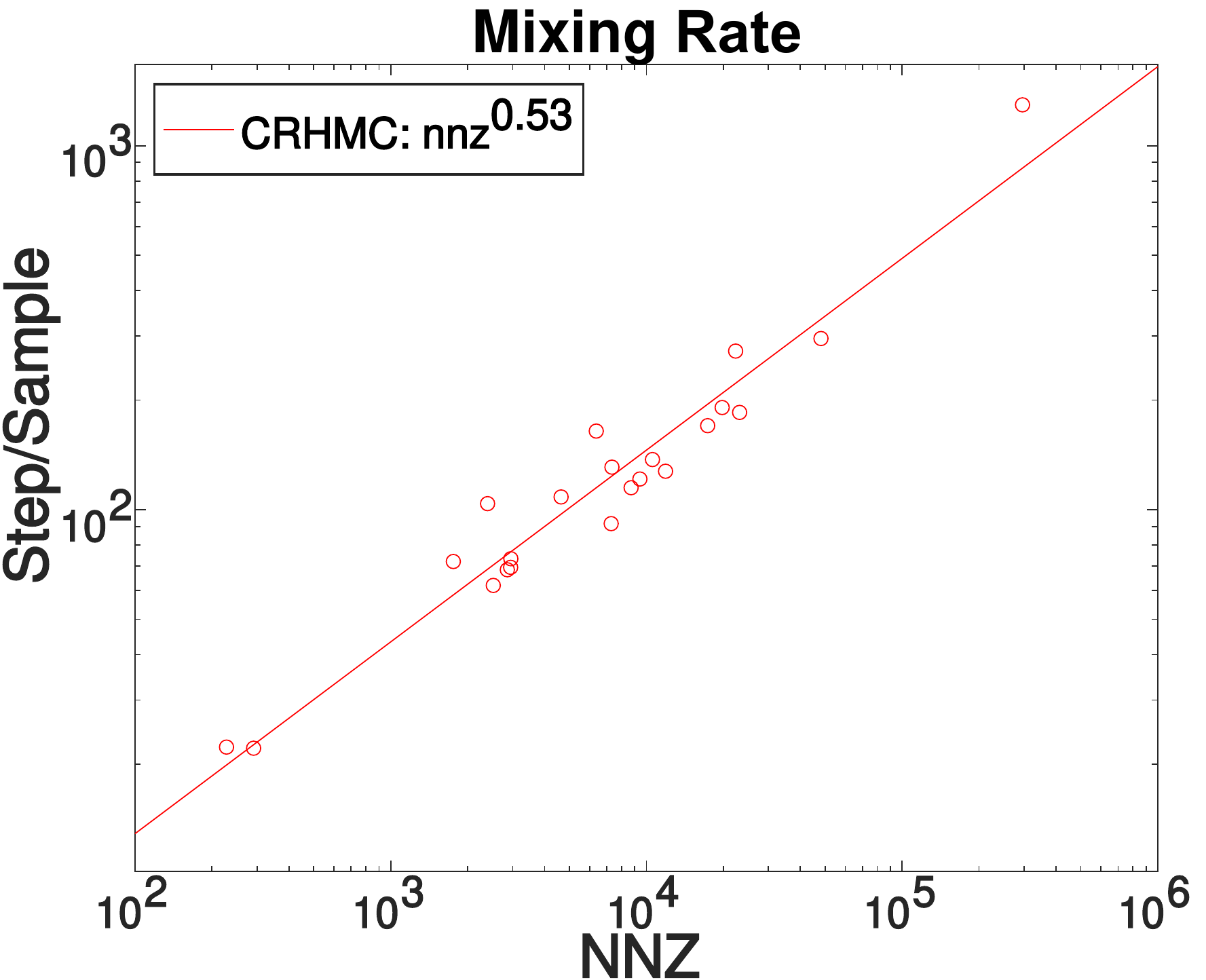}
\par\end{centering}
\caption{Mixing rate of CRHMC and the competitors. Mixing rate of CRHMC was
sub-linear in dimension and the nnz of a preprocessed matrix A in
a model, whereas the others needed quadratically many steps to converge
to uniform distribution. In particular for our dataset, CRHMC mixed
up to 6 orders of magnitude earlier than the others. Note that mixing
rate of CHAR was very close to quadratic growth when using the full-dimensional
scale (the first column in Table~\ref{tab:data}).}
\label{fig:mixing}
\vspace{-3mm}
\end{minipage}
\hfill
\begin{minipage}{0.52\textwidth}

\begin{centering}
		\includegraphics[scale=0.3]{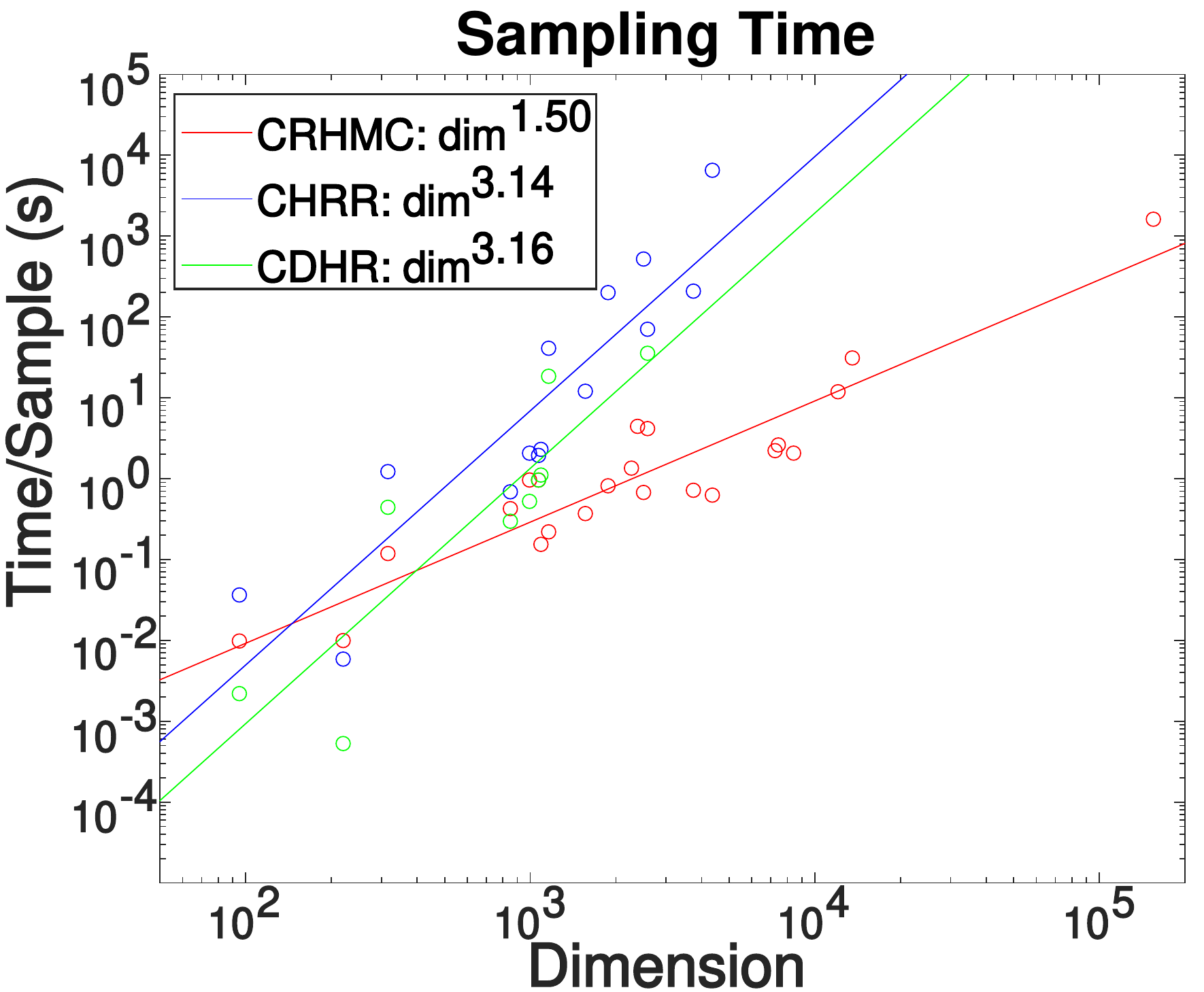}\\ \includegraphics[scale=0.3]{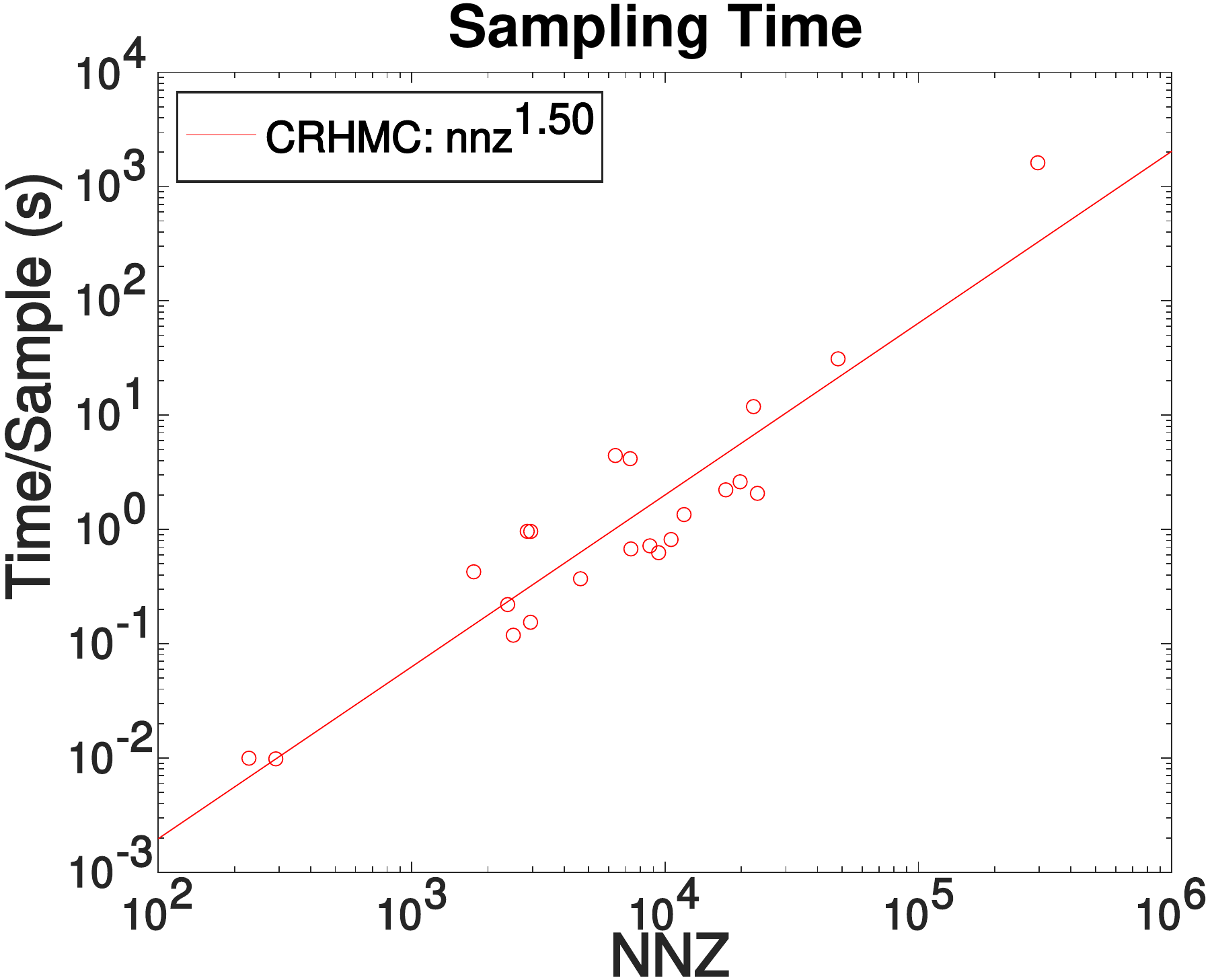}
\par\end{centering}
	\caption{Sampling time of CRHMC and the competitors. The sampling time per effective sample of CRHMC was sub-quadratic in dimension and the nnz of a preprocessed matrix A in a model, while the others indicates at least a cubic dependency on dimension. In particular for our dataset, CRHMC was able to obtain a statistically independent sample up to 4 orders of magnitude faster than the others. This benefit of speed-up was actually straightforward from the figure, since CHRR could not obtain enough samples from instances with more than 5000 variables until it ran out of time.}
	\label{fig:sample}
	\vspace{-3mm}
\end{minipage}
\end{figure*}

\paragraph*{Measurements.}

To evaluate the quality of sampling methods, we measured two quantities,
the \emph{number of steps per effective sample} (i.e., mixing
rate) and the\emph{ sampling time per effective sample, }$T_{s}$.
The \emph{effective sample size} (ESS)\footnote{We use the minimum of the ESS of each coordinate.} can be thought of as the number
of actual independent samples, taking into account correlation of
samples from a target distribution. 
Thus the number of steps per effective sample is estimated by the total number of steps divided by the ESS, and the sampling time $T_{s}$ is estimated as the total sampling time until termination divided by the ESS.

Each algorithm attempted to draw 1000 uniform samples, with limits on running
time set to 1 day (3 days for the largest instance \emph{ken\_18})
and memory usage to 6GB. If an algorithm passes either the time or
the memory limit, we stop the algorithm and measure the quantities
of interest based on samples drawn until that moment. After getting
uniform samples, we thinned the samples twice to ensure independence
of samples; first we computed the ESS of the samples, only kept ESS
many samples, and repeated this again. 
We estimated the above quantities only if the ESS
is more than 10 and an algorithm does not run into any error while
running\footnote{When running CDHR from the VolEsti package on some instances, we got an error message ``R session aborted and R encountered a fatal error''.}.

\subsection{ Mixing Rate and Sampling Time\label{subsec:Sub-linear-Mixing-Time}}

\paragraph{Sub-linear Mixing Rate.} We examined how the number of steps per effective sample grows with the number of nonzeros (nnz) of matrix $A$ (after preprocessing)
and the number of variables (dimension in the plots). To this end,
we counted the total number of steps taken until termination of algorithms
and divided it by the effective sample size of drawn samples. Note
that we thinned twice to ensure independence of samples used.

The mixing rate of CRHMC was sub-linear in both dimension and nnz,
whereas previous implementations based on CHAR required at least $n^{2}$
steps per sample as seen in Figure \ref{fig:mixing}. On the dataset,
mixing rate attained was up to 6 orders of magnitude faster for CRHMC
compared to CHAR, implying that CRHMC converged to uniform distribution
substantially faster than the other competitors. This gap in mixing
rate increased super-linearly in dimension, enabling CRHMC to run
on large instances of dimension up to 100000.

\begin{table}[t]
\renewcommand{\arraystretch}{2}
\begin{center}
\resizebox{\textwidth}{!}{
\begin{raggedright}
{\footnotesize{}}%
\begin{tabular}{>{\centering}p{2.7cm}||>{\centering}p{2.3cm}|>{\centering}p{2cm}||>{\centering}p{2.5cm}|>{\centering}p{2.5cm}|>{\centering}p{2.5cm}}
{\LARGE{}Bio Model} & {\LARGE{}Vars ($n$)} & {\LARGE{}nnz} & {\LARGE{}CRHMC} & {\LARGE{}CHRR} & {\LARGE{}CDHR}\tabularnewline
\hline 
\hline 
{\LARGE{}ecoli} & {\LARGE{}95} & {\LARGE{}291} & {\LARGE{}0.0098} & {\LARGE{}0.0365} & {\LARGE{}0.0022}\tabularnewline
\hline 
{\LARGE{}cardiac\_mit} & {\LARGE{}220} & {\LARGE{}228} & {\LARGE{}0.0100} & {\LARGE{}0.0059} & {\LARGE{}0.0005}\tabularnewline
\hline 
{\LARGE{}Aci\_D21} & {\LARGE{}851} & {\LARGE{}1758} & {\LARGE{}0.4257} & {\LARGE{}0.6884} & {\LARGE{}0.2974}\tabularnewline
\hline 
{\LARGE{}Aci\_MR95} & {\LARGE{}994} & {\LARGE{}2859} & {\LARGE{}0.9624} & {\LARGE{}2.0668} & {\LARGE{}0.5237}\tabularnewline
\hline 
{\LARGE{}Abi\_49176} & {\LARGE{}1069} & {\LARGE{}2951} & {\LARGE{}0.9608} & {\LARGE{}1.9395} & {\LARGE{}0.9622}\tabularnewline
\hline 
{\LARGE{}Aci\_20731} & {\LARGE{}1090} & {\LARGE{}2946} & {\LARGE{}0.1540} & {\LARGE{}2.3014} & {\LARGE{}1.1086}\tabularnewline
\hline 
{\LARGE{}Aci\_PHEA} & {\LARGE{}1561} & {\LARGE{}4640} & {\LARGE{}0.3701} & {\LARGE{}12.06} & {\LARGE{}-}\tabularnewline
\hline 
{\LARGE{}iAF1260} & {\LARGE{}2382} & {\LARGE{}6368} & {\LARGE{}4.4355} & {\LARGE{}3687.2} & {\LARGE{}-}\tabularnewline
\hline 
{\LARGE{}iJO1366} & {\LARGE{}2583} & {\LARGE{}7284} & {\LARGE{}4.1608} & {\LARGE{}70.5} & {\LARGE{}35.556}\tabularnewline
\hline 
{\LARGE{}Recon1} & {\LARGE{}3742} & {\LARGE{}8717} & {\LARGE{}0.7184} & {\LARGE{}208.5} & {\LARGE{}-}\tabularnewline
\hline 
{\LARGE{}Recon2} & {\LARGE{}7440} & {\LARGE{}19791} & {\LARGE{}2.6116} & {\LARGE{}10445{*}} & {\LARGE{}-}\tabularnewline
\hline 
{\LARGE{}Recon3} & {\LARGE{}13543} & {\LARGE{}48187} & {\LARGE{}31.114} & {\LARGE{}29211{*}} & {\LARGE{}-}\tabularnewline
\hline
\end{tabular}{\footnotesize{} }{\footnotesize\par}
\begin{tabular}{>{\centering}p{2.7cm}||>{\centering}p{2.3cm}|>{\centering}p{2cm}||>{\centering}p{2.5cm}|>{\centering}p{2.5cm}|>{\centering}p{2.5cm}}
{\LARGE{}LP Model} & {\LARGE{}Vars ($n$)} & {\LARGE{}nnz} & {\LARGE{}CRHMC} & {\LARGE{}CHRR} & {\LARGE{}CDHR}\tabularnewline
\hline 
\hline 
{\LARGE{}israel} & {\LARGE{}316} & {\LARGE{}2519} & {\LARGE{}0.1186} & {\LARGE{}1.2224} & {\LARGE{}0.4426}\tabularnewline
\hline 
{\LARGE{}gfrd\_pnc} & {\LARGE{}1160} & {\LARGE{}2393} & {\LARGE{}0.2199} & {\LARGE{}40.988} & {\LARGE{}18.468}\tabularnewline
\hline 
{\LARGE{}25fv47} & {\LARGE{}1876} & {\LARGE{}10566} & {\LARGE{}0.8159} & {\LARGE{}199.9} & {\LARGE{}-}\tabularnewline
\hline 
{\LARGE{}pilot\_ja} & {\LARGE{}2267} & {\LARGE{}11886} & {\LARGE{}1.3490} & {\LARGE{}5059{*}} & {\LARGE{}-}\tabularnewline
\hline 
{\LARGE{}sctap2} & {\LARGE{}2500} & {\LARGE{}7334} & {\LARGE{}0.6752} & {\LARGE{}520.2} & {\LARGE{}-}\tabularnewline
\hline 
{\LARGE{}ship08l} & {\LARGE{}4363} & {\LARGE{}9434} & {\LARGE{}0.6258} & {\LARGE{}6512} & {\LARGE{}-}\tabularnewline
\hline 
{\LARGE{}cre\_a} & {\LARGE{}7248} & {\LARGE{}17368} & {\LARGE{}2.2205} & {\LARGE{}30455{*}} & {\LARGE{}-}\tabularnewline
\hline 
{\LARGE{}woodw} & {\LARGE{}8418} & {\LARGE{}23158} & {\LARGE{}2.0689} & {\LARGE{}30307{*}} & {\LARGE{}-}\tabularnewline
\hline 
{\LARGE{}80bau3b} & {\LARGE{}12061} & {\LARGE{}22341} & {\LARGE{}11.881} & {\LARGE{}47432{*}} & {\LARGE{}-}\tabularnewline
\hline 
{\LARGE{}ken\_18} & {\LARGE{}154699} & {\LARGE{}295946} & {\LARGE{}1616.3} & {\LARGE{}-} & {\LARGE{}-}\tabularnewline
\hline 
\end{tabular}{\footnotesize{} }{\footnotesize\par}
\par\end{raggedright}
\raggedright{}}
\end{center}\caption{Sampling time per effective sample of CHRR and CRHMC. We note that
CRHMC is 1000 times faster than CHRR on the latest metabolic network
(Recon3). Sampling time with asterisk ({*}) indicates that the effective
sample size is less than 10.}
\label{tab:sampleTime}
\vspace{-6mm}
\end{table}

\paragraph{Sub-quadratic Sampling Time.} We next examined the sampling time $T_{s}$ in terms of both the nnz
of $A$ and the dimension of the instance. We computed the runtime
of algorithms until their termination divided by the effective sample
size of drawn samples, where we ignored the time it takes for preprocessing.
Note that the sampling time $T_{s}$ is essentially multiplication
of the mixing rate and the \emph{per-step complexity }(i.e., how much
time each step takes).

As shown in Figure~\ref{fig:sample} and Table~\ref{tab:sampleTime},
we found that the per-step complexity of CRHMC was small enough to make the
sampling time sub-quadratic in both dimension and nnz, whereas CHAR had at least a cubic dependency on dimension,
despite of a low per-step complexity. On our dataset, the sampling
time of CRHMC was up to 4 orders of magnitude less than that of CHRR
and CDHR. While CHRR can be used on dimension only up to a few thousands,
increasing benefits of sampling time in higher dimension allows CRHMC
to run on dimension up to 0.1 million.

\subsection{CRHMC on Structured Instances\label{subsec:CRHMC-on-Benchmark}}

To see the behavior of CRHMC on very large instances, we ran the algorithm on three families of structured polytopes -- hypercube, simplex, and Birkhoff polytope -- up to dimension half-million. 
We attempted to draw 500 uniform samples with a 1 day time limit (except for 2 days for half-million-dimensional Birkhoff polytope). 
The definitions of these polytopes are shown in Appendix~\ref{subsec:polydef}.

\begin{figure}[h]
\begin{centering}
\vspace{-1mm}
\includegraphics[scale=0.3]{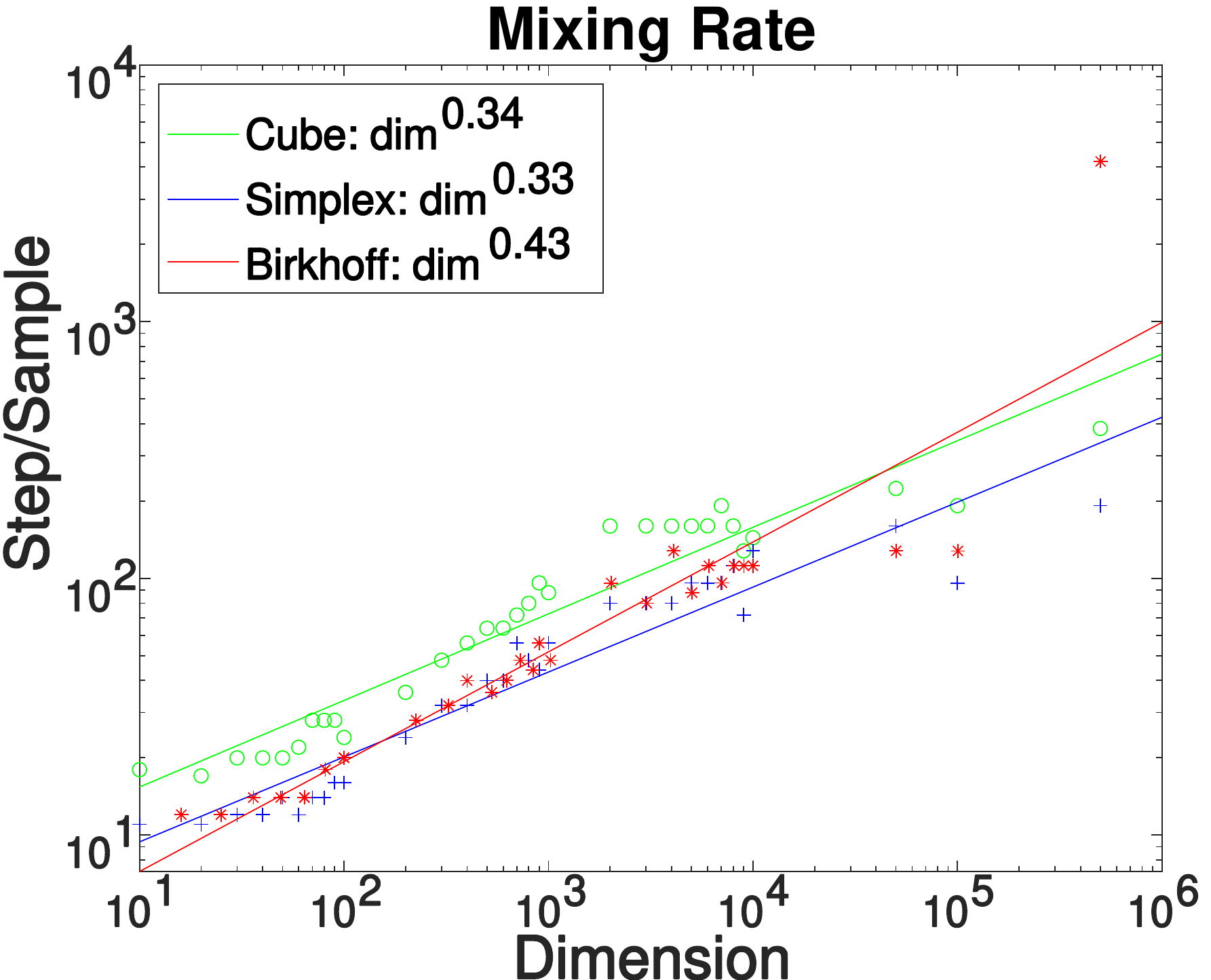}$\ $\includegraphics[scale=0.3]{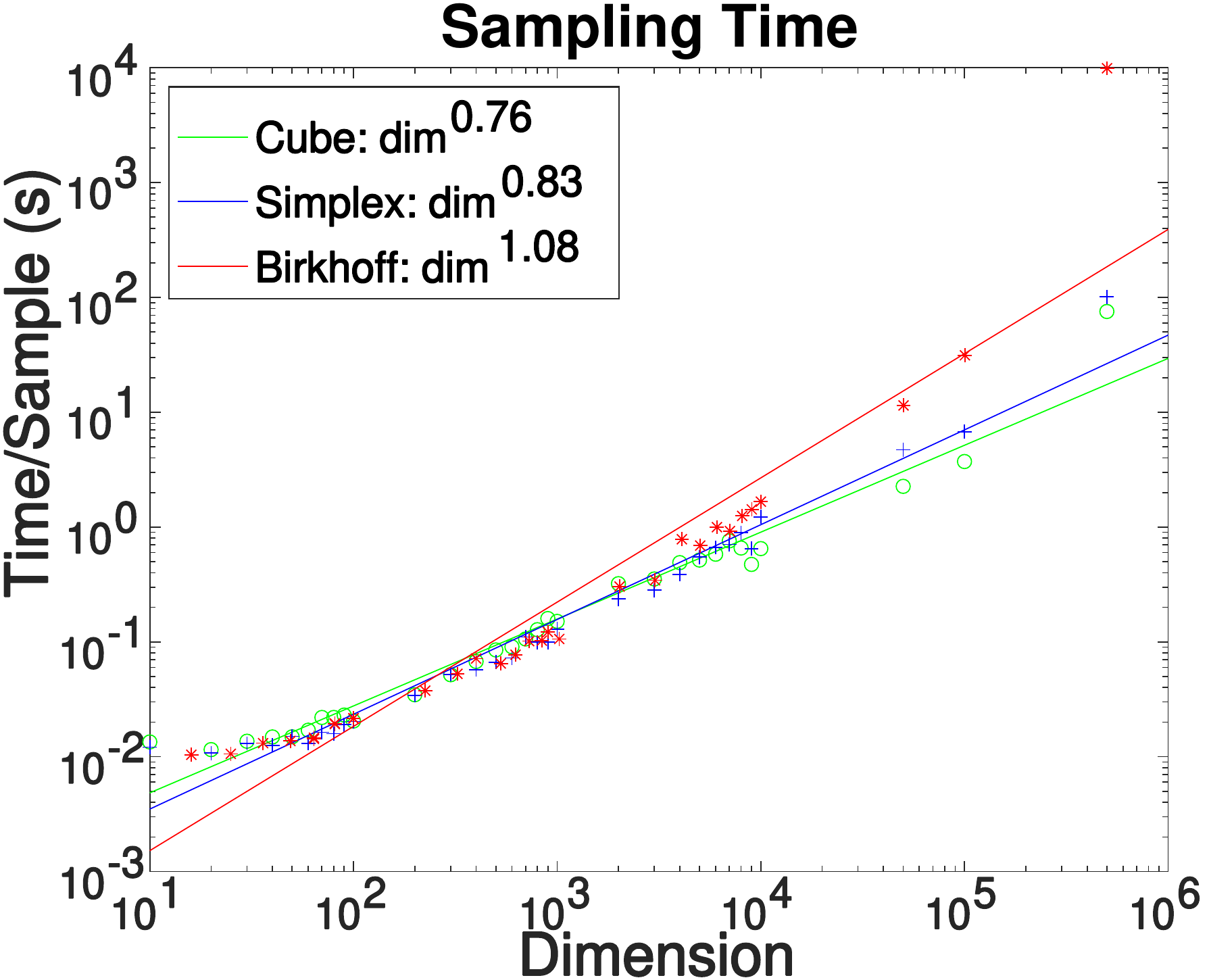}
\par\end{centering}
\vspace{-1mm}
\caption{Mixing rate and sampling time on structured polytopes including hybercubes,
simplices, and Birkhoff polytopes. CRHMC is scalable up to 0.5 million
dimension on hypercubes and simplices and up to 0.1 million dimension
on Birkhoff polytopes. We note that on the 0.5 million dimensional
Birkhoff polytope the ESS is only $16$, which is not reliable compared
to the ESS on the other instances.}
\label{fig:bench}
\vspace{-1mm}
\end{figure}

To the best of our knowledge, this is the first demonstration that
it is possible to sample such a large model. As seen in Figure~\ref{fig:bench},
CRHMC can scale smoothly up to half-million dimension on hypercubes
and simplices and up to dimension $10^{5}$ for Birkhoff polytopes
(we could not obtain a reliable estimate of mixing rate and sampling
time on the half-million dimensional Birkhoff polytope, as the ESS
is only $16$ after 2 days). However, we believe that one can find
room for further improvement of CRHMC by tuning parameters or leveraging
engineering techniques. We also expect that CRHMC enables us to estimate
the volume of \textbf{$B_{n}$ }for $n\geq20$, going well beyond the previously best possible dimension.

\subsection{Uniformity Test\label{subsec:unif-test}}

We used the following uniformity test to check whether samples from CRHMC form the uniform distribution over a polytope $P$: check that the fraction of the samples in the scaled set $x\cdot P$ is proportional to $x^{\textrm{dim}}$.
As seen in Figure~\ref{fig:unif_test}, the empirical CDFs of the radial distribution to the power of (1/dim) are close to the CDFs of the uniform distribution over those polytopes. 

\begin{figure}[h]
\begin{centering}
\vspace{-1mm}
\includegraphics[scale=0.3]{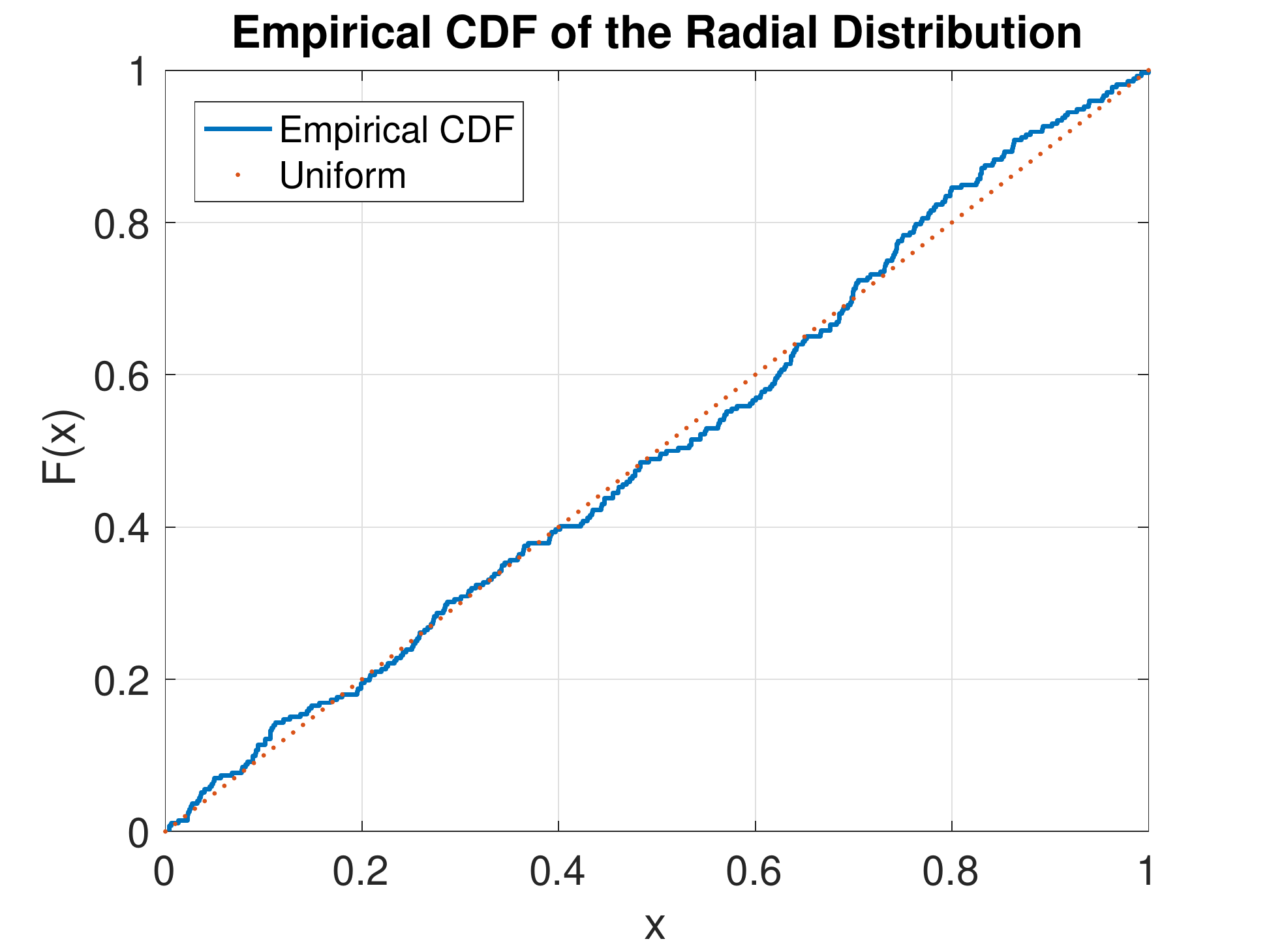}$\ $\includegraphics[scale=0.3]{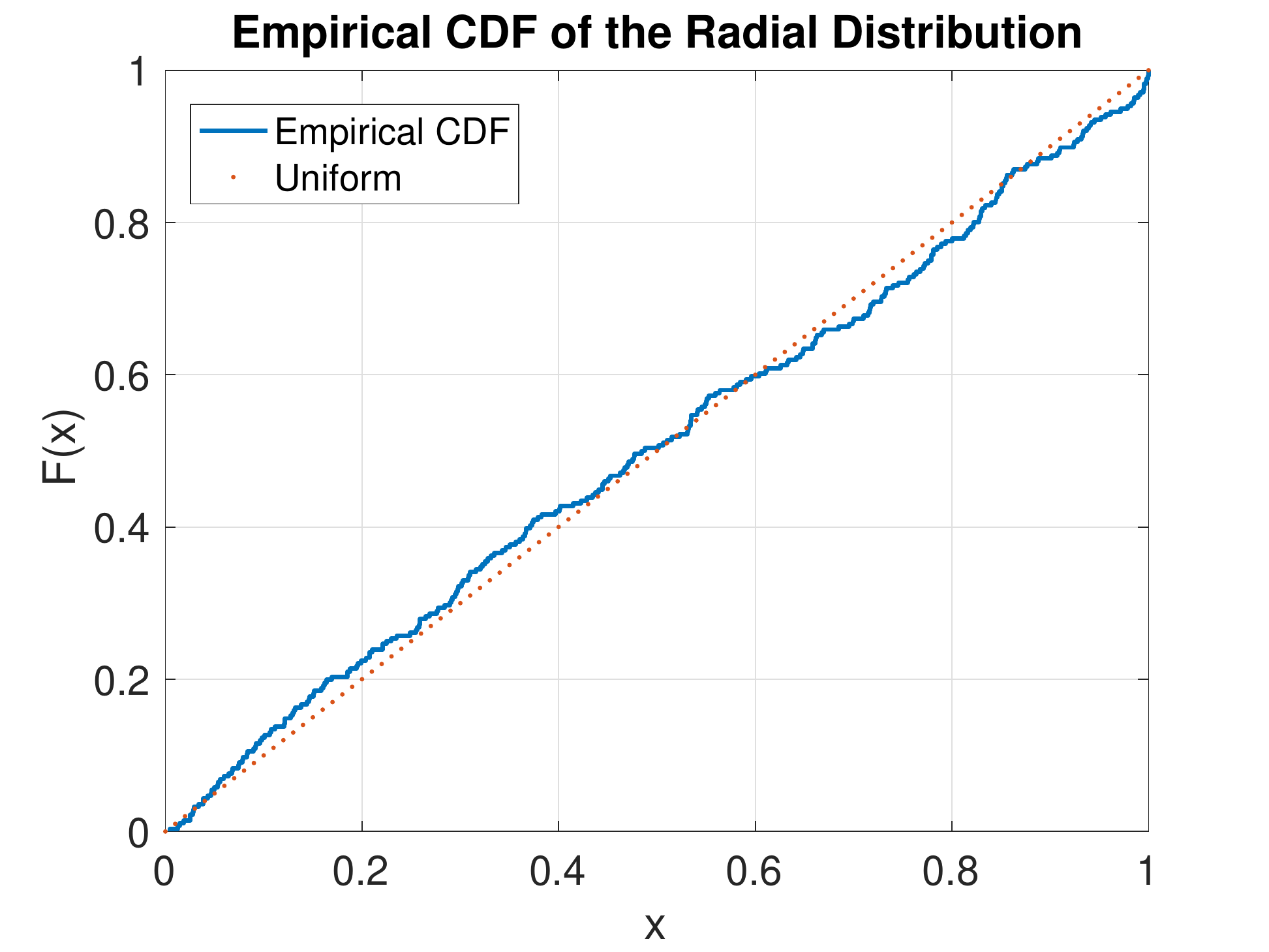}
\par\end{centering}
\vspace{-1mm}
\caption{We plot the empirical cumulative distribution function of the radial distribution to the power of $(1/\textrm{dim})$ with 1000 ESS obtained by running CRHMC on \emph{ATCC-49176} ($952 \times 1069$, left) and \emph{Aci-PHEA} ($1319\times 1561$, right), and in the plot $x$-axis is the scaling factor. 
We can observe the CDFs are very close to the CDFs of the uniform distribution over the polytopes defined by two instances.}
\label{fig:unif_test}
\vspace{-5mm}
\end{figure}

\paragraph*{Acknowledgement.}
The authors are grateful to Ben Cousins for helpful discussions, and to Ronan Fleming, Ines Thiele and their research groups for advice on metabolic models. This work was supported in part by NSF awards DMS-1839116, DMS-1839323, CCF-1909756, CCF-2007443 and CCF-2134105.

\bibliographystyle{plain}
\bibliography{neurips_2022}

\appendix

 \newpage
\section{Additional Experiment Details\label{subsec:expDetails}}

\begin{figure}[t]  \label{app:outline}
\begin{tikzpicture}[>=stealth,every node/.style={shape=rectangle,draw,rounded corners, minimum width=4.5cm,},]
    \node[very thick, fill=orange!30, minimum width=3.1cm] (t1) {
    \begin{tabular}{c}
    	Experiments details\\
    	(App~\ref{subsec:expDetails})
    \end{tabular}};
    \node[very thick, fill=orange!30, minimum width=3.4cm] (l2) [below=of t1]{Theory};
    \node[very thick, fill=orange!30, minimum width=3.1cm] (l1) [below=of l2]{
    \begin{tabular}{c}
    	Notation, Definition\\
    	(App~\ref{sec:Missing-Definitions-and})
    \end{tabular}};
    \node[very thick, fill=blue!30, minimum width=3.3cm] (l4) [right=of t1]{
    \begin{tabular}{c}
      Continuous \\CRHMC \\ (App~\ref{subsec:contiCRHMC})
    \end{tabular}};
    \node[very thick, fill=blue!30, minimum width=3.3cm] (l5) [right=of l1]{
    \begin{tabular}{c}
      Discretized \\CRHMC \\ (App~\ref{subsec:Discretization})
    \end{tabular}};
    \node[very thick] (l7) [right=of l4]{App~\ref{subsec:postponed_sec_efficient}: Computational tricks};
    \node[very thick] (l6) [above=0.5cm of l7]{App~\ref{subsec:derivationH}: Derivation} ;
    \node[very thick] (l8) [below=0.5cm of l7]{App~\ref{subsec:Stationarity-of-CRHMC}: Correctness};

    \node[very thick] (l10) [right=of l5]{App~\ref{subsec:Discretized-CRHMC-based}: Implicit midpoint method};
    \node[very thick] (l11) [below=0.5cm of l10]{App~\ref{subsec:Theoretical-Guarantees}: Correctness \& Efficiency};
    \node[very thick] (l9) [above=0.5cm of l10]{
    \begin{tabular}{c}
    	App~\ref{subsec:Choosing}: Condition \# independence\\ 
    	(Mixing rate)
    \end{tabular}};

    \draw[->,  line width=.6mm] (l2) to[out=0,in=180] (l4);
    \draw[->,  line width=.6mm] (l2) to[out=0,in=180] (l5);

    \draw[->,  line width=.6mm] (l4) to[out=0,in=180] (l6);
    \draw[->,  line width=.6mm] (l4) to[out=0,in=180] (l7);
    \draw[->,  line width=.6mm] (l4) to[out=0,in=180] (l8);
    \draw[->,  line width=.6mm] (l4) to[out=0,in=180] (l9);

    \draw[->,  line width=.6mm] (l5) to[out=0,in=180] (l9);
    \draw[->,  line width=.6mm] (l5) to[out=0,in=180] (l10);
    \draw[->,  line width=.6mm] (l5) to[out=0,in=180] (l11);
\end{tikzpicture}
\end{figure}

\paragraph*{Dataset.}

We summarize in Table~\ref{tab:data} the dataset used in experiments.
If a model is unbounded, we make it bounded by setting $l=\max(l,-10^{7})$ and $u=\min(u,10^{7})$.
As existing packages require full-dimensional representations of polytopes
(i.e., $\{x:A'x\leq b'\}$), we transformed all constraint-based models
to prepare instances for them as follows: (1) first preprocess each
model by removing redundant constraints and appropriately scaling
it, (2) find its corresponding full-dimensional description, and (3)
round it via the maximum volume ellipsoid (MVE) algorithm making the
polytope more amenable to sampling. We note that a full-dimensional
polytope can be transformed into a constraint-based polytope and vice
versa, so CRHMC can be run on either representation.

\begin{table}[h]
	\begin{center}
	\resizebox{\textwidth}{!}{
	\begin{raggedright}
	{\footnotesize{}}
	\begin{tabular}{>{\centering}p{1.5cm}||>{\centering}p{1.2cm}|>{\centering}p{1.5cm}|>{\centering}p{1.2cm}|>{\centering}p{0.8cm}}
		{\footnotesize{}Bio Model} & {\footnotesize{}Full-dim} & {\footnotesize{}Consts ($m$)} & {\footnotesize{}Vars ($n$)} & {\footnotesize{}nnz}\tabularnewline
		\hline 
		\hline 
		{\footnotesize{}ecoli} & {\footnotesize{}24} & {\footnotesize{}72} & {\footnotesize{}95} & {\footnotesize{}291}\tabularnewline
		\hline 
		{\footnotesize{}cardiac\_mit} & {\footnotesize{}12} & {\footnotesize{}230} & {\footnotesize{}220} & {\footnotesize{}228}\tabularnewline
		\hline 
		{\footnotesize{}Aci\_D21} & {\footnotesize{}103} & {\footnotesize{}856} & {\footnotesize{}851} & {\footnotesize{}1758}\tabularnewline
		\hline 
		{\footnotesize{}Aci\_MR95} & {\footnotesize{}123} & {\footnotesize{}917} & {\footnotesize{}994} & {\footnotesize{}2859}\tabularnewline
		\hline 
		{\footnotesize{}Abi\_49176} & {\footnotesize{}157} & {\footnotesize{}952} & {\footnotesize{}1069} & {\footnotesize{}2951}\tabularnewline
		\hline 
		{\footnotesize{}Aci\_20731} & {\footnotesize{}164} & {\footnotesize{}1009} & {\footnotesize{}1090} & {\footnotesize{}2946}\tabularnewline
		\hline 
		{\footnotesize{}Aci\_PHEA} & {\footnotesize{}328} & {\footnotesize{}1319} & {\footnotesize{}1561} & {\footnotesize{}4640}\tabularnewline
		\hline 
		{\footnotesize{}iAF1260} & {\footnotesize{}572} & {\footnotesize{}1668} & {\footnotesize{}2382} & {\footnotesize{}6368}\tabularnewline
		\hline 
		{\footnotesize{}iJO1366} & {\footnotesize{}590} & {\footnotesize{}1805} & {\footnotesize{}2583} & {\footnotesize{}7284}\tabularnewline
		\hline 
		{\footnotesize{}Recon1} & {\footnotesize{}932} & {\footnotesize{}2766} & {\footnotesize{}3742} & {\footnotesize{}8717}\tabularnewline
		\hline 
		{\footnotesize{}Recon2} & {\footnotesize{}2430} & {\footnotesize{}5063} & {\footnotesize{}7440} & {\footnotesize{}19791}\tabularnewline
		\hline 
		{\footnotesize{}Recon3} & {\footnotesize{}5335} & {\footnotesize{}8399} & {\footnotesize{}13543} & {\footnotesize{}48187}\tabularnewline
		\hline 
	\end{tabular}
	$ $
	\centering{}{\footnotesize{}}%
	\begin{tabular}{>{\centering}p{1.4cm}||>{\centering}p{1.2cm}|>{\centering}p{1.5cm}|>{\centering}p{1.2cm}|>{\centering}p{0.8cm}}
		{\footnotesize{}LP Model} & {\footnotesize{}Full-dim} & {\footnotesize{}Consts ($m$)} & {\footnotesize{}Vars ($n$)} & {\footnotesize{}nnz}\tabularnewline
		\hline 
		\hline 
		{\footnotesize{}israel} & {\footnotesize{}142} & {\footnotesize{}174} & {\footnotesize{}316} & {\footnotesize{}2519}\tabularnewline
		\hline 
		{\footnotesize{}gfrd\_pnc} & {\footnotesize{}544} & {\footnotesize{}616} & {\footnotesize{}1160} & {\footnotesize{}2393}\tabularnewline
		\hline 
		{\footnotesize{}25fv47} & {\footnotesize{}1056} & {\footnotesize{}821} & {\footnotesize{}1876} & {\footnotesize{}10566}\tabularnewline
		\hline 
		{\footnotesize{}pilot\_ja} & {\footnotesize{}1002} & {\footnotesize{}940} & {\footnotesize{}2267} & {\footnotesize{}11886}\tabularnewline
		\hline 
		{\footnotesize{}sctap2} & {\footnotesize{}1410} & {\footnotesize{}1090} & {\footnotesize{}2500} & {\footnotesize{}7334}\tabularnewline
		\hline 
		{\footnotesize{}ship08l} & {\footnotesize{}2700} & {\footnotesize{}778} & {\footnotesize{}4363} & {\footnotesize{}9434}\tabularnewline
		\hline 
		{\footnotesize{}cre\_a} & {\footnotesize{}3703} & {\footnotesize{}3516} & {\footnotesize{}7248} & {\footnotesize{}17368}\tabularnewline
		\hline 
		{\footnotesize{}woodw} & {\footnotesize{}4656} & {\footnotesize{}1098} & {\footnotesize{}8418} & {\footnotesize{}23158}\tabularnewline
		\hline 
		{\footnotesize{}80bau3b} & {\footnotesize{}9233} & {\footnotesize{}2262} & {\footnotesize{}12061} & {\footnotesize{}22341}\tabularnewline
		\hline 
		{\footnotesize{}ken\_18} & {\footnotesize{}49896} & {\footnotesize{}105127} & {\footnotesize{}154699} & {\footnotesize{}295946}\tabularnewline
		\hline 
	\end{tabular}{\footnotesize{} }{\footnotesize\par}
	\par\end{raggedright}
	\raggedright{}}
	\end{center}
	\caption{Constraint-based models. Each constraint-based model has a form of
		$\{x\in\R^{n}:Ax=b,\,l\protect\leq x\protect\leq u\}$ for $A\in\R^{m\times n},b\in\R^{m}$
		and $l,u\in\R^{n}$, where the rows and columns correspond to constraints
		and variables respectively. The full-dimension of each model is obtained
		by transforming its degenerate subspace to a full dimensional representation
		(i.e., $A'x\protect\leq b'$), and we count the number of nonzero
		(nnz) entries of a preprocessed matrix $A$.}
	\label{tab:data}
\end{table}

\paragraph*{Preprocessing.}

We preprocessed each constrained-based model prior to sampling. This
preprocessing consists mainly of simplifying polytopes, scaling properly
for numerical stability, and finding a feasible starting point. To
simplify a given polytope, we check if $l_{i}=u_{i}$ for each $i\in[n]$
and then incorporate such variables $x_{i}$ into $Ax=b$. Any dense
column is split into several columns with less non-zero entries by
introducing additional variables. Then we remove dependent rows of
$A$ by the Cholesky decomposition. Then we find the Dikin ellipsoid of
the polytope. If the width along some axis is smaller than a preset
tolerance, then we fix variables in such directions, reducing columns
of $A$. Lastly, we run the primal-dual interior-point method with the log-barrier
to find an analytic center of the polytope, which will be used as
a starting point in sampling. When finding the analytic center of
the simplified polytope, if a coordinate of the analytic center is
too close to a boundary (to be precise, smaller than a preset tolerance
boundary $10^{-8}$), then we assume that the inequality constraint
(either $x_{i}\leq u_{i}$ or $l_{i}\leq x_{i}$) is tight, and we
collapse such a variable by moving it into the constraints $Ax=b$.
We go back to the step for removing dependent rows and repeat until
no more changes are made to $A$. Along with simplification, we keep
rescaling $A,b,l,u$ for numerical stability.

\paragraph*{Coordinate Hit-and-Run (CDHR).}
We briefly explain how CHRR works. First, rounding via the MVE algorithm
finds the maximum volume ellipsoid inscribed in the polytope and applies,
to the polytope, an affine transformation that makes this ellipsoid
a unit ball. This procedure puts a possibly highly-skewed polytope
into John's position, which guarantees that the polytope contains
a unit ball and is contained in a ball of radius $n.$ This position
still has a beneficial effect on sampling in practice in the sense
that the random walk can converge in fewer steps. After the transformation,
the random walk based on Coordinate Hit-and-Run (CHAR) chooses a random
coordinate and moves to a random point on the line through the current
point along the chosen coordinate.

When running CHRR and CDHR, we recorded a sample every $n^{2}$ steps.
The mixing rate (i.e., the number of steps required to get a sample
from a target distribution) of Hit-and-Run (HAR), a general version
of CHAR choosing a random direction (unit vector) instead of a random
coordinate, is $O^{*}(n^{2}R^{2})$ for a polytope $P$ with $B_{n}\subseteq P\subseteq R\cdot B_{n}$,
where $B_{n}$ is the unit ball in $\R^{n}$ \cite{lovasz2006hit}.
It was proved only recently that CHAR mixes in $O^{*}(n^{9}R^{2})$
steps on such a polytope \cite{laddha2020convergence,narayanan_srivastava_2021}.
Even though this bound is not as tight as the mixing-rate bound for
HAR, it was reported in \cite{haraldsdottir2017chrr} that CHRR mixes
in the same number of steps as HAR empirically. Moreover, the per-step
complexity of CHAR can be $n$ times faster than that of HAR, so CHAR
brings a significant speed-up in practice.

\paragraph*{Comparison Setup.}

We set the parameters of CRHMC to values in \texttt{default\_options.m} in the experiments.
For the competitors, we proceeded with the following additional steps for fair comparison.
First, as the VolEsti package does not support the MVE rounding, we
rounded each polytope by the MVE algorithm in the CHRR package and
then transformed the rounded polytope so that the R interface can
read the data file. Next, we limited all algorithms to a single core,
since the R interface uses a single core as a default whereas MATLAB
uses as many available cores as possible.
\subsection{Polytope Definition}
\label{subsec:polydef}

\paragraph{Hypercube.}
The $n$-dimensional hypercube is defined by $\{x\in\Rn:-\half\leq x_{i}\leq\half\textrm{ for all }i\in[n]\}$.
Note that it has no equality constraint and its full-dimension is
$n$.
\paragraph{Simplex.}
The $n$-dimensional simplex is defined by $\{x\in\Rn:0\leq x_{i}\textrm{ for all }i\in[n],\,\sum_{i=1}^{n}x_{i}=1\}$.
Note that its full-dimension is $n-1$.
\paragraph{Birkhoff Polytope.}
The $n^{th}$ Birkhoff polytope $B_{n}$ is the set of all doubly
stochastic $n\times n$ matrices (or the convex hull of all permutation
matrices), which is defined as
\[
B_{n}=\{(X_{ij})_{i,j\in[n]}:\sum_{j}X_{ij}=1\textrm{ for all }i\in[n],\,\sum_{i}X_{ij}=1\textrm{ for all }j\in[n],\textrm{ and }X_{ij}\geq0\}.
\]
Namely, $B_{n}$ is defined in a constrained $\R^{n^{2}}$-dimensional
space, and its full-dimension is $n^{2}-(2n-1)=(n-1)^{2}$. We ran
CRHMC on $B_{\sqrt{n}}$ to examine its efficiency on (roughly) $n$-dimensional
Birkhoff polytope.
 
 \section{Deferred details of CRHMC\label{subsec:contiCRHMC}}

In this section, we present all technical details behind an \emph{idealized} version of our algorithm, CRHMC, together with correctness of CRHMC.
Subsequently in Appendix~\ref{subsec:Discretization}, we provide details on a \emph{discretized} version of CRHMC.

\subsection{Deferred details of Section~\ref{subsec:Basics-of-CRHMC}\label{subsec:derivationH}}

Recall that in Section~\ref{subsec:Basics-of-CRHMC} we mention that the following constrained Hamiltonian satisfies the Hamiltonian ODE $\left(\frac{dx}{dt}=\frac{\partial H(x,v)}{\partial v},\ \frac{dv}{dt}=-\frac{\partial H(x,v)}{\partial x}\right)$:
\[
H(x,v)=\overline{H}(x,v)+\lambda(x,v)^{\top}c(x)\quad\text{with}\quad\overline{H}(x,v)=f(x)+\frac{1}{2}v^{\top}M(x)^{\dagger}v+\log\pdet(M(x))
\]
where
\[
\lambda(x, v)
=
(Dc(x)Dc(x)^{\top})^{-1}\left(D^{2}c(x)[v,\frac{dx}{dt}]-Dc(x)\frac{\partial\overline{H}(x, v)}{\partial x}\right).
\]

\begin{restatable}{lem}{CHMCinva}
	\label{lem:CHMC_inva}Consider the constrained Hamiltonian defined
	by (\ref{eq:const_H}) with $\range(M(x))=\nulls(Dc(x))$ and 
	\[
	\lambda(x_{t},v_{t})=(Dc(x_{t})Dc(x_{t})^{\top})^{-1}\left(D^{2}c(x_{t})[v_{t},\frac{dx_{t}}{dt}]-Dc(x_{t})\frac{\partial\overline{H}(x_{t},v_{t})}{\partial x}\right).
	\]
	When the initial point satisfies $c(x_{0})=0$, the ODE solution of
	(\ref{eq:ham_ODE}) satisfies $c(x_{t})=0$ and $Dc(x_{t})v_{t}=Dc(x_{0})v_{0}$
	for all $t$.
\end{restatable}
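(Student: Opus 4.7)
My plan is to first write the Hamiltonian ODE in a usable explicit form and then verify the two invariants in turn. Treating $\lambda(x,v)$ as a Lagrange multiplier when differentiating $H$ (equivalently, any derivative of $\lambda$ is multiplied by $c(x_t)$, which vanishes on the constraint set), the equations $\dot x = \partial H/\partial v$ and $\dot v = -\partial H/\partial x$ reduce to
\begin{align*}
\dot x_t = M(x_t)^{\dagger}v_t,\qquad \dot v_t = -\frac{\partial\overline{H}}{\partial x}(x_t,v_t) - Dc(x_t)^{\top}\lambda(x_t,v_t),
\end{align*}
since $c$ has no $v$-dependence and $\partial\overline{H}/\partial v = M(x)^{\dagger}v$ is read off directly from the form of $\overline{H}$ given in \eqref{eq:const_H}.

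To show $c(x_t)\equiv 0$, I would exploit the hypothesis $\range(M(x))=\nulls(Dc(x))$. Since $M(x)$ is symmetric positive semidefinite, its pseudo-inverse satisfies $\range(M(x)^{\dagger})=\range(M(x))=\nulls(Dc(x))$, so $\dot x_t = M(x_t)^{\dagger}v_t\in\nulls(Dc(x_t))$ independent of what $v_t$ is. Hence $\frac{d}{dt}c(x_t)=Dc(x_t)\dot x_t=0$ for all $t$, and the initial condition $c(x_0)=0$ forces $c(x_t)=0$ throughout.

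The heart of the argument is conservation of $Dc(x_t)v_t$, which is exactly what the specific formula for $\lambda$ is engineered to produce. I would differentiate to obtain
\begin{align*}
\frac{d}{dt}\bigl[Dc(x_t)v_t\bigr] = D^{2}c(x_t)[\dot x_t,v_t] + Dc(x_t)\dot v_t,
\end{align*}
substitute the equation of motion for $\dot v_t$, and then plug in the stated $\lambda = (Dc\,Dc^{\top})^{-1}\bigl(D^{2}c[v,\dot x] - Dc\,\partial\overline{H}/\partial x\bigr)$. The contributions involving $Dc\cdot(\partial\overline{H}/\partial x)$ cancel, leaving $D^{2}c(x_t)[\dot x_t,v_t] - D^{2}c(x_t)[v_t,\dot x_t]$, which vanishes by symmetry of $D^{2}c$ as a bilinear form (Schwarz). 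Therefore $Dc(x_t)v_t$ is constant in $t$ and equals $Dc(x_0)v_0$.

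The main delicate point is justifying the Lagrange-multiplier convention: one must argue that derivatives of $\lambda(x,v)$ introduce no spurious contributions to $\dot x$ or $\dot v$. The cleanest justification is that every such contribution carries a factor of $c(x_t)$, which is zero on the trajectory; this is not circular because establishing $c(x_t)=0$ only uses the unambiguous $\partial\overline{H}/\partial v$ term in $\dot x$, while the formula for $\lambda$ enters only in the second invariant. An equivalent viewpoint, which I may mention as a sanity check, is to read the formula for $\lambda$ as the unique choice that forces $\frac{d}{dt}[Dc(x_t)v_t]=0$, which under the full-rank assumption on $Dc(x)$ makes the system a well-posed differential--algebraic equation.
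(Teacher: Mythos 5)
Your proof for the second invariant ($Dc(x_t)v_t$ conserved) matches the paper's argument; the only cosmetic difference is that you write the chain-rule term as $D^{2}c[\dot x,v]$ while the stated formula for $\lambda$ uses $D^{2}c[v,\dot x]$, so you appeal to symmetry of $D^{2}c$ — the paper's convention makes the cancellation exact without that step. This part is fine.

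Your proof for the first invariant ($c(x_t)=0$) has a gap. You reduce $\dot x_t$ to $M(x_t)^{\dagger}v_t$ by discarding the term $(D_v\lambda(x_t,v_t))^{\top}c(x_t)$, on the grounds that it carries a factor of $c(x_t)$ "which is zero on the trajectory," and then use $\dot x_t\in\nulls(Dc(x_t))$ to conclude $\frac{d}{dt}c(x_t)=0$ and hence $c(x_t)\equiv 0$. That is circular: the vanishing of the discarded term is exactly what you are trying to prove. Your claim that "establishing $c(x_t)=0$ only uses the unambiguous $\partial\overline{H}/\partial v$ term" does not resolve this — before you know $c(x_t)=0$, the chain rule actually gives
\begin{equation*}
\frac{d}{dt}c(x_t) \;=\; Dc(x_t)M(x_t)^{\dagger}v_t \;+\; Dc(x_t)\bigl(D_v\lambda(x_t,v_t)\bigr)^{\top}c(x_t) \;=\; Dc(x_t)\bigl(D_v\lambda(x_t,v_t)\bigr)^{\top}c(x_t),
\end{equation*}
which is not identically zero; it is a time-dependent \emph{linear} ODE in $c(x_t)$. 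The paper's proof keeps this term and invokes uniqueness of solutions to the linear ODE (with zero initial condition $c(x_0)=0$) to conclude $c(x_t)\equiv 0$. You should make that Gronwall/uniqueness step explicit rather than asserting $\dot c=0$ outright. Once that is fixed, your subsequent use of $c(x_t)=0$ to drop the $(D_x\lambda)^{\top}c$ term from $\dot v_t$ in the second part is legitimate, and the rest of your argument goes through.
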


\begin{proof}
	First we compute
	\begin{align*}
		\frac{d}{dt}c(x_{t}) & =Dc(x_{t})\cdot\frac{dx_{t}}{dt}=Dc(x_{t})\cdot\frac{\partial H(x_{t},v_{t})}{\partial v_{t}}\\
		& =Dc(x_{t})M(x_{t})^{\dagger}v+Dc(x_{t})D_{v}\lambda(x_{t},v_{t})^{\top}c(x_{t})\\
		& =Dc(x_{t})D_{v}\lambda(x_{t},v_{t})^{\top}c(x_{t})
	\end{align*}
	where we used $\range(M(x)^{\dagger})=\range(M(x))=\nulls(Dc(x))$.
	Since $c(x_{0})=0$, by the uniqueness of the ODE solution, we have
	that $c(x_{t})=0$ for all $t$. Next we compute
	
	\begin{align*}
		\frac{dv_{t}}{dt}= & -\frac{\partial H(x_{t},v_{t})}{\partial x}\\
		= & -\frac{\partial\overline{H}(x_{t},v_{t})}{\partial x}-Dc(x_{t})^{\top}\lambda(x_{t},v_{t})-D_{x}\lambda(x_{t},v_{t})^{\top}c(x_{t})\\
		= & -\frac{\partial\overline{H}(x_{t},v_{t})}{\partial x}-Dc(x_{t})^{\top}\lambda(x_{t},v_{t})
	\end{align*}
	where we used $c(x_{t})=0$. Hence, we have
	\begin{align*}
		\frac{d}{dt}Dc(x_{t})v_{t} & =D^{2}c(x_{t})[v_{t},\frac{dx_{t}}{dt}]+Dc(x_{t})\frac{dv_{t}}{dt}\\
		& =D^{2}c(x_{t})[v_{t},\frac{dx_{t}}{dt}]-Dc(x_{t})\frac{\partial\overline{H}(x_{t},v_{t})}{\partial x}-Dc(x_{t})Dc(x_{t})^{\top}\lambda(x_{t},v_{t}).
	\end{align*}
	By setting $\lambda(x_{t},v_{t})=(Dc(x_{t})Dc(x_{t})^{\top})^{-1}(D^{2}c(x_{t})[v_{t},\frac{dx_{t}}{dt}]-Dc(x_{t})\frac{\partial\overline{H}(x_{t},v_{t})}{\partial x})$,
	we have $\frac{d}{dt}Dc(x_{t})v_{t}=0$ and $Dc(x_{t})v_{t}=Dc(x_{0})v_{0}$
	for all $t$ (i.e., $v_{t}\in\nulls(Dc(x_{t}))$ during Step 2).
\end{proof}

\subsection{Deferred details of Section~\ref{subsec:Efficient} \label{subsec:postponed_sec_efficient}}

In Section~\ref{subsec:Efficient}, we mention that a naive algorithm computing $\partial H/\partial x$
and $\partial H/\partial v$ is bound to face the following challenges, especially in high-dimensional regime, and briefly explain how we address each of them.
In this section, we give full details on our computational tricks.

\begin{tcolorbox}
\begin{enumerate}\setlength{\leftmargin}{0pt}
	\item Computation of the pseudo-inverse and its derivatives takes $O(n^{3})$, except for very special matrices 
	$\Longrightarrow$ Find equivalent formulas (Appendix~\ref{subsec:Avoiding-pseudo-inverse}).
	\item The Lagrangian term in the constrained Hamiltonian entails extra computation such as $D^2c(x)$ 
	$\Longrightarrow$ Simplify the constrained Hamiltonian (Appendix~\ref{subsec:Simplification-for-subspace}).
	\item A naive approach to computing leverage scores in $\partial H/\partial x$ results in a very dense matrix
	$\Longrightarrow$ Track sparsity pattern (Appendix~\ref{subsec:Efficient-Computation-of}). 
\end{enumerate}	
\end{tcolorbox}

\subsubsection{Avoiding pseudo-inverse and pseudo-determinant\label{subsec:Avoiding-pseudo-inverse}}

We start with a formula for $M(x)^{\dagger}$.
\begin{lem}
	\label{lem:formula_inverse}Let $M(x)=Q(x)\cdot g(x)\cdot Q(x)$ where
	$Q(x)=I-Dc(x)^{\top}(Dc(x)\cdot Dc(x)^{\top})^{-1}Dc(x)$ is the orthogonal
	projection to the null space of $Dc(x)$. Then, $Dc(x)\cdot M(x)^{\dagger}=0$
	and $M(x)^{\dagger}=g(x)^{-\frac{1}{2}}\cdot (I-P(x))\cdot g(x)^{-\frac{1}{2}}$
	with 
	\[
	P(x)=g(x)^{-\frac{1}{2}}\cdot Dc(x)^{\top}(Dc(x)\cdot g(x)^{-1}\cdot Dc(x)^{\top})^{-1}Dc(x)\cdot g(x)^{-\frac{1}{2}}.
	\]
\end{lem}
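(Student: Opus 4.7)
The plan is to verify that $\tilde M \defeq g^{-1/2}(I-P)g^{-1/2}$ satisfies the four Moore--Penrose conditions for $M = QgQ$. The first observation is that, writing $B = Dc\cdot g^{-1/2}$, we have $P = B^\top(BB^\top)^{-1}B$, which is the Euclidean orthogonal projector onto $\range(B^\top)$. Hence $I-P$ is itself an orthogonal projection, namely onto $\nulls(Dc\cdot g^{-1/2})$, and in particular $\tilde M$ is symmetric.

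First I would establish $Dc\cdot\tilde M = 0$ directly. Multiplying out $Dc\cdot g^{-1/2}\cdot P\cdot g^{-1/2}$, the inverse factors telescope: $Dc\cdot g^{-1}\cdot Dc^\top(Dc\,g^{-1}Dc^\top)^{-1}Dc\cdot g^{-1} = Dc\cdot g^{-1}$, and this exactly cancels the $Dc\cdot g^{-1/2}\cdot I\cdot g^{-1/2} = Dc\cdot g^{-1}$ term, giving $Dc\cdot\tilde M = 0$. Symmetry of $\tilde M$ then also yields $\tilde M\cdot Dc^\top = 0$, and since $Q = I - Dc^\top(DcDc^\top)^{-1}Dc$, these imply the convenient identities $Q\tilde M = \tilde M$ and $\tilde M Q = \tilde M$.

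Next I would compute $M\tilde M$. Using $Q\tilde M = \tilde M$, we get $M\tilde M = QgQ\tilde M = Qg\tilde M$, and a short calculation (multiplying $g\cdot g^{-1/2}(I-P)g^{-1/2}$) reduces $g\tilde M$ to $I - Dc^\top(Dc\,g^{-1}Dc^\top)^{-1}Dc\cdot g^{-1}$. Since $Q\cdot Dc^\top = 0$, this collapses to $M\tilde M = Q$, and by symmetry of both $M$ and $\tilde M$ the transpose gives $\tilde M M = Q$ as well.

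Finally I would observe that $Q$ is precisely the Euclidean orthogonal projector onto $\range(M)$. The inclusion $\range(M)\subseteq\range(Q) = \nulls(Dc)$ is immediate, while the reverse inclusion holds because $g$ is positive definite, so the form $(u,v)\mapsto u^\top g v$ on $\nulls(Dc)$ is nondegenerate, making $QgQ$ act invertibly on $\nulls(Dc)$. With $Q$ identified as the orthogonal projector onto $\range(M)$, the four Moore--Penrose conditions follow: $M\tilde M M = QM = M$, $\tilde M M\tilde M = Q\tilde M = \tilde M$, and both $M\tilde M$ and $\tilde M M$ equal the symmetric matrix $Q$. The whole argument really rests on the two cancellations $Dc\cdot g^{-1/2}(I-P) = 0$ and $Q\cdot Dc^\top = 0$, so I do not anticipate a genuine obstacle beyond careful bookkeeping.
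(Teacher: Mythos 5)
Your proof is correct and follows essentially the same strategy as the paper's: both verify the four Moore--Penrose conditions for $\tilde M = g^{-1/2}(I-P)g^{-1/2}$ by establishing $M\tilde M = \tilde M M = Q$ and then leveraging idempotence of $Q$. The intermediate identities are packaged slightly differently (you compute $Dc\cdot\tilde M=0$ and $g\tilde M$ directly, the paper records $QN=NQ=N$ and $QgN=NgQ=Q$), but the key cancellations and the structure of the argument coincide; the short digression identifying $Q$ as the orthogonal projector onto $\range(M)$ is correct but not strictly needed, since $QM=M$ already follows from $Q^2=Q$ and $M=QgQ$.
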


\begin{proof}
	Recall that $\range(M(x)^{\dagger})=\range(M(x))$. Hence, for any
	$u\in\Rn$, we have that $M(x)^{\dagger}u\in\range(M(x))$. Since
	$\range(M(x))\subseteq\range(Q(x))$ and $\range(Q(x))=\nulls(Dc(x))$
	due to the definition of the orthogonal projection $Q(x)$, it follows
	that $Dc(x)\cdot M(x)^{\dagger}u=0$ for all $u$. 
	
	For the formula of $M(x)^{\dagger}$, we simplify the notation by
	ignoring the parameter $x$. Let $N=g^{-\frac{1}{2}}Pg^{-\frac{1}{2}}$
	and $J=Dc(x)$. The goal is to prove that $M^{\dagger}=N$. First,
	we show some basic identities about $Q$ and $N$:
	\begin{align}
		QN= & Qg^{-\frac{1}{2}}(I-g^{-\frac{1}{2}}J^{\top}(Jg^{-1}J^{\top})^{-1}Jg^{-\frac{1}{2}})g^{-\frac{1}{2}}\nonumber \\
		= & (I-J^{\top}(JJ^{\top})^{-1}J)(g^{-1}-g^{-1}J^{\top}(Jg^{-1}J^{\top})^{-1}Jg^{-1})\nonumber \\
		= & g^{-1}-J^{\top}(JJ^{\top})^{-1}Jg^{-1}\nonumber \\
		& -(g^{-1}J^{\top}(Jg^{-1}J^{\top})^{-1}Jg^{-1}-J^{\top}(JJ^{\top})^{-1}Jg^{-1}J^{\top}(Jg^{-1}J^{\top})^{-1}Jg^{-1})\nonumber \\
		= & N. \label{eq:QNN}
	\end{align}
	Similarly, we have $NQ=N$, $QgN=Q$, and $NgQ=Q$. To prove that
	$M^{\dagger}=N$, we need to check that $MN$ and $NM$ are symmetric,
	$MNM=M$, and $NMN=N$.
	
	For symmetry of $MN$ and $NM$, we note that $MN=QgQN=QgN=Q$ and
	$NM=NQgQ=NgQ=Q$. For the formula of $MNM$ and $NMN$, we note that
	that $Q$ is a projection matrix and hence 
	\begin{align*}
		MNM & =QM=QQgQ=QgQ=M,\\
		NMN & =QN=N.
	\end{align*}
	Therefore, we have $M^{\dagger}=N$.
\end{proof}
Another bottleneck of the algorithm is to compute $\log\pdet M(x)$.
The next lemma shows a simpler formula that can take advantage of
sparse Cholesky decomposition.
\begin{lem}
	\label{lem:formula_pdet}We have that
	\[
	\log\pdet(M(x))=\log\det g(x)+\log\det\left(Dc(x)\cdot g(x)^{-1}\cdot Dc(x)^{\top}\right)-\log\det\left(Dc(x)\cdot Dc(x)^{\top}\right).
	\]
\end{lem}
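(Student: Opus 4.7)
}
The plan is to work in a well-chosen orthonormal basis adapted to the splitting $\mathbb{R}^{n}=\nulls(Dc(x))\oplus\range(Dc(x)^{\top})$. Write $J=Dc(x)$ and choose an $n\times(n-m)$ matrix $U$ whose columns form an orthonormal basis of $\nulls(J)$ together with an $n\times m$ matrix $V$ whose columns form an orthonormal basis of $\range(J^{\top})$. Then the projection appearing in Lemma~\ref{lem:formula_inverse} becomes $Q=UU^{\top}$, the $n\times n$ matrix $[U\;V]$ is orthogonal, and $JU=0$ while $JV$ is an invertible $m\times m$ matrix (its columns are the nonzero singular directions of $J$).

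In this basis the pseudo-determinant collapses to a plain determinant. Since $M=QgQ$ and $Q[U\;V]=[U\;0]$, conjugation by $[U\;V]$ yields
\[
[U\;V]^{\top} M\, [U\;V]=\begin{pmatrix} U^{\top}gU & 0\\ 0 & 0\end{pmatrix},
\]
so the nonzero eigenvalues of $M$ are exactly the eigenvalues of $U^{\top}gU$, giving $\pdet(M)=\det(U^{\top}gU)$. The next step is to express $\det(U^{\top}gU)$ in terms of $\det g$ and $V^{\top}g^{-1}V$: write $[U\;V]^{\top}g[U\;V]=\bigl(\begin{smallmatrix}P & R\\ R^{\top} & S\end{smallmatrix}\bigr)$ with $P=U^{\top}gU$, $R=U^{\top}gV$, $S=V^{\top}gV$. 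Orthogonality of $[U\;V]$ gives $\det g=\det P\cdot\det(S-R^{\top}P^{-1}R)$ by Schur's complement, and inverting the block matrix exposes the $(2,2)$-block of $g^{-1}$ in this basis, which is exactly $V^{\top}g^{-1}V=(S-R^{\top}P^{-1}R)^{-1}$. Combining these two identities yields
\[
\pdet(M)=\det(U^{\top}gU)=\det g\cdot\det(V^{\top}g^{-1}V).
\]

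It remains to identify $\det(V^{\top}g^{-1}V)$ with $\det(Jg^{-1}J^{\top})/\det(JJ^{\top})$. Let $B=JV$, an invertible $m\times m$ matrix. Since $JQ=0$, we have $J=J(I-Q)=JVV^{\top}=BV^{\top}$; hence $JJ^{\top}=BB^{\top}$ and
\[
Jg^{-1}J^{\top}=BV^{\top}g^{-1}VB^{\top},
\]
giving $\det(Jg^{-1}J^{\top})=\det(B)^{2}\det(V^{\top}g^{-1}V)=\det(JJ^{\top})\det(V^{\top}g^{-1}V)$. Substituting this into the previous display and taking logarithms produces exactly the claimed identity. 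The only part requiring care is bookkeeping with the orthonormal change of basis (ensuring $P$ is invertible and that $V$ really spans $\range(J^{\top})$ so that $JV$ is invertible), both of which follow from $g\succ0$ and the full-rank hypothesis on $Dc(x)$; the algebraic manipulations are then straightforward Schur-complement identities, so I do not anticipate any substantive obstacle beyond that setup.
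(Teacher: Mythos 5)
Your proof is correct, but it takes a genuinely different route from the paper's. The paper argues by derivative matching: it sets $f_1(g)=\log\pdet(QgQ)$ and $f_2(g)$ equal to the right-hand side, checks $f_1(I)=f_2(I)=0$, and shows $Df_1(g)[u]=\tr(Nu)=Df_2(g)[u]$ for every direction $u$ — this last step reuses the formula $M^\dagger = g^{-\frac12}(I-P)g^{-\frac12}$ from Lemma~\ref{lem:formula_inverse} and invokes connectedness of the positive-definite cone. You instead conjugate by an orthonormal basis $[U\;V]$ adapted to the splitting $\nulls(J)\oplus\range(J^\top)$, under which $M$ becomes block-diagonal with a single nonzero block $U^\top g U$, so $\pdet(M)=\det(U^\top g U)$; the Schur-complement identity $\det g = \det(U^\top g U)\cdot\det(V^\top g^{-1}V)^{-1}$ and the factorization $J = (JV)V^\top$ (with $JV$ invertible because $V$ spans $\range(J^\top)$) then give the claim by pure linear algebra. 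The paper's route is compact in context since the pseudoinverse formula is already on hand; your route is self-contained, avoids differentiation and the connectedness argument, and makes more transparent why $\pdet(M)$ splits into exactly $\det g\cdot\det(Jg^{-1}J^\top)/\det(JJ^\top)$ — namely that it is $\det g$ times the determinant of the $\range(J^\top)$-block of $g^{-1}$, reexpressed through $J$. Both proofs are sound.
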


\begin{proof}
	We simplify the notation by ignoring the parameter $x$ and letting
	$J=Dc(x)$. Let 
	\begin{align*}
		f_{1}(g) & =\log\pdet(Q\cdot g\cdot Q), \\
		f_{2}(g) & =\log\det g+\log\det Jg^{-1}J^{\top}-\log\det JJ^{\top}.
	\end{align*}
	Clearly, $f_{1}(I)=f_{2}(I)=0$, and hence it suffices to prove that
	their derivatives are the same. 
	
	Note that $\text{Range}(Q\cdot g\cdot Q)=\text{Null}(J)$ and $\range(J^{\top})$
	is the orthogonal complement of $\nulls(J)$. Since $J^{\top}(JJ^{\top})^{-1}J$
	is the orthogonal projection to $\range(J^{\top})$, all of its eigenvectors
	in $\range(J^{\top})$ have eigenvalue $1$ and all the rest in $\nulls(J)$
	have eigenvalue $0$. Therefore, by padding eigenvalue $1$ on $\range(J^{\top})=\nulls(J)^{\perp}=\range(QgQ)^{\perp}$,
	we have 
	\begin{align*}
		\pdet(Q\cdot g\cdot Q) & =\det(Q\cdot g\cdot Q+J^{\top}(JJ^{\top})^{-1}J)\\
		& =\det(Q\cdot g\cdot Q+(I-Q)).
	\end{align*}
	Using $D\log\det A(g)[u]=\tr(A(g)^{-1}DA(g)[u]),$ the directional
	derivative of $f_{1}$ on direction $u$ is
	\[
	Df_{1}(g)[u]=\tr\text{\ensuremath{\left((Q\cdot g\cdot Q+(I-Q))^{-1}Q\cdot u\cdot Q\right)}}.
	\]
	Let $N=(Q\cdot g\cdot Q)^{\dagger}$. As shown in the proof of Lemma
	\ref{lem:formula_inverse}, we have $NQ=QN=N$ and $QgN=Q$. By using
	these identities, we can manually check that $(Q\cdot g\cdot Q+(I-Q))^{-1}=N+(I-Q)$.
	Hence, 
	\begin{align*}
		Df_{1}(g)[u] & =\tr\left((N+(I-Q))Q\cdot u\cdot Q\right)=\tr(NuQ)\\
		& =\tr(QNu)=\tr(Nu)
	\end{align*}
	where we used idempotence of the projection matrix $Q$ (i.e., $Q^{2}=Q$).
	
	On the other hand, we have
	\begin{align*}
		Df_{2}(g)[u] & =\tr(g^{-1}u)-\tr\left((Jg^{-1}J^{\top})^{-1}(Jg^{-1}ug^{-1}J^{\top})\right)\\
		& =\tr\left((g^{-1}-g^{-1}J^{\top}(Jg^{-1}J^{\top})^{-1}Jg^{-1})u\right)\\
		& =\tr(Nu)
	\end{align*}
	where we used the alternative formula of $N$ in Lemma \ref{lem:formula_inverse}.
	This shows that the derivative of $f_{1}$ equals to that of $f_{2}$
	at any point $g\succ0$. Since the set of positive definite matrices
	is connected and $f_{1}(I)=f_{2}(I)$, this implies that $f_{1}(g)=f_{2}(g)$
	for all $g\succ0$.
\end{proof}
Combining Lemma \ref{lem:formula_inverse} and Lemma \ref{lem:formula_pdet},
we have the following formula of the Hamiltonian.
\begin{align*}
	\overline{H}(x,v)= & H_0(x,v)+\lambda(x,v)^{\top}c(x),\\
	H_0(x,v)= & f(x)+\frac{1}{2}v^{\top}g(x)^{-\frac{1}{2}}\left(I-g(x)^{-\frac{1}{2}}\cdot Dc(x)^{\top}(Dc(x)\cdot g(x)^{-1}\cdot Dc(x)^{\top})^{-1}Dc(x)\cdot g(x)^{-\frac{1}{2}}\right)g(x)^{-\frac{1}{2}}v\\
	& +\frac{1}{2}\left(\log\det g(x)+\log\det\left(Dc(x)\cdot g(x)^{-1}\cdot Dc(x)^{\top}\right)-\log\det\left(Dc(x)\cdot Dc(x)^{\top}\right)\right).
\end{align*}

\subsubsection{Simplification for subspace constraints\label{subsec:Simplification-for-subspace}}

For the case $c(x)=Ax-b$, the constrained Hamiltonian is 
\begin{align}
	\overline{H}(x,v)= & f(x)+\frac{1}{2}v^{\top}g^{-\frac{1}{2}}\left(I-P\right)g^{-\frac{1}{2}}v+\frac{1}{2}\left(\log\det g+\log\det Ag^{-1}A^{\top}-\log\det AA^{\top}\right)+\lambda^{\top}c\label{eq:H_A}
\end{align}
where $P=g^{-\frac{1}{2}}A^{\top}(Ag^{-1}A^{\top})^{-1}Ag^{-\frac{1}{2}}$.
The following lemma shows that the dynamics corresponding to $\overline{H}$ above is equivalent to a simpler Hamiltonian. 
The key observation is that the algorithm only needs to know $x(h)$ in the HMC dynamics, and not $v(h)$. 
Thus we can replace $\overline{H}$ by any other $H$ that produces the same $x(h)$.
\begin{lem}
	\label{lem:ham_deri}The Hamiltonian dynamics of $x$ corresponding
	to (\ref{eq:H_A}) is same as the dynamics of $x$ corresponding to
	\begin{equation}
		H(x,v)=f(x)+\frac{1}{2}v^{\top}g^{-\frac{1}{2}}\left(I-P\right)g^{-\frac{1}{2}}v+\frac{1}{2}\left(\log\det g+\log\det Ag^{-1}A^{\top}\right)\label{eq:H_A_new}
	\end{equation}
	where $P=g^{-\frac{1}{2}}A^{\top}(Ag^{-1}A^{\top})^{-1}Ag^{-\frac{1}{2}}$.
	Furthermore, we have
	\begin{align}
		\frac{dx}{dt} & =g^{-\frac{1}{2}}\left(I-P\right)g^{-\frac{1}{2}}v,\label{eq:CHMC_dx}\\
		\frac{dv}{dt} & =-\nabla f(x)+\frac{1}{2}Dg\left[\frac{dx}{dt},\frac{dx}{dt}\right]-\frac{1}{2}\tr(g^{-\frac{1}{2}}\left(I-P\right)g^{-\frac{1}{2}}Dg).\label{eq:CHMC_dv}
	\end{align}
\end{lem}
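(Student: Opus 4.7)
The plan is to first argue that the two Hamiltonians generate the same $x$-trajectory (the $v$-trajectories will generally differ), and then compute $\partial H/\partial v$ and $\partial H/\partial x$ for the simpler $H$ in (\ref{eq:H_A_new}). Observe that $\overline H - H$ is the sum of a constant $-\frac{1}{2}\log\det AA^\top$ (which contributes nothing to the dynamics) and the Lagrangian $\lambda^\top(Ax-b)$. On the constraint set $\{Ax=b\}$ the Lagrangian contributes zero to $\partial \overline H/\partial v$, so both Hamiltonians yield the same $\frac{dx}{dt} = Nv$ with $N = g^{-\frac{1}{2}}(I-P)g^{-\frac{1}{2}} = M^{\dagger}$. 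The Lagrangian does contribute $-A^\top \lambda \in \range(A^\top) = \nulls(A)^\perp$ to $\frac{dv}{dt}$ under $\overline H$.

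The key structural fact is $NA^\top = 0$ (from Lemma~\ref{lem:formula_inverse}), from which I obtain two consequences: (i) $Nv$ depends on $v$ only through $Qv$, where $Q = I - A^\top(AA^\top)^{-1}A$; and (ii) $v^\top N v = (Qv)^\top N (Qv)$, so $\partial H/\partial x$ (whose only $v$-dependence sits in the quadratic $\frac{1}{2} v^\top N v$) depends on $v$ only through $Qv$. Setting $u = Qv$, this means $u$ satisfies the closed ODE system $\frac{dx}{dt} = N u$, $\frac{du}{dt} = -Q\,\partial H/\partial x|_{(x,u)}$ under both Hamiltonians: under $\overline H$, because $Q$ annihilates $-A^\top \lambda$ and $v$ remains in $\nulls(A)$ by Lemma~\ref{lem:CHMC_inva} (so $u=v$); and under $H$, by applying $Q$ to $\frac{dv}{dt}$. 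Sharing the initial condition $v_0 \in \nulls(A)$, uniqueness of ODE solutions forces the $x$-trajectories to coincide.

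To obtain the explicit $\frac{dv}{dt}$ formula under $H$, I differentiate the three pieces. For the kinetic term, use the identity $v^\top M^\dagger v = \xi^\top M \xi$ with $\xi = M^\dagger v = \frac{dx}{dt}$ (valid because $MM^\dagger v = Qv$ and $Q\xi = \xi$). Differentiating $v = M\xi$ at fixed $v$ gives $M\,\partial\xi/\partial x_i = -(\partial M/\partial x_i)\xi$, which combined with $\partial(\xi^\top M\xi)/\partial x_i = \xi^\top(\partial M/\partial x_i)\xi + 2\xi^\top M\,\partial\xi/\partial x_i$ yields $\partial(v^\top M^\dagger v)/\partial x_i = -\xi^\top(\partial M/\partial x_i)\xi$. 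Since $M = QgQ$ with constant $Q$ and $Q\xi = \xi$, this reduces to $-(dx/dt)^\top(\partial g/\partial x_i)(dx/dt)$, producing the $\frac{1}{2}Dg[dx/dt,dx/dt]$ term. For the log-determinant piece, standard derivative identities give $\partial\log\det g/\partial x_i = \tr(g^{-1}\partial g/\partial x_i)$ and $\partial\log\det(Ag^{-1}A^\top)/\partial x_i = -\tr(g^{-1}A^\top(Ag^{-1}A^\top)^{-1}Ag^{-1}\,\partial g/\partial x_i)$; these telescope to $\tr(N\,\partial g/\partial x_i)$, matching the trace term.

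The main obstacle is verifying that the kinetic energy depends only on $Qv$, which hinges on $NA^\top = 0$ from Lemma~\ref{lem:formula_inverse}; this is what permits the $v$-trajectories under $H$ and $\overline H$ to diverge in $\range(A^\top)$ without affecting $x$. Once that is in hand, the rest reduces to projecting the dynamics to the closed $(x, Qv)$-system and computing standard derivatives of quadratic forms and log-determinants.
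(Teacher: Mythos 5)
Your proposal is correct, and it takes a genuinely different route from the paper at the two key points. For the equivalence of the $x$-dynamics, the paper argues directly at the level of $\frac{dv}{dt}$: it observes that $(I-P)g^{-1/2}A^\top y = 0$ for any $y$ (perturbing $v$ by $A^\top y$ leaves $\frac{dx}{dt} = g^{-1/2}(I-P)g^{-1/2}v$ invariant), so dropping the $-A^\top\lambda$ term from $\frac{dv}{dt}$ cannot affect the $x$-trajectory. You instead project both dynamical systems onto the closed $(x, Qv)$-subsystem and invoke uniqueness of ODE solutions — a more structural argument that makes explicit \emph{why} the $v$-trajectories are free to diverge in $\range(A^\top)$. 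For the kinetic-term derivative, the paper expands $v^\top g^{-1/2}(I-P)g^{-1/2}v = v^\top g^{-1}v - v^\top g^{-1}A^\top(Ag^{-1}A^\top)^{-1}Ag^{-1}v$ and differentiates term by term; you use the identity $v^\top M^\dagger v = \xi^\top M\xi$ with $\xi = M^\dagger v$ together with the $x$-independence of $Qv$, which is cleaner and generalizes beyond the log-barrier.

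One small correction: you write ``Differentiating $v = M\xi$ at fixed $v$,'' but $M\xi = MM^\dagger v = Qv$, not $v$ in general — your own parenthetical notes this. The argument goes through because $Q$ is constant in the affine case, so $Qv$ is $x$-independent at fixed $v$, giving $M\,\partial\xi/\partial x_i = -(\partial M/\partial x_i)\xi$ all the same; but the displayed identity should read $Qv = M\xi$.
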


\begin{proof}
	Note that the dynamics of $x$ corresponding to (\ref{eq:H_A}) is
	given by
	\begin{align}
		\frac{dx}{dt} & =\frac{\partial \overline{H}}{\partial v}=g^{-\frac{1}{2}}\left(I-P\right)g^{-\frac{1}{2}}v+(D_{v}\lambda)^{\top}c\nonumber \\
		& =g^{-\frac{1}{2}}\left(I-P\right)g^{-\frac{1}{2}}v\label{eq:CHMC_dxdt}
	\end{align}
	where we used that $c(x)=0$ (Lemma \ref{lem:CHMC_inva}). 
	
	Now let us compute the dynamics of $v$. Note that
	\[
	v^{\top}g^{-\frac{1}{2}}\left(I-P\right)g^{-\frac{1}{2}}v=v^{\top}g^{-1}v-v^{\top}g^{-1}A^{\top}(A\cdot g^{-1}\cdot A^{\top})^{-1}Ag^{-1}v.
	\]
	Hence, we have
	\begin{align*}
		& D_{x}\left(\frac{1}{2}v^{\top}g^{-\frac{1}{2}}\left(I-P\right)g^{-\frac{1}{2}}v\right) \\= & -\frac{1}{2}v^{\top}g^{-1}\cdot Dg\cdot g^{-1}v+v^{\top}g^{-1}\cdot Dg\cdot g^{-1}A^{\top}(A\cdot g^{-1}\cdot A^{\top})^{-1}Ag^{-1}v\\
		& -\frac{1}{2}v^{\top}g^{-1}A^{\top}(A\cdot g^{-1}\cdot A^{\top})^{-1}A\cdot g^{-1}\cdot Dg\cdot g^{-1}\cdot A^{\top}(A\cdot g^{-1}\cdot A^{\top})^{-1}Ag^{-1}v\\
		= & -\frac{1}{2}v^{\top}g^{-\frac{1}{2}}(I-P)g^{-\frac{1}{2}}\cdot Dg\cdot g^{-\frac{1}{2}}(I-P)g^{-\frac{1}{2}}v\\
		= & -\frac{1}{2}Dg\left[\frac{dx}{dt},\frac{dx}{dt}\right],
	\end{align*}
	where we used $\frac{dx}{dt}=g^{-\frac{1}{2}}(I-P)g^{-\frac{1}{2}}v$
	in (\ref{eq:CHMC_dxdt}). Therefore, it follows that
	\begin{align}
		\frac{dv}{dt} & =-\frac{\partial\overline{H}}{\partial x}-(D_{v}\lambda)^{\top}c-A^{\top}\lambda\label{eq:dvdt}\\
		& =-\nabla f(x)+\frac{1}{2}Dg\left[\frac{dx}{dt},\frac{dx}{dt}\right]-\frac{1}{2}\tr(g^{-1}Dg)\\&+\frac{1}{2}\tr\left((Ag(x)^{-1}A^{\top})^{-1}Ag(x)^{-1}\cdot Dg\cdot g(x)^{-1}A^{\top}\right)-A^{\top}\lambda\nonumber \\
		& =-\nabla f(x)+\frac{1}{2}Dg\left[\frac{dx}{dt},\frac{dx}{dt}\right]-\frac{1}{2}\tr(g^{-\frac{1}{2}}\left(I-P\right)g^{-\frac{1}{2}}Dg)-A^{\top}\lambda\nonumber 
	\end{align}
	where we used that $c=0$ again in the second equality.
	
	Recall that $\frac{dx}{dt}=g^{-\frac{1}{2}}\left(I-P\right)g^{-\frac{1}{2}}v$.
	In this formula, let us perturb $v$ by $A^{\top}y$ for any $y$
	as follows. 
	\begin{align*}
		\left(I-P\right)g^{-\frac{1}{2}}(v+A^{\top}y) & =\left(I-P\right)g^{-\frac{1}{2}}v+\left(I-g^{-\frac{1}{2}}A^{\top}(Ag^{-1}A^{\top})^{-1}Ag^{-\frac{1}{2}}\right)g^{-\frac{1}{2}}A^{\top}y\\
		& =\left(I-P\right)g^{-\frac{1}{2}}v+(g^{-\frac{1}{2}}A^{\top}y-g^{-\frac{1}{2}}A^{\top}(Ag^{-1}A^{\top})^{-1}(Ag^{-1}A^{\top})y)\\
		& =\left(I-P\right)g^{-\frac{1}{2}}v+(g^{-\frac{1}{2}}A^{\top}y-g^{-\frac{1}{2}}A^{\top}y)\\
		& =\left(I-P\right)g^{-\frac{1}{2}}v.
	\end{align*}
	Hence, removing $A^{\top}\lambda$ from $\frac{dv}{dt}$ in (\ref{eq:dvdt})
	does not change the dynamics of $x$, and thus we have the new dynamics
	given simply by (\ref{eq:CHMC_dx}) and (\ref{eq:CHMC_dv}). By repeating
	this proof, one can check that the simplified Hamiltonian (\ref{eq:H_A_new})
	also yields (\ref{eq:CHMC_dx}) and (\ref{eq:CHMC_dv}).
\end{proof}

\subsubsection{Efficient Computation of Leverage Score\label{subsec:Efficient-Computation-of}}

In this section, we discuss how we efficiently compute the diagonal
entries of $A^{\top}(Ag^{-1}A^{\top})^{-1}A$. Our idea is based on
the fact that certain entries of $(Ag^{-1}A^{\top})^{-1}$ can be
computed as fast as computing sparse Cholesky decomposition of $Ag^{-1}A^{\top}$
\cite{takahashi1973formation,campbell1995computing}, which can be
$O(n)$ time faster than computing $(Ag^{-1}A^{\top})^{-1}$ in many
settings.

For simplicity, we focus on the case $g(x)$ as a diagonal matrix,
since we use the log-barrier $\phi(x)=-\sum_{i=1}^{m}(\log(x_{i}-l_{i})+\log(u_{i}-x_{i}))$
in implementation. We first note that we maintain a ``sparsity pattern''
$\sp(M)$ of a sparse matrix $M$ so that we handle only these entries
in downstream tasks. The sparsity pattern indicates ``candidates''
of nonzero entries of a matrix (i.e., $\sp(M)\supseteq nnz(M)=\{(i,j):M_{ij}\neq0\}$).
For instance, it is obvious that $\sp(cc^{\top})=\{(i,j):c_{i}c_{j}\neq0\}=nnz(cc^{\top})$
for a column vector $c$ and that $\sp(Ag^{-1}A^{\top})=\bigcup_{i\in[n]}\sp(A_{i}A_{i}^{\top})$
follows from the equality $Ag^{-1}A^{\top}=\sum_{i=1}^{n}(Ag^{-\half})_{i}(Ag^{-\half})_{i}^{\top}$,
where $M_{i}$ denote the $i^{th}$ column of $M$ (See Theorem 2.1
in \cite{davis2006direct}). Then we compute the Cholesky decomposition
to obtain a sparse triangular matrix $L$ such that $LL^{\top}=Ag^{-1}A^{\top}$
with a property $\sp(Ag^{-1}A^{\top})\subseteq\sp(L^{\top})\cup\sp(L)$
(See Theorem 4.2 in \cite{davis2006direct}).

Once the sparsity pattern of $L$ is identified, we compute $S:=(Ag^{-1}A^{\top})^{-1}\vert_{\sp(L)}$,
the restriction of $S$ to $\sp(L)$, that is, the inverse matrix
$S$ is computed only for entries in $\sp(L)$. \cite{takahashi1973formation,campbell1995computing}
showed that this matrix $S$ can be computed as fast as the Cholesky
decomposition of $Ag^{-1}A^{\top}$.

For completeness, we explain how they compute $S$ efficiently. Let
$L_{0}DL_{0}^{\top}$ be the LDL decomposition of $Ag^{-1}A^{\top}$
such that the diagonals of $L_{0}$ is one and so $L=L_{0}D^{\half}$,
and it easily follows that
\[
S=D^{-1}L_{0}^{-1}+(I-L_{0}^{\top})S=D^{-\half}L^{-1}+(I-L^{\top}D^{-\half})^{-1}S.
\]
Since $D^{-1}L_{0}^{-1}$ is lower triangular and $I-L_{0}^{\top}$
is strictly upper triangular, symmetry of $S$ implies that $S$ can
be computed from the bottom row to the top row one by one. We note
that the computation of $S$ on any entry in $\sp(L)$ only requires
previously computed $S$ on entries in $\sp(L)$, due to the sparsity
pattern of $I-L^{\top}D^{-\half}$. \cite{takahashi1973formation,campbell1995computing}
showed that the total cost of computing $S$ is $O(\sum_{i=1}^{n}n_{i}^{2})$
for backward substitution, where $n_{i}$ is the number of nonzeros
in the $i^{th}$ column of $L$. This exactly matches the cost of
computing $L$. In our experiments, for many sparse matrices $A$,
we found that $O(\sum_{i=1}^{n}n_{i}^{2})$ is roughly $O(n^{1.5})$
and it is much faster than dense matrix inverse.

We have presented methods to save computational cost, avoiding full
computation of the inverse $(Ag^{-1}A^{\top})^{-1}$. This attempt
is justified by the fact that only entries of $Ag^{-1}A^{\top}$ in
$\sp(L)\cup\sp(L^{\top})$ matter in computing $\diag(A^{\top}SA)=\diag(A^{\top}(Ag^{-1}A^{\top})^{-1}A)$.
\begin{lem}
	Computation of $\diag(A^{\top}(Ag^{-1}A^{\top})^{-1}A)$ involves
	accessing only entries of $(Ag^{-1}A^{\top})^{-1}$ in $\sp(Ag^{-1}A^{\top})$.
\end{lem}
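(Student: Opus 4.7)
The plan is to expand the diagonal element by element and observe that each summand that actually contributes only needs entries of $S := (Ag^{-1}A^{\top})^{-1}$ at positions that already appear in the outer-product decomposition of $Ag^{-1}A^{\top}$.

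First I would write out, for each $i \in [n]$,
\[
[A^{\top} S A]_{ii} = \sum_{j,k} A_{ji}\, S_{jk}\, A_{ki},
\]
and note that the only nonzero summands come from pairs $(j,k)$ for which $A_{ji} \neq 0$ and $A_{ki} \neq 0$. Thus the computation only ever reads $S_{jk}$ at such pairs.

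Next I would argue that any such pair $(j,k)$ lies in $\sp(Ag^{-1}A^{\top})$. Using the outer-product identity
\[
Ag^{-1}A^{\top} = \sum_{l=1}^{n} g_l^{-1}\, A_l A_l^{\top},
\]
where $A_l$ is the $l$-th column of $A$, the $(j,k)$ entry receives the contribution $g_i^{-1} A_{ji} A_{ki}$ from $l = i$. Since the sparsity pattern is defined as the structural pattern induced by this sum (explicitly, $\sp(Ag^{-1}A^{\top}) = \bigcup_{l} \sp(A_l A_l^{\top})$, per the identity invoked just before the lemma), the pair $(j,k)$ is automatically recorded in $\sp(Ag^{-1}A^{\top})$. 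Taking the union over all diagonal indices $i \in [n]$ then shows that the full diagonal computation only accesses $S$ on $\sp(Ag^{-1}A^{\top})$.

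The argument is essentially one short structural observation, so I do not anticipate a serious obstacle; the only subtlety is to stay consistent about what $\sp(\cdot)$ means — it is a candidate set for nonzeros derived from the outer-product expansion, not the exact nonzero set — which is exactly why the inclusion $\{(j,k) : A_{ji} \neq 0 \text{ and } A_{ki} \neq 0 \text{ for some } i\} \subseteq \sp(Ag^{-1}A^{\top})$ holds without any numerical cancellation concerns. Combined with the earlier fact that $\sp(Ag^{-1}A^{\top}) \subseteq \sp(L) \cup \sp(L^{\top})$, this shows that the precomputed restriction $S|_{\sp(L)}$ from the Takahashi--Campbell procedure already contains all entries needed for $\diag(A^{\top} S A)$.
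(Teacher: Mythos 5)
Your proof is correct and matches the paper's argument: the paper writes $\sigma_i = \tr\left(Ma_i a_i^{\top}\right)$ and observes that only entries of $M$ in $\sp(a_i a_i^{\top})$ matter, then takes the union $\bigcup_i \sp(a_i a_i^{\top}) = \sp(Ag^{-1}A^{\top})$; your explicit index-level expansion $\sum_{j,k} A_{ji} S_{jk} A_{ki}$ is just the unpacked version of that trace identity.
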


\begin{proof}
	Let $M:=(Ag^{-1}A^{\top})^{-1}\in\R^{m\times m}$, $\sigma_{i}:=(A^{\top}(Ag^{-1}A^{\top})^{-1}A)_{ii}$
	for $i\in[n]$, and $a_{i}$ be the $i^{th}$ column of $A$. Observe
	that
	\[
	\sigma_{i}=a_{i}^{\top}(Ag^{-1}A^{\top})^{-1}a_{i}=\tr(a_{i}^{\top}Ma_{i})=\tr(Ma_{i}a_{i}^{\top}).
	\]
	As the entries of $M$ only in $\sp(a_{i}a_{i}^{\top})$ matter when
	computing the trace, we have that all the entries of $M$ used for
	computing $\sigma_{i}$ for all $i\in[n]$ are included in $\bigcup_{i=1}^{n}\sp(a_{i}a_{i}^{\top})=\sp(Ag^{-1}A^{\top})$.
\end{proof}
Now let us divide the diagonals of $S$ by $2$. Then we have $(Ag^{-1}A^{\top})^{-1}\vert_{\sp(L)\cup\sp(L^{\top})}=S+S^{\top}$
and thus 
\begin{align*}
	\diag(A^{\top}(Ag^{-1}A^{\top})^{-1}A)=\diag(A^{\top}(Ag^{-1}A^{\top})^{-1}\vert_{\sp(L)\cup\sp(L^{\top})}A)\\
	=\diag(A^{\top}SA+A^{\top}S^{\top}A)=2\cdot\diag(A^{\top}SA)
\end{align*}
and the last term can be computed efficiently using $S$. In our experiment,
the cost of computing leverage score is roughly twice the cost of
computing Cholesky decomposition in all datasets.

Finally, we discuss another approach to compute leverage score with
the same asymptotic complexity. We consider the function
\[
V(g)=\log\det Ag^{-1}A^{\top}
\]
where $g$ is a sparse matrix $g\in\mathbb{R}^{\sp(g)}$ and $V$
is defined only on $\mathbb{R}^{\sp(g)}$. Note that $V(g)$ can be
computed using Cholesky decomposition of $A^{\top}g^{-1}A^{\top}$
and multiplying the diagonal of the decomposition. Next, we note that
\[
\nabla V(g)=-(g^{-1}A^{\top}(Ag^{-1}A^{\top})^{-1}Ag^{-1})|_{\sp(g)}.
\]
Hence, we can compute leverage score by first computing $\nabla V(g)$
via automatic differentiation, and the time complexity of computing
$\nabla V$ is only a small constant factor more than the time complexity
of computing $V$ \cite{griewank2008evaluating}. The only problem
with this approach is that the Cholesky decomposition algorithm is
an algorithm involving a large loop and sparse operations and existing
automatic differentiation packages are not efficient to differentiate
such functions.

\subsection{Stationarity of CRHMC\label{subsec:Stationarity-of-CRHMC}}

Now, the ideal CRHMC (or the continuous CRHMC) is the same as Algorithm~\ref{alg:HMC} with the simplified constrained Hamiltonian $H$ in place of the unconstrained Hamiltonian.
In this section, we prove that the Markov chain defined by the ideal CRHMC projected
to $x$ satisfies detailed balance with respect to its target distribution
proportional to $e^{-f(x)}$ subject to $c(x)=0$, leading to the
target distribution being stationary.

To this end, we introduce a few notations here. 
Let $\mathcal{M}=\{x\in\Rn:c(x)=0\}$ be a manifold in $\Rn$ and $\pi(x)$ be a desired distribution on $\mathcal{M}$ proportional to $e^{-f(x)}$ satisfying $\int_{\mathcal{M}}\pi(x)dx=1$ (to be precise, the Radon-Nikodym derivative of $\pi$ w.r.t. the Hausdorff measure on the manifold $\mathcal{M}$ is proportional to $e^{-f(x)}$).
We denote the set of velocity $v$ at $x\in\mathcal{M}$ (i.e., cotangent space) by $\tcal_{x}\mcal=\nulls(Dc(x))=\{v\in\Rn:Dc(x)M(x)^{\dagger}v=0\}$.
Let $T_{h}$ be the map sending $(x,v)$ to $(x',v')=(x(h),y(h))$ in the Hamiltonian ODE (Step 2 of Algorithm~\ref{alg:HMC}) and define $F_{x,h}(v):=(\pi_{1}\circ T_{h})(x,v)=x'$, where $\pi_{1}(x,v):=x$ is the projection to the position space $x$.
For a matrix $A$, we denote by $|A|$ the absolute value of its determinant $|\det(A)|$.

Note that we check the detailed balance of the induced chain on the “original $(x)$” space without moving to the “phase $(x,v)$” space, unlike Brubaker’s proof \cite{brubaker2012family}.

\begin{thm}
	For $x,x'\in\mcal$, let $\P_{x}(x')$ be the probability density
	of the one-step distribution to $x'$ starting at $x$ in CRHMC (i.e.,
	transition kernel from $x$ to $x'$). It satisfies detailed balance
	with respect to the desired distribution $\pi$ (i.e., $\pi(x)\P_{x}(x')=\pi(x')\P_{x'}(x)$).
\end{thm}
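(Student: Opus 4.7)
The plan is to establish detailed balance by verifying the test-function identity
\[
\int \phi(x)\psi(x')\,\pi(x)\,\P_x(x')\,dx\,dx' \;=\; \int \psi(x)\phi(x')\,\pi(x)\,\P_x(x')\,dx\,dx'
\]
for arbitrary bounded $\phi,\psi$ on $\mcal$, which is equivalent to pointwise detailed balance. The argument rests on three properties of the continuous Hamiltonian flow $T_h$ restricted to the constrained phase space $\Omega \defeq \{(x,v)\,:\,c(x)=0,\ v\in\tcal_x\mcal\}$: (i) energy conservation $\overline{H}\circ T_h = \overline{H}$ along trajectories in $\Omega$; (ii) measure preservation $\lvert \det DT_h\rvert\equiv 1$ with respect to the natural volume form on $\Omega$; and (iii) time-reversal symmetry under the velocity-flip involution $R(x,v)=(x,-v)$, i.e.\ $T_h^{-1} = R\circ T_h\circ R$.

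First I would compute the joint law produced by the two-step update. For fixed $x$ with $c(x)=0$, the density of $v\sim \N(0,M(x))$ with respect to Lebesgue measure on $\tcal_x\mcal = \range(M(x))$ is proportional to $\pdet(M(x))^{-1/2}\exp\bigl(-\half v^{\top}M(x)^{\dagger}v\bigr)$, using the formula for $M(x)^{\dagger}$ from Lemma~\ref{lem:formula_inverse}. Multiplying by $\pi(x)\propto e^{-f(x)}$ and collecting the exponents gives the Boltzmann identity
\[
\pi(x)\,p(v\mid x)\ \propto\ \exp\bigl(-\overline{H}(x,v)\bigr)\quad\text{on }\Omega,
\]
which is the pivot coupling the target $\pi$ to the Hamiltonian.

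The second step is a change-of-variables on $\Omega$. Unfolding $\P_x$,
\[
\int\phi(x)\psi(x')\pi(x)\P_x(x')\,dx\,dx' \;=\; C\int_{\Omega}\phi(x)\,\psi\bigl(F_{x,h}(v)\bigr)\,e^{-\overline{H}(x,v)}\,dx\,dv.
\]
Applying $(x',v')=T_h(x,v)$: by (ii) the Jacobian factor is unity, by (i) $e^{-\overline{H}}$ is unchanged, and by (iii) we have $\pi_1\circ T_h^{-1}(x',v') = \pi_1\circ T_h(x',-v') = F_{x',h}(-v')$. Flipping $v'\mapsto -v'$ preserves both Lebesgue measure on $\tcal_{x'}\mcal$ and $\overline{H}$ (which is quadratic in $v$), and after relabelling variables the integrand becomes $\phi(F_{x,h}(v))\psi(x)e^{-\overline{H}(x,v)}$. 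This is the same expression with $\phi$ and $\psi$ swapped, yielding detailed balance.

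The hard part is verifying (i)--(iii) rigorously for the \emph{constrained} flow. Energy conservation of $\overline{H}$ (not merely of $H$) relies on Lemma~\ref{lem:CHMC_inva}: because $c(x_t)\equiv 0$ and $v_t\in\nulls(Dc(x_t))$ are preserved along trajectories, the Lagrangian correction $\lambda^{\top}c$ vanishes identically on the flow, so $\overline{H}$ is as conserved as $H$. Measure preservation is the most delicate point: one must show that the specific multiplier $\lambda(x,v)$ chosen in \eqref{eq:const_H} is precisely what makes $T_h$ agree with the Dirac-constrained Hamiltonian flow on the symplectic manifold $T^{\ast}\mcal$, so that the canonical volume form on $\Omega$ is preserved. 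Time-reversal symmetry (iii) is then inherited from the evenness of $\overline{H}$ in $v$ together with the fact that $\lambda(x,v)$ has no odd-in-$v$ cross term (each occurrence of $v$ is paired with another factor of $v$ or with $\dot x\propto M^{\dagger}v$), so that the ODE \eqref{eq:ham_ODE} is invariant under $(v,t)\mapsto(-v,-t)$.
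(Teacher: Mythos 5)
Your proposal is correct, and it takes a genuinely different route from the paper. The paper argues \emph{pointwise}: it fixes $x,x'\in\mcal$, writes $\P_x(x')$ explicitly as an integral over the level set $V_x=\{v\in\tcal_x\mcal: F_{x,h}(v)=x'\}$ weighted by $1/|DF_{x,h}(v)|$, does the same at $x'$ for $T_{-h}$, and then deploys a block-matrix determinant identity together with $|DT_h|=1$ to establish $|DF_{x,h}(v)|=|DF_{x',-h}(v')|$, after which Hamiltonian preservation matches the two integrals. You instead verify the \emph{weak} (test-function) form of detailed balance, unfold $\P_x$ as a pushforward to get a single integral over the full constrained phase space $\Omega$, and change variables by $T_h$ globally. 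This dissolves the Jacobian bookkeeping: volume preservation of $T_h$ on $\Omega$ replaces the block-matrix calculation outright, and the reversal involution $R$ (via $T_h^{-1}=R\circ T_h\circ R$ and evenness of $\overline H$ in $v$) plays the role that the pair $T_{-h}$/$V_{x'}$ plays in the paper. The trade-off: your route is conceptually tighter and avoids computing any $|DF|$ factors, but it leans on the statement that the constrained Hamiltonian flow preserves the natural volume form on $\Omega$ as a single bullet. The paper leans on exactly the same fact (asserting ``volume preservation'' without derivation), so the level of rigor is comparable; where the paper is slightly more explicit is in exhibiting, in ambient coordinates, \emph{how} this volume preservation manifests as equality of the two fixed-$x$ Jacobians. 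Both arguments also need the same underlying facts from Lemma~\ref{lem:CHMC_inva} — that trajectories stay on $\{c=0\}$ so that $\lambda^{\top}c\equiv 0$ along the flow and $\overline H$ agrees with $H$ — and you correctly identify this.
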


\begin{proof}
	Fix $x$ and $x'$ in $\mcal$. Let $C_{1}$ be the normalization
	constant of $e^{-f(x)}$ (i.e., $\pi(x)=C_{1}e^{-f(x)}$). The transition
	kernel $\P_{x}(x')$ is characterized as the pushforward by $F_{x,h}$
	of the probability measure $v\sim\mathcal{N}(0,M(x))$ on $\tcal_{x}\mcal$,
	so it follows that
	\[
	\P_{x}(x')=C_{2}\int_{V_{x}}\frac{e^{-\frac{1}{2}\log\pdet(M(x))-\frac{1}{2}v^{\top}M(x)^{\dagger}v}}{|DF_{x,h}(v)|}dv,
	\]
	where $C_{2}$ is the normalization constant of $e^{-\frac{1}{2}\log\pdet(M(x))-\frac{1}{2}v^{\top}M(x)^{\dagger}v}$
	and $V_{x}=\{v\in\tcal_{x}\mcal:F_{x,h}(v)=x'\}$ is the set of velocity
	in cotangent space at $x$ such that the Hamiltonian ODE with step
	size $h$ sends $(x,v)$ to $(x',v')$. 
	(Further details for deducing the $1$-step distribution can be found in Lemma 10 of \cite{lee2018convergence})
	As $c(x)=0$ for $x\in\mcal$,
	it follows that
	
	\begin{align*}
		&\pi(x)\P_{x}(x')\\&=C_{1}C_{2}\int_{V_{x}}\frac{e^{-f(x)-\frac{1}{2}\log\pdet(M(x))-\frac{1}{2}v^{\top}M(x)^{\dagger}v-\lambda(x,v)^{\top}c(x)}}{|DF_{x,h}(v)|}dv=C_{1}C_{2}\int_{V_{x}}\frac{e^{-H(x,v)}}{|DF_{x,h}(v)|}dv.
	\end{align*}
	
	Going forward, we use three important properties of the Hamiltonian
	dynamics including reversibility, Hamiltonian preservation, and volume
	preservation, which still hold for the constrained Hamiltonian $H$.
	Due to reversibility $T_{-h}(x',v')=(x,v)$, we can write
	
	\[
	\pi(x')\P_{x'}(x)=C_{1}C_{2}\int_{V_{x'}}\frac{e^{-H(x',v')}}{|DF_{x',-h}(v')|}dv',
	\]
	where $V_{x'}=\{v'\in\tcal_{x'}\mcal:F_{x',-h}(v')=x\}$ is the counterpart
	of $V_{x}$. From reversibility $T_{-h}\circ T_{h}=I$, the inverse
	function theorem implies $DT_{-h}=(DT_{h})^{-1}$. Now let us denote 
	
	\[
	DT_{h}(x,v)=\left[\begin{array}{cc}
		A & B\\
		C & D
	\end{array}\right]\quad\&\quad DT_{-h}(x',v')=\left[\begin{array}{cc}
		A' & B'\\
		C' & D'
	\end{array}\right],
	\]
	where each entry is a block matrix with the same size. Note that $DF_{x,h}(v)=B$
	and $DF_{x',-h}(v')=B'$ hold by the definition of Jacobian. Together
	with $DT_{-h}=(DT_{h})^{-1}$, a formula for the inverse of a block
	matrix results in 
	
	\[
	|DF_{x',-h}(v')|=|B'|=\frac{|B|}{|D||A-BD^{-1}C|}=\frac{|B|}{|DT_{h}(x,v)|}=|B|=|DF_{x,h}(v)|,
	\]
	where we use the property of volume preservation in the fourth equality
	(i.e., $|DT_{h}(x,v)|=1$). Finally, the property of Hamiltonian preservation
	implies $H(x,v)=H(x',v')$ and thus 
	
	\[
	\int_{V_{x'}}\frac{e^{-H(x',v')}}{|DF_{x',-h}(v')|}dv'=\int_{V_{x}}\frac{e^{-H(x,v)}}{|DF_{x,h}(v)|}dv.
	\]
	Therefore, $\pi(x)\P_{x}(x')=\pi(x')\P_{x'}(x)$ holds.
\end{proof}

Similar reasoning as Theorem 3 and Lemma 1 in \cite{brubaker2012family} gives $\pi$-irreducibility and aperiodicity of the process, so CRHMC converges
to the unique stationary distribution $\pi\propto e^{-f(x)}$.    
 \section{Discretization\label{subsec:Discretization}}

We discuss how to implement our Hamiltonian dynamics using the implicit
midpoint method in Section \ref{subsec:Discretized-CRHMC-based} and
present theoretical guarantees of correctness and efficiency of the
discretized CRHMC in Section \ref{subsec:Theoretical-Guarantees}.

\subsection{Discretized CRHMC based on Implicit Midpoint Integrator\label{subsec:Discretized-CRHMC-based}}

In our algorithm, we discretize the Hamiltonian process into steps
of step size $h$ and run the process for $T$ iterations (see Algorithm
\ref{alg:CHMC_JL}). Rather than resampling the velocity at every step, we may change the
velocity more gradually, using the following update:
\[
v'\leftarrow\sqrt{\beta}v+\sqrt{1-\beta}z
\]
where $z\sim\mathcal{N}(0,M(x))$ and $\beta$ is a parameter. We
note that this step is time-reversible, i.e., $\P(v|x)\P(v\rightarrow v')=\P(v'|x)\P(v'\rightarrow v)$
(see Theorem \ref{thm:balance}). Starting from $(x^{(0)},v^{(0)})$, let $(x^{(t)},v^{(t)})$
be the point obtained after iteration $t$. In the beginning of each
iteration, we compute the Cholesky decomposition of $Ag(x)^{-1}A^{\top}$
for later use and resample the velocity with momentum. As noted previously
in Lemma \ref{lem:ham_deri}, for $c(x)=Ax-b$ we can just use the
simplified Hamiltonian in (\ref{eq:H_A_new}),
\[
H(x,v)=f(x)+\frac{1}{2}v^{\top}g(x)^{-\frac{1}{2}}\left(I-P(x)\right)g(x)^{-\frac{1}{2}}v+\frac{1}{2}\left(\log\det g(x)+\log\det Ag(x)^{-1}A^{\top}\right)
\]
instead of the constrained Hamiltonian $H + \lambda^{\top}c$.
We solve the Hamiltonian dynamics for $H$ by the implicit midpoint
method, which we will discuss below, and then use a Metropolis filter
on $H$ to ensure the distribution is correct. 

\paragraph{Implicit Midpoint Method.}

For general Riemannian manifolds, explicit integrators such as the leapfrog method (LM) are not symplectic, unlike IMM. 
LM is symplectic when the Hamiltonian equations are separable (i.e., each of $dx/dt$ and $dv/dt$ is a function of either x or v only).
However, in the general Riemannian manifold setting, where $dx/dt$ depends on position $x$ due to mass matrices (which is $g(x)$ in our paper) as well as velocity $v$, the Hamiltonian is no longer separable, which prevents us from using LM. 
We refer interested readers to Section 3 and Section 4.1 in \cite{cobb2019introducing}.

We now elaborate on how the implicit midpoint integrator works (see Algorithm
\ref{alg:IMM}), which is symplectic (so measure-preserving)
and reversible \cite{hairer2006geometric}. 
Let us write $H(x,v)=\overline{H}_{1}(x,v)+\overline{H}_{2}(x,v)$,
where
\begin{align*}
	\overline{H}_{1}(x,v) & =f(x)+\frac{1}{2}\left(\log\det g(x)+\log\det Ag(x)^{-1}A^{\top}\right),\\
	\overline{H}_{2}(x,v) & =\frac{1}{2}v^{\top}g(x)^{-\frac{1}{2}}\left(I-P(x)\right)g(x)^{-\frac{1}{2}}v.
\end{align*}
Starting from $(x_{0},v_{0})$, in the first step of the integrator,
we run the process on the Hamiltonian $\overline{H}_{1}$ with step
size $\frac{h}{2}$ to get $(x_{1/3},v_{1/3})$, and this discretization
leads to $x_{1/3}=x_{0}+\frac{h}{2}\frac{\partial\overline{H}_{1}}{\partial v}(x_{0},v_{0})$
and $v_{1/3}=v_{0}-\frac{h}{2}\frac{\partial\overline{H}_{1}}{\partial x}(x_{0},v_{0})$.
Note that $x_{1/3}=x_{0}$ due to $\frac{\partial\overline{H}_{1}}{\partial v}=0$.
In the second step of the integrator, we run the process on $\overline{H}_{2}$
with step size $h$ by solving
\begin{align*}
	x_{\frac{2}{3}} & =x_{\frac{1}{3}}+h\frac{\partial\overline{H}_{2}}{\partial v}\left(\frac{x_{\frac{1}{3}}+x_{\frac{2}{3}}}{2},\frac{v_{\frac{1}{3}}+v_{\frac{2}{3}}}{2}\right),\\
	v_{\frac{2}{3}} & =v_{\frac{1}{3}}-h\frac{\partial\overline{H}_{2}}{\partial x}\left(\frac{x_{\frac{1}{3}}+x_{\frac{2}{3}}}{2},\frac{v_{\frac{1}{3}}+v_{\frac{2}{3}}}{2}\right).
\end{align*}
To this end, starting from $x_{2/3}=x_{1/3}$ and $v_{2/3}=v_{1/3}$,
we apply $x_{2/3}\gets x_{1/3}+h\frac{\partial\overline{H}_{2}}{\partial v}\left(\frac{x_{1/3}+x_{2/3}}{2},\frac{v_{1/3}+v_{2/3}}{2}\right)$
and $v_{2/3}\gets v_{1/3}-h\frac{\partial\overline{H}_{2}}{\partial x}\left(\frac{x_{1/3}+x_{2/3}}{2},\frac{v_{1/3}+v_{2/3}}{2}\right)$
iteratively with the following subroutine for computing $\frac{\partial\overline{H}_{2}}{\partial v}$
and $\frac{\partial\overline{H}_{2}}{\partial x}$. According to Lemma
\ref{lem:ham_deri}, this computation involves solving $g(x)^{-1}A^{\top}\left(Ag(x)^{-1}A^{\top}\right)^{-1}Ag(x)^{-1}v$
for some $v$ and $x$. To compute $\left(Ag(x)^{-1}A^{\top}\right)^{-1}Ag(x)^{-1}v$,
we use the Newton's method, which iteratively computes $\nu\gets\nu+M^{-1}Ag(x)^{-1}\left(v-A^{\top}\nu\right)$
for some $M$. Note that the Newton's method guarantees that $\nu$
converges to $M^{-1}Ag(x)^{-1}v$ if $M$ is invertible. Here, we
choose $M=Ag(x^{(t)})^{-1}A^{\top}$ to ensure fast convergence. Since
we have already computed the Cholesky decomposition of $M$ in the
beginning, $M^{-1}Ag(x)^{-1}\left(v-A^{\top}\nu\right)$ can be computed
efficiently by backward and forward substitution. In the third step
of the integrator, we run the process on the Hamiltonian $\overline{H}_{1}$
with step size $\frac{h}{2}$ again to get $(x_{1},v_{1})$, which
results in $x_{1}=x_{2/3}$ and $v_{1}=v_{2/3}-\frac{h}{2}\frac{\partial\overline{H}_{1}}{\partial x}(x_{1}, v_{2/3})$.

We note that CRHMC is affine-invariant and provably independent of condition number (Theorem~\ref{thm:disc-mixing}), and thus the step size and momentum only need to depend on the dimension. 
In practice, we set the momentum to roughly $1-h$, and for the step size $h$, we decrease it until the acceptance probability is close enough to 1 during the warm-up phase. 
Empirically, we found that the step size stays between $0.05$ and $0.2$ in practice even for high dimensional ill-conditioned polytopes. 
This step size is remarkable, given that for these instances a standard package like STAN ends up selecting a small step size like $10^{-8}$ and thus fails to converge.

Putting Algorithm \ref{alg:CHMC_JL} and Algorithm \ref{alg:IMM}
together, we obtain discretization of constrained Riemannian Hamiltonian
Monte Carlo algorithm.

\begin{algorithm2e}[h!]
	
	\caption{\small $\texttt{Discretized Constrained Riemannian Hamiltonian Monte Carlo with Momentum}$}\label{alg:CHMC_JL}
	
	\SetAlgoLined
	
	\textbf{Input:} Initial point $x^{(0)}$, velocity $v^{(0)}$, record
	frequency $T$, step size $h$, ODE steps $K$
	
	\For{$t=1,2,\cdots,T$}{
		
		Let $\overline{v}=v^{(t-1)}$ and $x=x^{(t-1)}.$ 
		
		\tcp{Step 1: Resample $v$ with momentum}
		
		Let $z\sim\mathcal{N}(0,M(x))$. Update $\overline{v}$:
		
		\[
		v\leftarrow\sqrt{\beta}\overline{v}+\sqrt{1-\beta}z.
		\]
		
		\ 
		
		\tcp{Step 2: Solve $\frac{dx}{dt}=\frac{\partial H(x,v)}{\partial v},\frac{dv}{dt}=-\frac{\partial H(x,v)}{\partial x}$
			via the implicit midpoint method}
		
		Use $\texttt{Implicit Midpoint Method}(x,v,h,K)$ to find $(x',v')$
		such that 
		
		\begin{align}
			v_{\frac{1}{3}} & =v-\frac{h}{2}\frac{\partial\overline{H}_{1}(x,v)}{\partial x},\label{eq:mid_sol}\\
			x' & =x+h\frac{\partial\overline{H}_{2}(\frac{x+x'}{2},\frac{v_{1/3}+v_{2/3}}{2})}{\partial v},\ v_{\frac 2 3}=v_{\frac 1 3}-h\frac{\partial\overline{H}_{2}(\frac{x+x'}{2},\frac{v_{1/3}+v_{2/3}}{2})}{\partial x},\nonumber \\
			v' & =v_{\frac{2}{3}}-\frac{h}{2}\frac{\partial\overline{H}_{1}(x',v_{\frac 2 3})}{\partial x}.\nonumber 
		\end{align}
		
		\ 
		
		\tcp{Step 3: Filter}
		
		With probability $\min\left\{ 1,\frac{e^{-H(x',v')}}{e^{-H(x,v)}}\right\} $,
		set $x^{(t)}\leftarrow x'$ and $v^{(t)}\leftarrow v'$.
		
		Otherwise, set $x^{(t)}\leftarrow x$ and $v^{(t)}\leftarrow-v$.
		
	}
	
	\textbf{Output:} $x^{(T)}$
	
\end{algorithm2e}

\begin{algorithm2e}[h!]
	
	\caption{$\texttt{Implicit Midpoint Method}$}\label{alg:IMM}
	
	\SetAlgoLined
	
	\textbf{Input:} Initial point $x$, velocity $v$, step size $h$,
	ODE steps $K$
	
	\tcp{Step 1: Solve $\frac{dx}{dt}=\frac{\partial\overline{H}_{1}(x,v)}{\partial v},\frac{dv}{dt}=-\frac{\partial\overline{H}_{1}(x,v)}{\partial x}$
	}
	
	Set $x_{\frac{1}{3}}\gets x$ and $v_{\frac{1}{3}}\gets v-\frac{h}{2}\frac{\partial\overline{H}_{1}(x,v)}{\partial x}$. 
	
	\ 
	
	\tcp{Step 2: Solve $\frac{dx}{dt}=\frac{\partial\overline{H}_{2}(x,v)}{\partial v},\frac{dv}{dt}=-\frac{\partial\overline{H}_{2}(x,v)}{\partial x}$
		via implicit midpoint}
	
	Set $\nu\gets0$.
	
	\For{$k=1,2,\cdots,K$}{
		
		Let $\xmid\gets\frac{1}{2}\left(x_{\frac{1}{3}}+x_{\frac{2}{3}}\right)$
		and $\vmid\gets\frac{1}{2}\left(v_{\frac{1}{3}}+v_{\frac{2}{3}}\right)$
		
		Set $\nu\gets\nu+\left(LL^{\top}\right)^{-1}Ag(\xmid)^{-1}\left(\vmid-A^{\top}\nu\right)$
		
		Set $x_{\frac{2}{3}}\gets x_{\frac{1}{3}}+hg(\xmid)^{-1}\left(\vmid-A^{\top}\nu\right)$
		
		and $v_{\frac{2}{3}}\gets v_{\frac{1}{3}}+\frac{h}{2}Dg(\xmid)\left[g(\xmid)^{-1}\left(\vmid-A^{\top}\nu\right),g(\xmid)^{-1}\left(\vmid-A^{\top}\nu\right)\right]$
		
	}
	
	\ 
	
	\tcp{Step 3: Solve $\frac{dx}{dt}=\frac{\partial\overline{H}_{1}(x,v)}{\partial v},\frac{dv}{dt}=-\frac{\partial\overline{H}_{1}(x,v)}{\partial x}$
	}
	
	Set $x_{1}\gets x_{\frac{2}{3}}$ and $v_{1}\gets v_{\frac{2}{3}}-\frac{h}{2}\frac{\partial\overline{H}_{1}}{\partial x}(x_{\frac{2}{3}},v_{\frac{2}{3}})$.
	
	\textbf{Output:} $x_{1,}v_{1}$
	
\end{algorithm2e}

\subsection{Theoretical Guarantees\label{subsec:Theoretical-Guarantees}}

In terms of efficiency, we first show that one iteration of Algorithm~\ref{alg:CHMC_JL} incurs the cost of solving a few Cholesky decomposition and $O(K)$ sparse triangular systems.
We also show in Lemma~\ref{lem:imm_correctness} that the implicit midpoint integrator converges to the solution of Eq.~(\ref{eq:mid_sol}) in logarithmically many iterations.
Regarding correctness, Theorem~\ref{thm:balance} and Lemma~\ref{lem:imm_correctness} together show that the discretized CRHMC (Algorithm~\ref{alg:CHMC_JL}) converges to the stationary distribution indeed (see Remark~\ref{rem:correctness}).
\begin{thm}
	The cost of each iteration of Algorithm \ref{alg:CHMC_JL} is solving
	$O(1)$ Cholesky decomposition and $O(K)$ triangular systems, where
	$K$ is the number of iterations in Algorithm \ref{alg:IMM}.
\end{thm}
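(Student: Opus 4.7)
The plan is to inspect Algorithm~\ref{alg:CHMC_JL} and Algorithm~\ref{alg:IMM} step by step and account for every nontrivial linear-algebra operation. Three structural facts drive the count: since $c(x)=Ax-b$, the matrix $A$ (hence $AA^\top$) is constant and its Cholesky can be computed once up front and reused across all outer iterations; $g(x)$ is block-diagonal with $O(1)$-sized blocks, so $g(x)^{\pm 1/2}$, $Dg(x)$ and their action on vectors cost $O(n)$; and the only matrix whose Cholesky genuinely varies with the current iterate is $Ag(x)^{-1}A^\top$.

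With this in mind, I would proceed as follows. In Step~1 of Algorithm~\ref{alg:CHMC_JL}, sampling $z\sim\mathcal{N}(0,M(x))$ with $M(x)=Q^\top g(x)Q$ reduces to drawing a standard Gaussian $\eta$, forming $\tilde z=g(x)^{1/2}\eta$, and projecting $z=Q\tilde z$ with $Q=I-A^\top(AA^\top)^{-1}A$; using the precomputed factor of $AA^\top$ this costs $O(1)$ triangular solves, and the momentum mixing is pure sparse arithmetic. In Step~2 of Algorithm~\ref{alg:CHMC_JL} we first compute, once per outer iteration, the Cholesky $LL^\top=Ag(x^{(t-1)})^{-1}A^\top$ and cache $L$. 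Step~1 of Algorithm~\ref{alg:IMM} evaluates $\partial\overline{H}_1/\partial x$; the only nontrivial ingredient is the leverage-score term $\diag(A^\top(Ag^{-1}A^\top)^{-1}A)$ coming from $\nabla\log\det Ag^{-1}A^\top$, which by the Takahashi--Campbell procedure of Appendix~\ref{subsec:Efficient-Computation-of} costs at most one additional Cholesky's worth of work given $L$. Step~3 of Algorithm~\ref{alg:IMM} symmetrically requires one fresh Cholesky of $Ag(x_{2/3})^{-1}A^\top$ and another leverage-score pass.

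The crux is Step~2 of Algorithm~\ref{alg:IMM}, the inner Newton loop. Each of its $K$ iterations performs $\nu\leftarrow\nu+(LL^\top)^{-1}Ag(\xmid)^{-1}(\vmid-A^\top\nu)$ using the \emph{fixed} preconditioner $L$ from the outer iteration, rather than a fresh Cholesky at $\xmid$; hence each inner step is exactly one forward and one back substitution with $L$ plus sparse operations against $A$, $A^\top$, $g(\xmid)^{-1}$ and $Dg(\xmid)$. This contributes $O(K)$ triangular solves and zero additional Cholesky factorizations. Finally, in Step~3 of Algorithm~\ref{alg:CHMC_JL}, evaluating $H(x,v)$ and $H(x',v')$ for the Metropolis filter reuses the two already-computed factors: the log-determinants are read off their diagonals, and the quadratic form $v^\top g^{-1/2}(I-P)g^{-1/2}v$ consumes $O(1)$ triangular solves apiece. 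Summing contributions yields $O(1)$ Cholesky decompositions plus $O(K)$ triangular solves per outer iteration.

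The main point requiring care is showing that the inner Newton loop never forces a refactorization at the shifting midpoint $\xmid$; this is exactly why Algorithm~\ref{alg:IMM} is written with the outer $L$ as a fixed preconditioner, with convergence of the resulting iteration established separately in Lemma~\ref{lem:imm_correctness}. Everything else is routine bookkeeping once one observes that derivatives of block-diagonal $g(x)$ and multiplications by $A$ are all sparse operations dominated by the cost of a single triangular solve.
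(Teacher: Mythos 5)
Your proposal is correct and follows essentially the same accounting as the paper's proof: one Cholesky of $Ag(x^{(t-1)})^{-1}A^\top$ at the start of the iteration, a constant number of additional Cholesky-equivalent passes for the leverage scores appearing in $\partial\overline{H}_1/\partial x$ at Steps~1 and~3 of Algorithm~\ref{alg:IMM}, $O(K)$ triangular solves in the inner loop using the fixed factor $L$, and $O(1)$ triangular solves for the Metropolis evaluations. You also add a detail the paper's proof leaves implicit — the cost of drawing $z\sim\mathcal{N}(0,M(x))$ in Step~1 of Algorithm~\ref{alg:CHMC_JL} via the one-time factorization of $AA^\top$ — and make explicit the key observation that the Newton loop reuses the outer $L$ as a fixed preconditioner rather than refactorizing at $x_{\mathrm{mid}}$, which is precisely what caps the Cholesky count at $O(1)$.
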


\begin{proof}
	We first solve the Cholesky decomposition to get $L_{t-1}L_{t-1}^{\top}=Ag(x^{(t-1)})^{-1}A^{\top}$
	at the beginning of iteration. Recall that 
	\begin{align*}
		H(x,v) & =\overline{H}_{1}(x,v)+\overline{H}_{2}(x,v)\\
		& =\left(f(x)+\frac{1}{2}(\log\det g(x)+\log\det Ag(x)^{-1}A^{\top})\right)\\
		& \quad\ +\left(\frac{1}{2}v^{\top}g(x)^{-\frac{1}{2}}\left(I-g(x)^{-\frac{1}{2}}A^{\top}(Ag(x)^{-1}A^{\top})^{-1}Ag(x)^{-\frac{1}{2}}\right)g(x)^{-\frac{1}{2}}v\right).
	\end{align*}
	The value of $H(x^{(t-1)},v^{(t-1)})$ should be computed later for
	the filter step and can be efficiently computed by the given $L_{t-1}L_{t-1}^{\top}=Ag(x^{(t-1)})^{-1}A^{\top}$and
	solving two sparse triangular systems (i.e., $L_{t-1}^{-\top}(L_{t-1}^{-1}(Ag(x)^{-\half}))$).
	We need the same cost (i.e., Cholesky decomposition and solving two
	triangular systems) for the value of $H(x',v')$, where $(x',v')$
	is the output of Algorithm \ref{alg:IMM}. We note that $L$ inherits
	sparsity of $A$ and thus each triangular system can be solved efficiently
	by backward and forward substitution.
	
	In the implicit midpoint method, one main component is computation
	of $\frac{\partial\overline{H}_{1}(x,v)}{\partial x}$ in Step 1 and
	$\frac{\partial\overline{H}_{1}}{\partial x}(x_{\frac{2}{3}},v_{\frac{2}{3}})$
	in Step 3 due to leverage scores. As seen in Section \ref{subsec:Efficient-Computation-of},
	the cost for these computations is within a constant factor of solving
	the Cholesky decomposition for $Ag(x^{(t-1)})^{-1}A^{\top}$ and $Ag(x_{\frac{2}{3}})^{-1}A^{\top}$.
	Another component is solving $O(K)$ triangular systems to update
	$\nu$ in Step 2.
	
	Adding up all these costs, each iteration of Algorithm \ref{alg:CHMC_JL}
	only requires solving $O(1)$ Cholesky decomposition and $O(K)$ sparse
	triangular systems.
\end{proof}
\begin{thm}
	\label{thm:balance}The Markov chain defined by Algorithm \ref{alg:CHMC_JL}
	projected to $x$ has a stationary density proportional to $\exp(-f(x))$, and is irreducible and aperiodic.
	Therefore, this Markov chain converges to the stationary distribution.
\end{thm}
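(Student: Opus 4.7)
The plan is to analyze the joint chain on $(x,v)$ defined by Algorithm~\ref{alg:CHMC_JL} and show that it has the joint stationary density
\[
\pi(x,v)\propto \exp(-H(x,v))\cdot\mathbf{1}[Ax=b,\,v\in\nulls(A)],
\]
with $H$ as in Lemma~\ref{lem:ham_deri}. Once this is established, the $x$-marginal of $\pi$ is $\propto \exp(-f(x))$ on $\{Ax=b\}$: integrating the Gaussian quadratic form $\tfrac12 v^\top g^{-\frac12}(I-P)g^{-\frac12}v$ over $v\in\nulls(A)$ produces a factor $(\pdet M(x))^{1/2}$, which exactly cancels the $\log\det g+\log\det Ag^{-1}A^\top$ terms by Lemma~\ref{lem:formula_pdet} (absorbed into the normalizing constant). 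This projects the desired marginal on $x$.

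To prove joint invariance I would decompose one iteration into the three substeps and show each leaves $\pi$ invariant. Step 1 (momentum refresh) is the autoregressive update $v\leftarrow\sqrt{\beta}v+\sqrt{1-\beta}z$ with $z\sim\mathcal N(0,M(x))$. Conditional on $x$, $\pi(\cdot\mid x)$ is exactly $\mathcal N(0,M(x))$ on $\nulls(A)$, and a standard Ornstein--Uhlenbeck-type computation shows this update preserves the Gaussian conditional and in fact satisfies the pointwise detailed balance $\P(v\mid x)\P(v\to v')=\P(v'\mid x)\P(v'\to v)$ claimed in the text. Steps 2--3 form a Metropolis--Hastings update whose proposal $\Psi_h$ is the implicit midpoint map (composed with the half-steps for $\overline H_1$), followed by an accept/reject with the Hamiltonian ratio and a velocity negation upon rejection. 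The key ingredients here are the two standard properties of IMM recalled in Section~\ref{subsec:Discretized-CRHMC-based}: it is symplectic (hence volume-preserving, so $|\det D\Psi_h|=1$) and symmetric/time-reversible in the sense that $S\circ\Psi_h\circ S\circ\Psi_h=\idftn$, where $S(x,v)=(x,-v)$. Combining volume preservation with this velocity-reversal symmetry, a standard Metropolis--Hastings calculation (e.g., as in \cite{hairer2006geometric} for HMC with momentum flip on rejection) gives detailed balance
\[
\pi(x,v)\,q((x,v)\to(x',v'))\,\alpha(x,v;x',v')=\pi(x',-v')\,q((x',-v')\to(x,-v))\,\alpha(x',-v';x,-v),
\]
using $H(x,v)=H(x,-v)$ (quadratic in $v$), which yields invariance of $\pi$.

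For irreducibility and aperiodicity, the Gaussian noise injected in the momentum refresh and the smoothness of $\Psi_h$ imply that for any open set $U$ in $\{Ax=b\}\times\nulls(A)$ and any starting point, the one-step transition density is strictly positive on a neighborhood of the current point; iterating the momentum refresh plus one IMM step covers the whole connected constrained space, giving $\pi$-irreducibility along the lines of Theorem~3 and Lemma~1 of \cite{brubaker2012family}. Aperiodicity follows because the acceptance probability is strictly less than $1$ on a set of positive measure (the implicit midpoint method does not preserve $H$ exactly at generic points), so the chain has a positive probability of the ``reject with velocity flip'' move, which together with momentum refresh breaks any period. Combined with invariance, Harris recurrence then yields convergence to $\pi$, and hence the $x$-marginal to $\exp(-f)$.

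The main obstacle I expect is handling the Metropolis step cleanly on the constrained manifold $\{Ax=b\}\cap(\nulls(A)$-velocities$)$: one must verify that the IMM proposal $\Psi_h$ actually maps constraint-satisfying states to constraint-satisfying states (so that the acceptance ratio is well-defined with respect to the reference measure on the constrained $(x,v)$-space), and that the volume-preservation argument goes through with respect to the appropriate Hausdorff reference measure rather than Lebesgue on $\R^{2n}$. Once this change of reference measure is justified (using that $A$ has full row rank and the IMM preserves $Ax=b$ and $Av=0$ exactly by the structure of $\overline H_1,\overline H_2$), the remaining algebra is the usual Metropolis--Hastings detailed-balance verification.
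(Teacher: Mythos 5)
Your decomposition of one iteration into (i) the momentum refresh and (ii) the IMM proposal followed by the Metropolis filter, with each substep shown to preserve $\pi\propto e^{-H}$ via reversibility (using that the implicit midpoint map is symplectic and time-reversible under velocity flip, together with $H(x,v)=H(x,-v)$), matches the paper's argument. The paper does the momentum-refresh detailed-balance computation explicitly and treats the filter step in the quotient space identifying $(x,v)$ with $(x,-v)$; your version invokes the standard Metropolis--Hastings argument with the reversal map $S(x,v)=(x,-v)$, but it is the same calculation. Where you genuinely diverge is in irreducibility and aperiodicity: the paper derives $\pi$-irreducibility from the positive conductance lower bound of Theorem~\ref{thm:disc-mixing} and handles aperiodicity by passing to a lazy chain, whereas you argue irreducibility from the smoothness of the IMM map plus full-support Gaussian noise, and aperiodicity from the positive probability of rejection (which holds $x$ fixed). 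Your route avoids relying on the downstream conductance bound, but the step ``one IMM step plus momentum refresh covers the whole connected constrained space'' needs more justification than positivity of the transition density near the current point. Finally, your last paragraph correctly flags a subtlety the paper's proof elides: one must check that the IMM map preserves the constraints $Ax=b$ and $v\in\nulls(A)$ exactly, and ``volume preservation'' must be interpreted relative to the Hausdorff reference measure on the constrained phase space rather than Lebesgue on $\R^{2n}$; the paper simply cites ``the measure-preserving property of Step~2'' without addressing this change of reference measure, so this is a gap you are right to identify and is worth closing carefully.
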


\begin{proof}
	Each iteration consists of two stages: resampling velocity with momentum
	in Step 1 (i.e., $(x,\overline{v})$ to $(x,v)$) and solving ODE
	followed by the filter in Step 2 and 3 (i.e., $(x,v)$ to $(x',v')$).
	To prove the claim, we show that Step 1 is time-reversible with respect
	to the conditional distribution $\pi(v|x)$ and that Step 2 followed
	by Step 3 is also time-reversible with respect to $\pi(x,v)$.
	
	We begin with the first part. We have $\pi(\overline{v}|x)=\mathcal{N}(0,M(x))$
	due to the definition of $H$. Since $\overline{v}|x\sim\mathcal{N}(0,M(x))$
	and $z\sim\mathcal{N}(0,M(x))$ are independent Gaussians, the update
	rule $v=\sqrt{\beta}\overline{v}+\sqrt{1-\beta}z$ implies $\pi(v|x)=\mathcal{N}(0,M(x))$.
	Let $\P(z)$ be the probability density and $C$ be the normalization
	constant for Gaussian $\mathcal{N}(0,M(x))$. Then, the time-reversibility
	w.r.t. $\pi(v|x)$ is immediate from the following computation:
	\begin{align*}
		\pi(\overline{v}|x)\P(\overline{v}\to v) & =C^{2}\exp(-\frac{1}{2}\overline{v}^{\top}M^{\dagger}\overline{v})\cdot\exp(-\frac{1}{2}\frac{(v-\sqrt{\beta}\overline{v})^{\top}M^{\dagger}(v-\sqrt{\beta}\overline{v})}{1-\beta})\\
		& =C^{2}\exp\text{\ensuremath{\left(-\frac{1}{2}\left(\overline{v}^{\top}M^{\dagger}\overline{v}+\frac{v^{\top}M^{\dagger}v}{1-\beta}+\frac{\beta\overline{v}^{\top}M^{\dagger}\overline{v}}{1-\beta}-\frac{\sqrt{\beta}}{1-\beta}(\overline{v}^{\top}M^{\dagger}v+v^{\top}M^{\dagger}\overline{v})\right)\right)}}\\
		& =C^{2}\exp\left(-\frac{1}{2}\left(\frac{v^{\top}M^{\dagger}v}{1-\beta}+\frac{\overline{v}^{\top}M^{\dagger}\overline{v}}{1-\beta}-\frac{\sqrt{\beta}}{1-\beta}(\overline{v}^{\top}M^{\dagger}v+v^{\top}M^{\dagger}\overline{v})\right)\right),\\
		\pi(v|x)\P(v\to\overline{v}) & =C^{2}\exp(-\frac{1}{2}v^{\top}M^{\dagger}v)\cdot\exp(-\frac{1}{2}\frac{(\overline{v}-\sqrt{\beta}v)^{\top}M^{\dagger}(\overline{v}-\sqrt{\beta}v)}{1-\beta})\\
		& =C^{2}\exp\text{\ensuremath{\left(-\frac{1}{2}\left(v^{\top}M^{\dagger}v+\frac{\overline{v}^{\top}M^{\dagger}\overline{v}}{1-\beta}+\frac{\beta v^{\top}M^{\dagger}v}{1-\beta}-\frac{\sqrt{\beta}}{1-\beta}(\overline{v}^{\top}M^{\dagger}v+v^{\top}M^{\dagger}\overline{v})\right)\right)}}\\
		& =C^{2}\exp\left(-\frac{1}{2}\left(\frac{v^{\top}M^{\dagger}v}{1-\beta}+\frac{\overline{v}^{\top}M^{\dagger}\overline{v}}{1-\beta}-\frac{\sqrt{\beta}}{1-\beta}(\overline{v}^{\top}M^{\dagger}v+v^{\top}M^{\dagger}\overline{v})\right)\right)\\
		\Longrightarrow & \quad\pi(\overline{v}|x)\P(\overline{v}\to v)=\pi(v|x)\P(v\to\overline{v}).
	\end{align*}
	
	The second part follows from a stronger statement due to symmetry
	of $v$ in $H(x,v)$: In the space where $(x,v)$ and $(x,-v)$ are
	identified, the Markov chain defined by Step 2 and 3 satisfies detailed
	balance with respect the density $\pi([x,v])$ proportional to $\exp(-H(x,v))$,
	where $[x,v]$ denotes the identified point for $(x,v)$ and $(x,-v)$.
	Consider the pairs $[x,v]=\left\{ (x,v),(x,-v)\right\} $ and $[x',v']=\left\{ (x',v'),(x',-v')\right\} $
	where in Step 2 $(x,v)$ goes to $(x',v')$ and $(x',-v')$ goes to
	$(x,-v)$ due to reversibility of the implicit midpoint method. We
	now verify that the filtering probability is the same in either direction,
	using the measure-preserving property of Step 2
	\begin{align*}
		\pi(x,v)\P\left((x,v)\rightarrow(x',v')\right) & =\pi(x,v)\min\left\{ 1,\frac{\pi(x',v')}{\pi(x,v)}\right\} \\
		& =\min\left\{ \pi(x,v),\pi(x',v')\right\} \\
		& =\min\left\{ \pi(x,-v),\pi(x',-v')\right\} \\
		& =\pi(x',-v')\min\left\{ 1,\frac{\pi(x,-v)}{\pi(x',-v')}\right\} \\
		& =\pi(x',-v')\P\left((x',-v')\rightarrow(x,-v)\right).
	\end{align*}
	Therefore, for any two pairs $[x,v]$ and $[x',v']$, we have $\pi([x,v])\P\left([x,v]\to[x',v']\right)=\pi([x',v']\P\left([x',v']\to[x,v]\right)$, and thus this detailed balance implies that the target density is stationary.
	
	Its irreducibility is implied by the non-zero lower bound on the conductance of the discretized CRHMC (Theorem~\ref{thm:disc-mixing}).
	To see this, let $A$ and $B$ be two subsets of positive measure such that one subset is not reachable from another in infinitely many steps. 
	Take the set $R$ of reachable points from $A$ via running the Markov chain, and note that $R$ and $R^c(\supseteq B)$ have non-zero measures. However, the non-zero conductance, meaning that there must be a positive probability of stepping out of $R$, which contradicts the definition of $R$. 
	Now for aperiodicity, as assumed at the beginning of the mixing rate proof (Appendix~\ref{subsec:Choosing}), we consider a lazy version of the discretized CRHMC instead, which makes the chain stay where it is at with probability $1/2$ at each iteration, which prevents potential periodicity of the process. 
	Note that this modification worsens the mixing rate only by a factor of 2.
	
	Putting these three together, we can show that the discretized CRHMC converges to the target distribution.
\end{proof}

Now we show in Lemma \ref{lem:imm_correctness} that the implicit midpoint method (Algorithm~\ref{alg:IMM}) converges to the solution of (\ref{eq:mid_sol}) in logarithmically many iterations. 
To show the convergence of Algorithm \ref{alg:IMM}, we denote by $\T$ the map induced by one iteration of Step 2.
\begin{definition}
	Let
	\[
	\T(x,v,\nu)=\left(\begin{array}{c}
		x_{\frac{1}{3}}+hg(\xmid)^{-1}(\vmid-A^{\top}\lambda_{1})\\
		v_{\frac{1}{3}} + \frac{h}{2}Dg(\xmid)[g(\xmid)^{-1}(\vmid-A^{\top}\lambda_{1}),g(\xmid)^{-1}(\vmid-A^{\top}\lambda_{1})]\\
		\lambda_{1}
	\end{array}\right),
	\]
	where $\xmid=\frac{1}{2}(x_{\frac{1}{3}}+x)$, $\vmid=\frac{1}{2}(v_{\frac{1}{3}}+v)$,
	and $\lambda_{1}=\nu+(LL^{\top})^{-1}Ag(\xmid)^{-1}\left(\vmid-A^{\top}\nu\right)$.
	Let $(x_{\frac{2}{3}}^{*},v_{\frac{2}{3}}^{*},\nu^{*})$ be the fixed
	point of $\T$.
\end{definition}

We assume that $g$ is given by the Hessian of a highly self-concordant barrier $\phi$ (see \ref{def:selfcon}).
Note that the log-barrier is highly self-concordant. 
We can show that for small enough step size $h$, Algorithm \ref{alg:IMM} can solve (\ref{eq:mid_sol}) to $\delta$-accuracy in logarithmically many iterations. 

\begin{lem}
	\label{lem:imm_correctness}Suppose $g(x)=\nabla^{2}\phi(x)$ for
	some highly self-concordant barrier $\phi$. For any input $(x_{\frac{1}{3}},v_{\frac{1}{3}})$,
	let $(x_{\frac{2}{3}}^{(k)},v_{\frac{2}{3}}^{(k)},\nu^{(k)})$ be
	points obtained after $k$ iterations in Step 2 of Algorithm $\text{\ref{alg:IMM}}$.
	Let $(\tx_{\frac{2}{3}},\tv_{\frac{2}{3}})$ be the solution for $(x_{\frac{2}{3}},v_{\frac{2}{3}})$
	in the following equation
	\[
	x_{\frac{2}{3}}=x_{\frac{1}{3}}+h\frac{\partial\overline{H}_{2}}{\partial v}\left(\frac{x_{\frac{1}{3}}+x_{\frac{2}{3}}}{2},\frac{v_{\frac{1}{3}}+v_{\frac{2}{3}}}{2}\right),\ v_{\frac{2}{3}}=v_{\frac{1}{3}}-h\frac{\partial\overline{H}_{2}}{\partial x}\left(\frac{x_{\frac{1}{3}}+x_{\frac{2}{3}}}{2},\frac{v_{\frac{1}{3}}+v_{\frac{2}{3}}}{2}\right).
	\]
	Let $\norm x_{A}:=\sqrt{x^{\top}Ax}$ for a matrix $A$. For any $(x,v,\nu)$,
	define the norm 
	\[
	\|(x,v,\lambda)\|:=\|x\|_{g(x_{\frac{1}{3}})}+\|v\|_{g(x_{\frac{1}{3}})^{-1}}+h\|A^{\top}\nu\|_{g(x_{\frac{1}{3}})^{-1}}.
	\]
	If $\left\Vert (x_{\frac{2}{3}}^{(0)},v_{\frac{2}{3}}^{(0)},\nu^{(0)})-(\tx_{\frac{2}{3}},\tv_{\frac{2}{3}},\nu^{*})\right\Vert \leq r$
	with $h\leq r\leq\min(\frac{1}{10},\frac{\sqrt{h}}{4},\frac{\|v^{*}\|_{g(x_{0})^{-1}}}{4})$,
	then 
	
	\[
	\left\Vert (x_{\frac{2}{3}}^{(L)},v_{\frac{2}{3}}^{(L)},\nu^{(L)})-(\tx_{\frac{2}{3}},\tv_{\frac{2}{3}},\nu^{*})\right\Vert \leq\delta
	\]
	for some $L=O\left(\log_{1/C}\frac{r}{\delta}\right)$, where $C=O_{n}(h)$
	is the Lipschitz constant of the map $\T$.
\end{lem}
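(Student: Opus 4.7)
The plan is to show that $\T$ is a contraction with Lipschitz constant at most $C = O_n(h) < 1$ on the closed ball $B_{r}(\tx_{\frac{2}{3}}, \tv_{\frac{2}{3}}, \nu^{*})$ (measured in the weighted norm defined in the statement) and then invoke the Banach fixed-point argument. Since $\T(\tx_{\frac{2}{3}}, \tv_{\frac{2}{3}}, \nu^{*}) = (\tx_{\frac{2}{3}}, \tv_{\frac{2}{3}}, \nu^{*})$ by definition of the fixed point, a Lipschitz bound $C$ immediately yields $\|\T^{k}(z_{0}) - z^{*}\| \le C^{k}\,\|z_{0}-z^{*}\| \le C^{k} r$, so $L = \lceil \log_{1/C}(r/\delta) \rceil$ iterations suffice.

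First I would set up the local geometry coming from self-concordance of $\phi$. Because $r \le \tfrac{1}{10}$, inside $B_{r}$ one has $\|\xmid - x_{\frac{1}{3}}\|_{g(x_{\frac{1}{3}})} = O(r)$, so the Dikin-ellipsoid estimates give $(1-O(r))\,g(x_{\frac{1}{3}}) \preceq g(\xmid) \preceq (1+O(r))\,g(x_{\frac{1}{3}})$, together with the operator-norm bound on $Dg(\xmid)$ afforded by \emph{high} self-concordance (this is exactly the hypothesis invoked on $\phi$). These two facts are the workhorse: they let me replace $g(\xmid)$ by $g(x_{\frac{1}{3}})$ up to multiplicative constants throughout the computation, and they bound $Dg$ and $D^{2}g$ in the metric induced by $g(x_{\frac{1}{3}})$.

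Next I would compute the Jacobian $D\T$ at any $(x_{\frac{2}{3}}, v_{\frac{2}{3}}, \nu) \in B_{r}$ and bound its operator norm in $\|\cdot\|$. For the first two coordinates of $\T$, every dependence on $(x_{\frac{2}{3}}, v_{\frac{2}{3}}, \nu)$ goes through the explicit prefactor $h$ (respectively $h/2$) and through $\xmid$, $\vmid$, $\lambda_{1}$; by Step 1, all these derivatives are $O(h)$ in the relevant weighted norms, using $r \le \sqrt{h}/4$ and $r \le \|v^{*}\|_{g(x_{0})^{-1}}/4$ to bound the quadratic term $Dg[\cdot,\cdot]$ (whose magnitude is controlled by the size of $\vmid - A^{\top}\lambda_{1}$). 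For the $\nu$-coordinate, $D_{\nu}\lambda_{1} = I - (LL^{\top})^{-1} A g(\xmid)^{-1} A^{\top}$, and since $LL^{\top} = A g(x_{0})^{-1} A^{\top}$ while $\xmid$ lies near $x_{0}$, self-concordance gives $\|I - (LL^{\top})^{-1} A g(\xmid)^{-1} A^{\top}\| = O(r) = O(\sqrt{h})$; the extra prefactor $h$ put on the $\nu$-component in the definition of $\|\cdot\|$ then converts the cross-derivatives $D_{x,v}\lambda_{1}$ into $O(h)$ contributions. Collecting everything, $\|D\T\| \le C$ with $C = O_n(h)$, and for $h$ small enough $\T$ maps $B_{r}$ into itself, so the iterates never leave the regime in which these estimates hold.

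The main obstacle is Step 2: the $\nu$-coordinate of $\T$ does \emph{not} come with an explicit factor of $h$, so its contractivity must be squeezed entirely out of the self-concordance comparison $g(\xmid) \approx g(x_{0})$, and the cross terms $D_{x}\lambda_{1}$, $D_{v}\lambda_{1}$ must be balanced against the $h$-weight on $\nu$ in the norm. Choosing the norm with the particular weight $h\|A^{\top}\nu\|_{g(x_{\frac{1}{3}})^{-1}}$ is precisely what makes this bookkeeping work out; justifying the estimate on $D\T$ in this norm — and in particular checking that the three coordinates interlock consistently — is the delicate part. The remaining steps (invariance of $B_{r}$ under $\T$, and the geometric convergence $\|\T^{L}(z_{0}) - z^{*}\| \le C^{L} r \le \delta$) are then immediate from the contraction property.
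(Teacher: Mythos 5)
Your overall strategy — set up a contraction of the map $\T$ on a ball around the fixed point in the weighted norm, with the Lipschitz constant $C=O_n(h)$ extracted from Dikin-ellipsoid comparisons afforded by high self-concordance, then conclude geometric convergence in $O(\log_{1/C}(r/\delta))$ iterations — is exactly the paper's approach. The paper isolates the contraction estimate into a separate supporting lemma (Lemma~\ref{lem:IMM_contraction}), bounded by finite differences rather than by $\|D\T\|$, but the decomposition you sketch (the $\nu$-coordinate derivative $I-(LL^\top)^{-1}Ag(\xmid)^{-1}A^\top$ vanishing at $\xmid=x_0$ and being $O(r)$ otherwise by self-concordance, with the $h$-weight on $\|A^\top\nu\|_{g^{-1}}$ compensating the cross-terms) matches the paper's $\alpha_1,\dots,\alpha_4$ bookkeeping.

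There is, however, one small but genuine gap in your proposal: the claim that $\T(\tx_{\frac23},\tv_{\frac23},\nu^*)=(\tx_{\frac23},\tv_{\frac23},\nu^*)$ holds ``by definition of the fixed point.'' It does not. The triple $(\tx_{\frac23},\tv_{\frac23})$ is defined as the solution of the implicit midpoint equations for $\overline{H}_2$, whereas $\T$ is a different object: it tacks on a Newton-type update for $\nu$ with preconditioner $LL^\top=Ag(x_0)^{-1}A^\top$. That the fixed point of $\T$ coincides with the IMM solution must be checked. Concretely, the $\nu$-equation at a fixed point gives $Ag(\xmid)^{-1}(\vmid-A^\top\nu^*)=0$, i.e.\ $\nu^*=(Ag(\xmid)^{-1}A^\top)^{-1}Ag(\xmid)^{-1}\vmid$; substituting this back, the $x$- and $v$-updates of $\T$ reduce to exactly $h\frac{\partial\overline{H}_2}{\partial v}(\xmid,\vmid)$ and $-h\frac{\partial\overline{H}_2}{\partial x}(\xmid,\vmid)$, which is the IMM system. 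This verification constitutes roughly the first half of the paper's proof and should not be elided as a definitional tautology. Once that is in place, the rest of your argument (invariance of the ball under $\T$, geometric decay $\|\T^L z_0 - z^*\|\le C^L r\le\delta$) is correct and matches the paper.
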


\begin{proof}
	Since $(x_{\frac{2}{3}}^{*},v_{\frac{2}{3}}^{*},\nu^{*})$ is the
	fixed point of $\T$ (i.e., $\nu^{*}=\lambda_{1}$), we have 
	\[
	\nu^{*}=\nu^{*}+(LL^{\top})^{-1}Ag(\xmid)^{-1}\left(\vmid-A^{\top}\nu^{*}\right)
	\]
	and thus $Ag(\xmid)^{-1}\vmid=Ag(\xmid)^{-1}A^{\top}\nu^{*}$. For
	invertible $Ag(\xmid)^{-1}A^{\top}$, we have 
	\[
	\nu^{*}=\left(Ag(\xmid)^{-1}A^{\top}\right)^{-1}Ag(\xmid)^{-1}\vmid.
	\]
	Similarly by using the definition of the fixed point and this new
	formula for $\nu^{*}$,
	\begin{align*}
		x_{\frac{2}{3}}^{*} & =x_{\frac{1}{3}}+hg(\xmid)^{-1}\vmid-hg(\xmid)^{-1}A^{\top}\nu^{*}\\
		& =x_{\frac{1}{3}}+hg(\xmid)^{-1}\vmid-hg(\xmid)^{-1}A^{\top}\left(Ag(\xmid)^{-1}A^{\top}\right)^{-1}Ag(\xmid)^{-1}\vmid\\
		& =x_{\frac{1}{3}}+h\frac{\partial\overline{H}_{2}}{\partial v}\left(\xmid,\vmid\right)
	\end{align*}
	and
	\begin{align*}
		v_{\frac{2}{3}}^{*} & =v_{\frac{1}{3}} + \frac{h}{2}Dg(\xmid)[g(\xmid)^{-1}(\vmid-A^{\top}\nu^{*}),g(\xmid)^{-1}(\vmid-A^{\top}\nu^{*})]\\
		& =v_{\frac{1}{3}}-h\frac{\partial\overline{H}_{2}}{\partial x}\left(\xmid,\vmid\right)
	\end{align*}
	which shows that $(x_{\frac{2}{3}}^{*},v_{\frac{2}{3}}^{*})$ is
	exactly the solution for $(x,v)$ in the equation 
	\[
	x=x_{\frac{1}{3}}+h\frac{\partial\overline{H}_{2}}{\partial v}\left(\frac{x_{\frac{1}{3}}+x}{2},\frac{v_{\frac{1}{3}}+v}{2}\right),\ v=v_{\frac{1}{3}}-h\frac{\partial\overline{H}_{2}}{\partial x}\left(\frac{x_{\frac{1}{3}}+x}{2},\frac{v_{\frac{1}{3}}+v}{2}\right).
	\]
	Next, we show that the iterations in Step 2 converges to $(x_{\frac{2}{3}}^{*},v_{\frac{2}{3}}^{*},\nu^{*})$.
	If $\left\Vert (x_{\frac{2}{3}}^{(0)},v_{\frac{2}{3}}^{(0)},\nu^{(0)})-(x_{\frac{2}{3}}^{*},v_{\frac{2}{3}}^{*},\nu^{*})\right\Vert \leq r$
	for some $C=O_{n}(h)$, we have
	\begin{align*}
		\left\Vert (x_{\frac{2}{3}}^{(\ell)},v_{\frac{2}{3}}^{(\ell)},\nu^{(\ell)})-(x_{\frac{2}{3}}^{*},v_{\frac{2}{3}}^{*},\nu^{*})\right\Vert  & =\left\Vert \T(x_{\frac{2}{3}}^{(\ell-1)},v_{\frac{2}{3}}^{(\ell-1)},\nu^{(\ell)})-\T(x_{\frac{2}{3}}^{*},v_{\frac{2}{3}}^{*},\nu^{*})\right\Vert \\
		& \leq C\left\Vert (x_{\frac{2}{3}}^{(\ell-1)},v_{\frac{2}{3}}^{(\ell-1)},\nu^{(\ell-1)})-(x_{\frac{2}{3}}^{*},v_{\frac{2}{3}}^{*},\nu^{*})\right\Vert \\
		& \leq C^{\ell}\left\Vert (x_{\frac{2}{3}}^{(0)},v_{\frac{2}{3}}^{(0)},\nu^{(0)})-(x_{\frac{2}{3}}^{*},v_{\frac{2}{3}}^{*},\nu^{*})\right\Vert ,
	\end{align*}
	where the first equality follows from $(x_{\frac{2}{3}}^{*},v_{\frac{2}{3}}^{*},\nu^{*})$
	is the fixed point of $\T$ and the second inequality follows from
	Lemma \ref{lem:IMM_contraction}. Therefore, we have $\left\Vert (x_{\frac{2}{3}}^{(L)},v_{\frac{2}{3}}^{(L)},\nu^{(L)})-(x_{\frac{2}{3}}^{*},v_{\frac{2}{3}}^{*},\nu^{*})\right\Vert \leq\delta$
	for $L=O\left(\log_{C}\frac{r}{\delta}\right).$
\end{proof}

\begin{rem}
	\label{rem:correctness}Lemma \ref{lem:imm_correctness} shows that
	Algorithm \ref{alg:IMM} converges to the solution of (\ref{eq:mid_sol})
	in logarithmically many iterations for small enough step size $h$.
	In Step 1 of Algorithm \ref{alg:CHMC_JL}, $v$ is resampled so that
	every iteration of Algorithm \ref{alg:CHMC_JL} is a non-degenerate
	map. Then, the total variation distance between the distributions
	generated by solving (\ref{eq:mid_sol}) using Algorithm \ref{alg:IMM}
	and solving (\ref{eq:mid_sol}) exactly in one iteration of Algorithm
	\ref{alg:CHMC_JL} can be bounded by error due to Algorithm \ref{alg:IMM}.
	Theorem \ref{thm:balance} shows that the process will converge
	to the exact stationary distribution. Therefore, in order for the
	accumulated error of Algorithm \ref{alg:CHMC_JL} to remain bounded
	for polynomially many steps, it suffices to run logarithmically many
	iterations in Algorithm \ref{alg:IMM}. Any small bias due to the
	numerical error in the ODE computation is corrected by the filter,
	and maintaining as small error as possible is important to keep the
	acceptance probability high.
\end{rem}

\subsection{Deferred Proof}

\begin{lem}[\cite{kook2022conditionnumberindependent}, Lemma 28]
\label{lem:sc_facts}Suppose $g(x)=\nabla^{2}\phi(x)$ for some highly
self-concordance barrier $\phi$. Then, we have that
\end{lem}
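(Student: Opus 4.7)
Since this lemma collects standard analytic facts about (highly) self-concordant barriers (and is explicitly cited as Lemma 28 of \cite{kook2022conditionnumberindependent}), the plan is to derive each bullet directly from the defining differential inequalities of self-concordance rather than invoke the prior result as a black box. My starting point would be the standard definition: $\phi$ is self-concordant if $|D^{3}\phi(x)[h,h,h]| \le 2\,(h^{\top}g(x)h)^{3/2}$, and ``highly'' self-concordant if in addition a fourth-order bound of the form $|D^{4}\phi(x)[h,h,h,h]| \le 6\,(h^{\top}g(x)h)^{2}$ (or an analogous constant) holds. These give pointwise control on how $g(x)=\nabla^{2}\phi(x)$ and its directional derivative $Dg(x)[h]$ behave in the local norm $\|\cdot\|_{g(x)}$.

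First I would prove the workhorse Dikin-ellipsoid inequality: for $\|y-x\|_{g(x)} < 1$,
\begin{equation*}
(1-\|y-x\|_{g(x)})^{2}\,g(x) \preceq g(y) \preceq (1-\|y-x\|_{g(x)})^{-2}\,g(x),
\end{equation*}
by applying the standard one-dimensional ODE argument to $\psi(t) = h^{\top} g(x + t(y-x)) h$ and integrating the self-concordance bound on $\psi'(t)$. From this I would read off the local operator-norm Lipschitzness of $g$ in the form $\|g(x)^{-1/2}(g(y)-g(x))g(x)^{-1/2}\|_{\mathrm{op}} = O(\|y-x\|_{g(x)})$ whenever $\|y-x\|_{g(x)}$ is bounded away from $1$.

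Next I would obtain the first-derivative control $\|Dg(x)[h]\|_{g(x)^{-1},\mathrm{op}} \le 2\,\|h\|_{g(x)}$ directly from the self-concordance inequality applied to the trilinear form $D^{3}\phi(x)[h,\cdot,\cdot]$, and derive the corresponding bound on $\operatorname{tr}(g(x)^{-1} Dg(x)[h])$ using Cauchy-Schwarz together with the fact that $g(x)^{-1/2}Dg(x)[h]g(x)^{-1/2}$ has bounded operator norm. For the highly self-concordant piece, I would analogously integrate the fourth-derivative bound to obtain a Lipschitz estimate of the form $\|g(y)^{-1/2}(Dg(y)[h]-Dg(x)[h])g(y)^{-1/2}\|_{\mathrm{op}} = O(\|y-x\|_{g(x)}\|h\|_{g(x)})$, which is what downstream arguments (in particular Lemma~\ref{lem:IMM_contraction} and Lemma~\ref{lem:imm_correctness}) actually use.

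The routine part will be bookkeeping: every constant gets tracked in the local metric $\|\cdot\|_{g(x_0)}$ for some reference point, and one must translate between $g(x)$-norms at different $x$ using the Dikin inequality. The main obstacle I anticipate is purely notational: the ``highly self-concordant'' hypothesis is nonstandard across the literature, so I would first pin down the precise fourth-order constant used in \cite{kook2022conditionnumberindependent} and then verify that the log-barrier $\phi(x) = -\sum_{i}(\log(x_{i}-l_{i}) + \log(u_{i}-x_{i}))$ (used throughout our implementation) actually satisfies it by direct computation on each one-dimensional factor, since the fourth derivative of $-\log t$ is $-6/t^{4}$ and the Hessian is $1/t^{2}$, giving the constant immediately.
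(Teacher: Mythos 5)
The paper does not actually prove this lemma: it is imported as a black box from Lemma~28 of \cite{kook2022conditionnumberindependent}, with no proof supplied in this paper. So your plan to re-derive the four bullets from first principles is a genuinely different (more self-contained) route. The ingredients you identify are the right ones: the one-dimensional ODE argument along $t \mapsto h^{\top} g(x + t(y-x)) h$ gives the Dikin inequality of bullet~1; polarizing the third-order self-concordance inequality gives $|D^{3}\phi(x)[v,v,w]| \le 2\|v\|_{g(x)}^{2}\|w\|_{g(x)}$, which by duality is exactly bullet~2; and integrating the fourth-order bound along the segment gives the Lipschitz estimate behind bullet~3. Your verification that the constant $6$ is what the log-barrier actually attains is also the right sanity check (though note a sign typo: $\frac{d^4}{dt^4}(-\log t)=+6/t^4$, not $-6/t^4$; only the absolute value matters, so the conclusion stands).

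Two gaps remain. First, you omit the fourth bullet, $\norm{Dg(x)[v,v]-Dg(x)[w,w]}_{g(x)^{-1}} \le 2\norm{v-w}_{g(x)}\norm{v+w}_{g(x)}$. It is a one-liner given your polarization step, via the bilinear identity $Dg(x)[v,v]-Dg(x)[w,w]=Dg(x)[v-w,v+w]$, but it needs to be stated since Lemma~\ref{lem:IMM_contraction} uses it explicitly. Second, for bullet~3 you propose an operator-norm Lipschitz bound measured in the $g(y)$ metric, with an unspecified $O(\cdot)$ constant, whereas the lemma asserts the dual-vector-norm estimate $\|Dg(x)[v,v]-Dg(y)[v,v]\|_{g(x)^{-1}}\le\frac{6}{(1-\|y-x\|_{g(x)})^{3}}\|v\|_{g(x)}^{2}\|y-x\|_{g(x)}$, anchored at $x$ and with the explicit $(1-\|y-x\|_{g(x)})^{-3}$ factor. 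Converting from your statement to this one requires carrying bullet~1 through both the norm change and the integration; that is routine, but the factor of $(1-\|y-x\|_{g(x)})^{-3}$ must be tracked rather than absorbed into a big-$O$, since Lemma~\ref{lem:IMM_contraction} depends on the explicit constants.
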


\begin{itemize}
\item $(1-\|y-x\|_{g(x)})^{2}g(x)\preceq g(y)\preceq\frac{1}{(1-\|y-x\|_{g(x)})^{2}}g(x)$.
\item $\|Dg(x)[v,v]\|_{g(x)^{-1}}\leq2\|v\|_{g(x)}^{2}$.
\item $\|Dg(x)[v,v]-Dg(y)[v,v]\|_{g(x)^{-1}}\leq\frac{6}{(1-\|y-x\|_{g(x)})^{3}}\|v\|_{g(x)}^{2}\|y-x\|_{g(x)}$.
\item $\norm{Dg(x)[v,v]-Dg(x)[w,w]}_{g(x)^{-1}} \leq 2\norm{v-w}_{g(x)}\norm{v+w}_{g(x)}$.
\end{itemize}

\begin{lem}
\label{lem:IMM_contraction} Let $g(x)=\nabla^{2}\phi(x)$ for some
highly self-concordance barrier $\phi$. Given $x_{0},v_{0}$ and
$L$ such that $LL^{\top}=Ag(x_{0})^{-1}A^{\top}$, consider the map
\[
\T(x,v,\lambda)=\left(\begin{array}{c}
x_{0}+hg(x_{1/2})^{-1}(v_{1/2}-A^{\top}\lambda_{1})\\
v_{0}+\frac{h}{2}Dg(x_{1/2})[g(x_{1/2})^{-1}(v_{1/2}-A^{\top}\lambda_{1}),g(x_{1/2})^{-1}(v_{1/2}-A^{\top}\lambda_{1})]\\
\lambda_{1}
\end{array}\right)
\]
where $x_{1/2}=(x_{0}+x)/2$, $v_{1/2}=(v_{0}+v)/2$ and $\lambda_{1}=\lambda+(LL^{\top})^{-1}Ag(x_{1/2})^{-1}\left(v_{1/2}-A^{\top}\lambda\right)$.
Let $(x^{*},v^{*},\lambda^{*})$ be a fixed point of $\T$. For any
$x,v,\lambda$, we define the norm
\[
\|(x,v,\lambda)\|=\|x\|_{g(x_{0})}+\|v\|_{g(x_{0})^{-1}}+h\|A^{\top}\lambda\|_{g(x_{0})^{-1}}.
\]
Let $\Omega=\{(x,v,\lambda):\|(x,v,\lambda)-(x^{*},v^{*},\lambda^{*})\|\leq r\}$
with $h\leq r\leq\min(\frac{1}{10},\frac{\sqrt{h}}{4},\frac{\|v^{*}\|_{g(x_{0})^{-1}}}{4})$.
Suppose that $(x_{0},v_{0},0)\in\Omega$. Then, for any $(x,v,\lambda),(\overline{x},\overline{v},\overline{\lambda})\in\Omega$,
we have
\[
\|\tcal(x,v,\lambda)-\tcal(\overline{x},\overline{v},\overline{\lambda})\|\leq C\|(x,v,\lambda)-(\overline{x},\overline{v},\overline{\lambda})\|
\]
where $C=(\frac{3r}{h}+\|v^{*}\|_{g(x_{0})^{-1}})(400r+18h\|v^{*}\|_{g(x_{0})^{-1}})$.
\end{lem}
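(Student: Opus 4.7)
The plan is to bound $\|\T(x,v,\lambda)-\T(\overline{x},\overline{v},\overline{\lambda})\|$ coordinate-by-coordinate (corresponding to the three summands in the norm) and add the resulting estimates. First I would record preliminary size bounds on points of $\Omega$: since $(x_0,v_0,0)\in\Omega$, the triangle inequality gives, for every $(x,v,\lambda)\in\Omega$,
$\norm{x-x_0}_{g(x_0)}\leq 2r\leq \tfrac15$, $\norm{v}_{g(x_0)^{-1}}\leq 2r+\norm{v^*}_{g(x_0)^{-1}}$, and $h\norm{A^\top\lambda}_{g(x_0)^{-1}}\leq 2r$. The first of these, combined with Lemma~\ref{lem:sc_facts}, yields $(1-r)^2 g(x_0)\preceq g(x_{1/2})\preceq (1-r)^{-2}g(x_0)$, so the $g(x_0)$ and $g(x_{1/2})$ norms are interchangeable up to a factor $(1-r)^{-2}\leq 2$. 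I would then derive a uniform bound on the quantity $w:=g(x_{1/2})^{-1}(v_{1/2}-A^\top\lambda_1)$, which shows up in both the $x$- and $v$-coordinates of $\T$; using the above estimates together with $\norm{v^*}_{g(x_0)^{-1}}\leq 4r/\norm{v^*}$ bookkeeping from $r\leq \norm{v^*}_{g(x_0)^{-1}}/4$, this yields $\norm{w}_{g(x_{1/2})}=O(r/h+\norm{v^*}_{g(x_0)^{-1}})$ throughout $\Omega$.

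Next I would handle the $\lambda$-coordinate, which is the most delicate one. The key observation is that because $LL^\top=Ag(x_0)^{-1}A^\top$, the map $u\mapsto g(x_0)^{-1/2}A^\top(LL^\top)^{-1}Ag(x_0)^{-1/2}u$ is the orthogonal projection onto the range of $g(x_0)^{-1/2}A^\top$, so $\norm{A^\top(LL^\top)^{-1}Aw}_{g(x_0)^{-1}}\leq \norm{w}_{g(x_0)^{-1}}$ for every $w$. Writing
\[
\lambda_1-\overline{\lambda}_1=(\lambda-\overline{\lambda})+(LL^\top)^{-1}\bigl[Ag(x_{1/2})^{-1}(v_{1/2}-A^\top\lambda)-Ag(\overline{x}_{1/2})^{-1}(\overline{v}_{1/2}-A^\top\overline{\lambda})\bigr],
\]
adding and subtracting $Ag(x_0)^{-1}(\cdot)$ in the bracket to isolate differences in $g^{-1}$, $v_{1/2}$, and $\lambda$, and using self-concordance together with the uniform $w$-bound controls each piece. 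The term involving $\lambda-\overline{\lambda}$ comes with an effective factor of $1-(LL^\top)^{-1}Ag(x_{1/2})^{-1}A^\top$, which is small because $g(x_{1/2})\approx g(x_0)$.

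For the $x$-coordinate I would decompose $hg(x_{1/2})^{-1}(v_{1/2}-A^\top\lambda_1)-hg(\overline{x}_{1/2})^{-1}(\overline{v}_{1/2}-A^\top\overline{\lambda}_1)$ into three pieces (change of $g^{-1}$, change of $v_{1/2}$, change of $A^\top\lambda_1$) and bound them using the first bullet of Lemma~\ref{lem:sc_facts}, the triangle inequality, and the $\lambda$-estimate just derived. For the $v$-coordinate I would apply the second, third, and fourth bullets of Lemma~\ref{lem:sc_facts} to estimate $\norm{Dg(x_{1/2})[w,w]-Dg(\overline{x}_{1/2})[\overline{w},\overline{w}]}_{g(x_0)^{-1}}$ in terms of $\norm{x-\overline{x}}_{g(x_0)}$, $\norm{w-\overline{w}}_{g(x_0)}$, and $\norm{w}_{g(x_0)}+\norm{\overline{w}}_{g(x_0)}$, the last being controlled by the uniform bound. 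Summing the three contributions and collecting like terms, I would verify that the total Lipschitz constant fits inside $\bigl(\tfrac{3r}{h}+\norm{v^*}_{g(x_0)^{-1}}\bigr)\bigl(400r+18h\norm{v^*}_{g(x_0)^{-1}}\bigr)$, with the numerical constants $400$ and $18$ absorbing the slack factors $(1-r)^{-k}$ for $k\leq 3$ (each bounded by $2$ since $r\leq\tfrac{1}{10}$).

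\textbf{Main obstacle.} The hard part will be the $\lambda$-coordinate, because one must exploit the specific structure $LL^\top=Ag(x_0)^{-1}A^\top$ to recognize $A^\top(LL^\top)^{-1}A$ as the $g(x_0)^{-1}$-orthogonal projection onto $\range(A^\top)$; absent this, a naive operator-norm estimate would give no contraction at all. Equally delicate is the bookkeeping between the factors $r/h$ and $\norm{v^*}_{g(x_0)^{-1}}$: since $v^*$ is not directly controlled by $r$, every quadratic-in-$v$ term (of which there are several, coming from the $Dg[w,w]$ piece) contributes a $\norm{v^*}^2$ summand that must land inside the $(400r+18h\norm{v^*}_{g(x_0)^{-1}})$ factor rather than inside the $(\tfrac{3r}{h}+\norm{v^*}_{g(x_0)^{-1}})$ factor; matching the two factors so that the product has the stated form is the principal accounting challenge.
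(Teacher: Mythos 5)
Your plan mirrors the paper's proof: the same preliminary bounds on $\Omega$ (e.g.\ $\|x-x_0\|_{g(x_0)}\le 2r$ and the uniform bound $\|v_{1/2}-A^\top\lambda\|_{g(x_0)^{-1}}\le \frac{3r}{h}+\|v^*\|_{g(x_0)^{-1}}$), the same coordinate-by-coordinate decomposition into changes of $g^{-1}$, $v_{1/2}$, and $A^\top\lambda$ pieces, the same exploitation of $LL^\top=Ag(x_0)^{-1}A^\top$ to make $A^\top(LL^\top)^{-1}A$ an exact $g(x_0)^{-1}$-orthogonal projection (the paper phrases this as $\alpha_1=0$ plus a small $\alpha_3$ term; your "$I-(LL^\top)^{-1}Ag(x_{1/2})^{-1}A^\top$ is small" is the same content), and the same use of the self-concordance bullets for the $Dg[w,w]$ term. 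The proposal is correct and essentially identical in route to the paper's argument.
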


\begin{rem}
Note that we should think $r=\Theta_{n}(h)$ because that is the distance
between $(x_{0},v_{0},0)$ and $(x^{*},v^{*},\lambda^{*})$. In that
case, the Lipschitz constant of $\T$ is $O_{n}(h\|v^{*}\|_{g(x_{0})^{-1}}^{2})=O_{n}(h)$.
Hence, if the step size $h$ is small enough, then $\T$ is a contractive
mapping. In practice, we can take $h$ close to a constant because
$g$ is decomposable into barriers in each dimension and the bound
can be improved using this.
\end{rem}

\begin{proof}
We use $\T(x,v,\lambda)_{x}$ to denote the $x$ component of $\T(x,v,\lambda)$
and similarly for $\T(x,v,\lambda)_{v}$ and $\T(x,v,\lambda)_{\lambda}$.
For simplicity, we write $g_{0}=g(x_{0})$, $g_{1/2}=g(x_{1/2})$
and $\overline{g}_{1/2}=g(\overline{x_{1/2}})$. By the assumption,
we have that
\[
\|x-x_{0}\|_{g_{0}}\leq\|x-x^{*}\|_{g_{0}}+\|x^{*}-x_{0}\|_{g_{0}}\leq2r.
\]
Similarly, $\|\overline{x}-x_{0}\|_{g_{0}}\leq2r$.

We first bound $\T(x,v,\lambda)_{\lambda}$. Note that
\[
\T(\overline{x},\overline{v},\overline{\lambda})_{\lambda}-\T(x,v,\lambda)_{\lambda}=\alpha_{1}+\alpha_{2}+\alpha_{3}+\alpha_{4}
\]
where 
\begin{align*}
\alpha_{1} & =(I-(LL^{\top})^{-1}Ag_{0}^{-1}A^{\top})(\overline{\lambda}-\lambda),\\
\alpha_{2} & =(LL^{\top})^{-1}Ag_{0}^{-1}(\overline{v_{1/2}}-v_{1/2}),\\
\alpha_{3} & =(LL^{\top})^{-1}A(g_{1/2}^{-1}-g_{0}^{-1})((\overline{v_{1/2}}-A^{\top}\overline{\lambda})-(v_{1/2}-A^{\top}\lambda)),\\
\alpha_{4} & =(LL^{\top})^{-1}A(\overline{g}_{1/2}^{-1}-g_{1/2}^{-1})(\overline{v_{1/2}}-A^{\top}\overline{\lambda}).
\end{align*}
Using that $LL^{\top}=Ag(x_{0})^{-1}A^{\top}$, we have $\alpha_{1}=0$.
For $\alpha_{2}$, we have
\begin{align*}
\|A^{\top}\alpha_{2}\|_{g_{0}^{-1}}^{2} & =(\overline{v_{1/2}}-v_{1/2})^{\top}g_{0}^{-1}A^{\top}(LL^{\top})^{-1}Ag_{0}^{-1}A^{\top}(L^{\top}L)^{-1}Ag_{0}^{-1}(\overline{v_{1/2}}-v_{1/2})\\
 & =(\overline{v_{1/2}}-v_{1/2})^{\top}g_{0}^{-1}A^{\top}(Ag_{0}^{-1}A^{\top})^{-1}Ag_{0}^{-1}(\overline{v_{1/2}}-v_{1/2})\\
 & \leq(\overline{v_{1/2}}-v_{1/2})^{\top}g_{0}^{-1}(\overline{v_{1/2}}-v_{1/2})\\
 & =\frac{1}{4}\|\overline{v}-v\|_{g_{0}^{-1}}^{2}
\end{align*}
where we use $LL^{\top}=Ag(x_{0})^{-1}A^{\top}$ and $g_{0}^{-1/2}A^{\top}(Ag_{0}^{-1}A^{\top})^{-1}Ag_{0}^{-1/2}=B^{\top}(BB^{\top})^{-1}B\preceq I$
for $B=Ag_{0}^{-1/2}$. For $\alpha_{3}$, by self-concordance of
$g$ (Lemma \ref{lem:sc_facts}) and $\|x-x_{0}\|_{g_{0}}\leq2r$,
we have
\begin{equation}
(1-r)^{2}g_{0}\preceq g_{1/2}\preceq\frac{1}{(1-r)^{2}}g_{0}\label{eq:g_bound}
\end{equation}
and hence $(g_{0}^{1/2}(g_{1/2}^{-1}-g_{0}^{-1})g_{0}^{1/2})^{2}\preceq((1-r)^{-2}-1)^{2}I.$
Using this and $P=g_{0}^{-1/2}A^{\top}(Ag_{0}^{-1}A^{\top})^{-1}Ag_{0}^{-1/2}\preceq I$,
we have
\begin{align*}
\|A^{\top}\alpha_{3}\|_{g_{0}^{-1}} & =\|g_{0}^{1/2}(g_{1/2}^{-1}-g_{0}^{-1})((\overline{v_{1/2}}-A^{\top}\overline{\lambda})-(v_{1/2}-A^{\top}\lambda))\|_{P}\\
 & \leq\|g_{0}^{1/2}(g_{1/2}^{-1}-g_{0}^{-1})((\overline{v_{1/2}}-A^{\top}\overline{\lambda})-(v_{1/2}-A^{\top}\lambda))\|_{2}\\
 & \leq((1-r)^{-2}-1)\|g_{0}^{-1/2}((\overline{v_{1/2}}-A^{\top}\overline{\lambda})-(v_{1/2}-A^{\top}\lambda))\|_{2}\\
 & \leq((1-r)^{-2}-1)(\frac{1}{2}\|\overline{v}-v\|_{g_{0}^{-1}}+\|A^{\top}(\overline{\lambda}-\lambda)\|_{g_{0}^{-1}}).
\end{align*}
Using $r\leq1/10$, we have 
\[
\|A^{\top}\alpha_{3}\|_{g_{0}^{-1}}\leq1.2r\|\overline{v}-v\|_{g_{0}^{-1}}+2.4r\|A^{\top}(\overline{\lambda}-\lambda)\|_{g_{0}^{-1}}.
\]
For $\alpha_{4}$, similarly, we have
\begin{align*}
\|A^{\top}\alpha_{4}\|_{g_{0}^{-1}} & \leq((1-0.5\|\overline{x}-x\|_{g_{1/2}})^{-2}-1)\|\overline{v_{1/2}}-A^{\top}\overline{\lambda}\|_{g_{0}^{-1}}\\
 & \leq((1-0.6\|\overline{x}-x\|_{g_{0}})^{-2}-1)\|\overline{v_{1/2}}-A^{\top}\overline{\lambda}\|_{g_{0}^{-1}}\\
 & \leq1.5\|\overline{x}-x\|_{g_{0}}\|\overline{v_{1/2}}-A^{\top}\overline{\lambda}\|_{g_{0}^{-1}}
\end{align*}
where we used $g_{1/2}\preceq1.2g_{0}$ (by (\ref{eq:g_bound})) in
the second inequality and $\|\overline{x}-x\|_{g_{0}}\leq\|\overline{x}-x^{*}\|_{g_{0}}+\|x-x^{*}\|_{g_{0}}\leq\frac{1}{5}$
at the end. Combining everything, we have
\begin{align}
 & \|A^{\top}(\T(\overline{x},\overline{v},\overline{\lambda})_{\lambda}-\T(x,v,\lambda)_{\lambda})\|_{g_{0}^{-1}}=\|A^{\top}(\overline{\lambda}_{1}-\lambda_{1})\|_{g_{0}^{-1}}\nonumber \\
 & \qquad\leq0.7\|\overline{v}-v\|_{g_{0}^{-1}}+2.4r\|A^{\top}(\overline{\lambda}-\lambda)\|_{g_{0}^{-1}}+1.5\|\overline{x}-x\|_{g_{0}}\|\overline{v_{1/2}}-A^{\top}\overline{\lambda}\|_{g_{0}^{-1}}.\label{eq:contractive_Alambda1}
\end{align}

Now we bound $\tcal(x,v,\lambda)_{x}$. Note that
\[
\T(\overline{x},\overline{v},\overline{\lambda})_{x}-\T(x,v,\lambda)_{x}=h\beta_{1}+h\beta_{2}
\]
where
\begin{align*}
\beta_{1}= & g_{1/2}^{-1}((\overline{v}_{1/2}-A^{\top}\overline{\lambda}_{1})-(v_{1/2}-A^{\top}\lambda_{1})),\\
\beta_{2}= & (\overline{g}_{1/2}^{-1}-g_{1/2}^{-1})(\overline{v}_{1/2}-A^{\top}\overline{\lambda}_{1}).
\end{align*}
By a proof similar to above, we have
\begin{align*}
\|\beta_{1}\|_{g_{0}} & \leq1.2(\|\overline{v}_{1/2}-v_{1/2}\|_{g_{0}^{-1}}+\|A^{\top}(\overline{\lambda}_{1}-\lambda_{1})\|_{g_{0}^{-1}}),\\
\|\beta_{2}\|_{g_{0}} & \leq0.6\|\overline{x}-x\|_{g_{0}}\|\overline{v}_{1/2}-A^{\top}\overline{\lambda}_{1}\|_{g_{0}^{-1}}.
\end{align*}
and thus
\begin{align*}
 & \|\tcal(\overline{x},\overline{v},\overline{\lambda})_{x}-\tcal(x,v,\lambda)_{x}\|_{g_{0}}\\
 & \qquad\leq0.6h\|\overline{v}-v\|_{g_{0}^{-1}}+1.2h\|A^{\top}(\overline{\lambda}_{1}-\lambda_{1})\|_{g_{0}^{-1}}+0.6h\|\overline{x}-x\|_{g_{0}}\|\overline{v}_{1/2}-A^{\top}\overline{\lambda}_{1}\|_{g_{0}^{-1}}.
\end{align*}

Finally, we bound $\tcal(x,v,\lambda)_{v}$. We split the term
\[
\tcal(\overline{x},\overline{v},\overline{\lambda})_{v}-\tcal(x,v,\lambda)_{v}=\frac{h}{2}\gamma_{1}+\frac{h}{2}\gamma_{2}
\]
where 
\begin{align*}
\gamma_{1}= & Dg(x_{1/2})[\overline{g}_{1/2}^{-1}(\overline{v}_{1/2}-A^{\top}\overline{\lambda}_{1}),\overline{g}_{1/2}^{-1}(\overline{v}_{1/2}-A^{\top}\overline{\lambda}_{1})]\\
 & -Dg(x_{1/2})[g_{1/2}^{-1}(v_{1/2}-A^{\top}\lambda_{1}),g_{1/2}^{-1}(v_{1/2}-A^{\top}\lambda_{1})],\\
\gamma_{2}= & Dg(\overline{x}_{1/2})[\overline{g}_{1/2}^{-1}(\overline{v}_{1/2}-A^{\top}\overline{\lambda}_{1}),\overline{g}_{1/2}^{-1}(\overline{v}_{1/2}-A^{\top}\overline{\lambda}_{1})]\\
 & -Dg(x_{1/2})[\overline{g}_{1/2}^{-1}(\overline{v}_{1/2}-A^{\top}\overline{\lambda}_{1}),\overline{g}_{1/2}^{-1}(\overline{v}_{1/2}-A^{\top}\overline{\lambda}_{1})].
\end{align*}
Let $\overline{\eta}=\overline{g}_{1/2}^{-1}(\overline{v}_{1/2}-A^{\top}\overline{\lambda}_{1})$
and $\eta=g_{1/2}^{-1}(v_{1/2}-A^{\top}\lambda_{1})$. For $\gamma_{1}$,
we have that
\begin{align*}
 & \|Dg(x_{1/2})[\overline{\eta},\overline{\eta}]-Dg(x_{1/2})[\eta,\eta]\|_{g_{1/2}^{-1}}\\
 & \quad\leq2\|Dg(x_{1/2})[\overline{\eta}-\eta,\overline{\eta}]\|_{g_{1/2}^{-1}}+\|Dg(x_{1/2})[\overline{\eta}-\eta,\overline{\eta}-\eta]\|_{g_{1/2}^{-1}}\\
 & \quad\leq4\|\overline{\eta}-\eta\|_{g_{1/2}}\|\overline{\eta}\|_{g_{1/2}}+2\|\overline{\eta}-\eta\|_{g_{1/2}}^{2}
\end{align*}
where we use Lemma \ref{lem:sc_facts}. Using $g_{1/2}\preceq1.2g_{0}$
(by (\ref{eq:g_bound})),
\begin{align*}
\|\gamma_{1}\|_{g_{0}^{-1}}\leq & 4\|(\overline{v}_{1/2}-A^{\top}\overline{\lambda}_{1})-(v_{1/2}-A^{\top}\lambda_{1})\|_{g_{0}^{-1}}\|\overline{v}_{1/2}-A^{\top}\overline{\lambda}_{1}\|_{g_{0}^{-1}}\\
 & +2\|(\overline{v}_{1/2}-A^{\top}\overline{\lambda}_{1})-(v_{1/2}-A^{\top}\lambda_{1})\|_{g_{0}^{-1}}^{2}.
\end{align*}
For $\gamma_{2}$, we use Lemma \ref{lem:sc_facts} and get
\begin{align*}
\|\gamma_{2}\|_{g_{0}^{-1}} & \leq\frac{4}{(1-0.6\|\ox-x\|_{g_{0}})^{3}}\|\overline{v}_{1/2}-A^{\top}\overline{\lambda}_{1}\|_{g_{0}^{-1}}^{2}\|\ox-x\|_{g_{0}}\\
 & \leq6\|\overline{v}_{1/2}-A^{\top}\overline{\lambda}_{1}\|_{g_{0}^{-1}}^{2}\|\ox-x\|_{g_{0}}.
\end{align*}
Combining everything, we have
\begin{align*}
 & \|\T(\overline{x},\overline{v},\overline{\lambda})_{v}-\T(x,v,\lambda)_{v}\|_{g_{0}^{-1}}\\
 & \quad\leq2h\|(\overline{v}_{1/2}-A^{\top}\overline{\lambda}_{1})-(v_{1/2}-A^{\top}\lambda_{1})\|_{g_{0}^{-1}}\|\overline{v}_{1/2}-A^{\top}\overline{\lambda}_{1}\|_{g_{0}^{-1}}\\
 & \qquad+h\|(\overline{v}_{1/2}-A^{\top}\overline{\lambda}_{1})-(v_{1/2}-A^{\top}\lambda_{1})\|_{g_{0}^{-1}}^{2}\\
 & \qquad+3h\|\overline{v}_{1/2}-A^{\top}\overline{\lambda}_{1}\|_{g_{0}^{-1}}^{2}\|\ox-x\|_{g_{0}}
\end{align*}
\end{proof}
Combining the bounds for $\T_{\lambda},\T_{x},\T_{v}$, we have
\begin{align*}
 & \|\T(x,v,\lambda)-\T(\overline{x},\overline{v},\overline{\lambda})\|\\
 & \leq0.7h\|\overline{v}-v\|_{g_{0}^{-1}}+2.4rh\|A^{\top}(\overline{\lambda}-\lambda)\|_{g_{0}^{-1}}+1.5h\|\overline{x}-x\|_{g_{0}}\|\overline{v_{1/2}}-A^{\top}\overline{\lambda}\|_{g_{0}^{-1}}\\
 & \quad+0.6h\|\overline{v}-v\|_{g_{0}^{-1}}+1.2h\|A^{\top}(\overline{\lambda}_{1}-\lambda_{1})\|_{g_{0}^{-1}}+0.6h\|\overline{x}-x\|_{g_{0}}\|\overline{v}_{1/2}-A^{\top}\overline{\lambda}_{1}\|_{g_{0}^{-1}}\\
 & \quad+2h\|(\overline{v}_{1/2}-A^{\top}\overline{\lambda}_{1})-(v_{1/2}-A^{\top}\lambda_{1})\|_{g_{0}^{-1}}\|\overline{v}_{1/2}-A^{\top}\overline{\lambda}_{1}\|_{g_{0}^{-1}}\\
 & \quad+h\|(\overline{v}_{1/2}-A^{\top}\overline{\lambda}_{1})-(v_{1/2}-A^{\top}\lambda_{1})\|_{g_{0}^{-1}}^{2}\\
 & \quad+3h\|\overline{v}_{1/2}-A^{\top}\overline{\lambda}_{1}\|_{g_{0}^{-1}}^{2}\|\ox-x\|_{g_{0}}.
\end{align*}
To simplify the terms, we note that
\begin{align*}
\|\overline{v}_{1/2}-A^{\top}\overline{\lambda}_{1}\|_{g_{0}^{-1}}\leq & \|\overline{v}_{1/2}-A^{\top}\overline{\lambda}\|_{g_{0}^{-1}}+\|A^{\top}(LL^{\top})^{-1}A\overline{g}_{1/2}^{-1}\left(\overline{v}_{1/2}-A^{\top}\overline{\lambda}\right)\|_{g_{0}^{-1}}\\
= & \|\overline{v}_{1/2}-A^{\top}\overline{\lambda}\|_{g_{0}^{-1}}+\|g_{0}^{1/2}\overline{g}_{1/2}^{-1}\left(\overline{v}_{1/2}-A^{\top}\overline{\lambda}\right)\|_{P}\\
\leq & \|\overline{v}_{1/2}-A^{\top}\overline{\lambda}\|_{g_{0}^{-1}}+\|g_{0}^{1/2}\overline{g}_{1/2}^{-1}\left(\overline{v}_{1/2}-A^{\top}\overline{\lambda}\right)\|_{2}\\
\leq & 3\|\overline{v}_{1/2}-A^{\top}\overline{\lambda}\|_{g_{0}^{-1}}.
\end{align*}
Using this and simplifying, we have
\begin{align*}
 & \|\T(x,v,\lambda)-\T(\overline{x},\overline{v},\overline{\lambda})\|\\
\leq & 1.3h\|\overline{v}-v\|_{g_{0}^{-1}}+2.4rh\|A^{\top}(\overline{\lambda}-\lambda)\|_{g_{0}^{-1}}+3.3h\|\overline{x}-x\|_{g_{0}}\|\overline{v_{1/2}}-A^{\top}\overline{\lambda}\|_{g_{0}^{-1}}\\
 & +1.2h\|A^{\top}(\overline{\lambda}_{1}-\lambda_{1})\|_{g_{0}^{-1}}\\
 & +6h(\frac{1}{2}\|\overline{v}-v\|_{g_{0}^{-1}}+\|A^{\top}(\overline{\lambda}_{1}-\lambda_{1})\|_{g_{0}^{-1}})\|\overline{v}_{1/2}-A^{\top}\overline{\lambda}\|_{g_{0}^{-1}}\\
 & +h\|\overline{v}-v\|_{g_{0}^{-1}}^{2}+2h\|A^{\top}(\overline{\lambda}_{1}-\lambda_{1})\|_{g_{0}^{-1}}^{2}\\
 & +27h\|\overline{v}_{1/2}-A^{\top}\overline{\lambda}\|_{g_{0}^{-1}}^{2}\|\ox-x\|_{g_{0}}.
\end{align*}
Next, we note that
\begin{align*}
\|\overline{v_{1/2}}-A^{\top}\overline{\lambda}\|_{g_{0}^{-1}}\leq & \frac{1}{2}\|\overline{v}-v^{*}\|_{g_{0}^{-1}}+\frac{1}{2}\|v_{0}-v^{*}\|_{g_{0}^{-1}}+\|v^{*}\|_{g_{0}^{-1}}\\
 & +\frac{1}{2}\|A^{\top}\overline{\lambda}-A^{\top}\lambda^{*}\|_{g_{0}^{-1}}+\frac{1}{2}\|A^{\top}\overline{\lambda}-A^{\top}\lambda^{*}\|_{g_{0}^{-1}}+\|A^{\top}\lambda^{*}\|_{g_{0}^{-1}}\\
\leq & \frac{1}{2}r+\frac{1}{2}r+\|v^{*}\|_{g_{0}^{-1}}+\frac{r}{2h}+\frac{r}{2h}+\frac{r}{h}\leq\frac{3r}{h}+\|v^{*}\|_{g_{0}^{-1}}
\end{align*}
Using this, (\ref{eq:contractive_Alambda1}), $h\leq r$, $r^{2}\leq\frac{h}{16}$,
$r\leq\|v^{*}\|_{g_{0}^{-1}}/4$, we have
\begin{align*}
\|A^{\top}(\overline{\lambda}_{1}-\lambda_{1})\|_{g_{0}^{-1}} & \leq\|\overline{v}-v\|_{g_{0}^{-1}}+3r\|A^{\top}(\overline{\lambda}-\lambda)\|_{g_{0}^{-1}}+(\frac{5r}{h}+2\|v^{*}\|_{g_{0}^{-1}})\|\overline{x}-x\|_{g_{0}}\\
 & \leq r+\frac{3r^{2}}{h}+\frac{5r^{2}}{h}+2r\|v^{*}\|_{g_{0}^{-1}}\leq\frac{8r^{2}}{h}+2r\|v^{*}\|_{g_{0}^{-1}}\leq1
\end{align*}
Hence, we can further simplify it to 
\begin{align*}
 & \|\T(x,v,\lambda)-\T(\overline{x},\overline{v},\overline{\lambda})\|\\
\leq & 2.3h\|\overline{v}-v\|_{g_{0}^{-1}}+2.4rh\|A^{\top}(\overline{\lambda}-\lambda)\|_{g_{0}^{-1}}+3.3h\|\overline{x}-x\|_{g_{0}}\|\overline{v_{1/2}}-A^{\top}\overline{\lambda}\|_{g_{0}^{-1}}\\
 & +3.2h\|A^{\top}(\overline{\lambda}_{1}-\lambda_{1})\|_{g_{0}^{-1}}\\
 & +6h(\frac{1}{2}\|\overline{v}-v\|_{g_{0}^{-1}}+\|A^{\top}(\overline{\lambda}_{1}-\lambda_{1})\|_{g_{0}^{-1}})\|\overline{v}_{1/2}-A^{\top}\overline{\lambda}\|_{g_{0}^{-1}}\\
 & +27h\|\overline{v}_{1/2}-A^{\top}\overline{\lambda}\|_{g_{0}^{-1}}^{2}\|\ox-x\|_{g_{0}}\\
\leq & (\frac{3r}{h}+\|v^{*}\|_{g_{0}^{-1}})(6h\|\overline{v}-v\|_{g_{0}^{-1}}+9h\|A^{\top}(\overline{\lambda}_{1}-\lambda_{1})\|_{g_{0}^{-1}}+31h\|\ox-x\|_{g_{0}})\\
 & +2.4rh\|A^{\top}(\overline{\lambda}-\lambda)\|_{g_{0}^{-1}}
\end{align*}
where we used $\|\overline{v}_{1/2}-A^{\top}\overline{\lambda}\|_{g_{0}^{-1}}\leq\frac{3r}{h}+\|v^{*}\|_{g_{0}^{-1}}$
and $r\geq h$. Using the bound on $\|A^{\top}(\overline{\lambda}_{1}-\lambda_{1})\|_{g_{0}^{-1}}$,
we have
\begin{align*}
 & \|\T(x,v,\lambda)-\T(\overline{x},\overline{v},\overline{\lambda})\|\\
\leq & (\frac{3r}{h}+\|v^{*}\|_{g_{0}^{-1}})(15h\|\overline{v}-v\|_{g_{0}^{-1}}+27rh\|A^{\top}(\overline{\lambda}-\lambda)\|_{g_{0}^{-1}}+9h(\frac{36r}{h}+2\|v^{*}\|_{g_{0}^{-1}})\|\overline{x}-x\|_{g_{0}})\\
 & +2.4rh\|A^{\top}(\overline{\lambda}-\lambda)\|_{g_{0}^{-1}}\\
\leq & (\frac{3r}{h}+\frac{1}{4r})(15h\|\overline{v}-v\|_{g_{0}^{-1}}+30rh\|A^{\top}(\overline{\lambda}-\lambda)\|_{g_{0}^{-1}}+9h(\frac{36r}{h}+2\|v^{*}\|_{g_{0}^{-1}})\|\overline{x}-x\|_{g_{0}})\\
\leq & (\frac{3r}{h}+\|v^{*}\|_{g_{0}^{-1}})(400r+18h\|v^{*}\|_{g_{0}^{-1}})\|(x,v,\lambda)-(\overline{x},\overline{v},\overline{\lambda})\|.
\end{align*}

\section{Condition Number Independence via Self-concordant Barrier\label{subsec:Choosing}}

\begin{figure}[t]
\centering
\begin{tikzpicture}
    \node[rectangle,draw,rounded corners, very thick, fill=orange!30, minimum width=3.3cm] at (0, 0) (t1) {
    \begin{tabular}{c}
    	Ideal RHMC \\
    	$O(mn^{7/6})$
    \end{tabular}};
    \node[rectangle,draw,rounded corners, very thick, fill=orange!30, minimum width=3.3cm] at (7, 0) (t2) {\begin{tabular}{c}
    	Discretized RHMC \\
    	$O(mn^{3})$
    \end{tabular}};
    
    \node[rectangle,draw,rounded corners, very thick, fill=blue!30, minimum width=3.3cm] at (0, -3) (l1) {
    \begin{tabular}{c}
      Ideal CRHMC (\ref{subsec:ideal-mixing})\\ 
      $O(mk^{7/6})$
    \end{tabular}};
    \node[rectangle,draw,rounded corners, very thick, fill=blue!30, minimum width=3.3cm] at (7, -3) (l2) {
    \begin{tabular}{c}
      Discretized CRHMC (\ref{subsec:disc-mixing})\\ 
      $O(mk^3)$
    \end{tabular}};

    \draw[orange, thick, dotted, line width=1mm]    ($(t1.north west)+(-0.5,0.3)$) rectangle ($(t2.south east)+(0.5,-0.3)$) 
    node[black, right] at (2.5, 1.2) {Proven in \cite{kook2022conditionnumberindependent}};
    \draw[->,  line width=.6mm] (l1) to[out=90,in=270] node[right]{Reduce} (t1);
    \draw[->,  line width=.6mm] (l2) to[out=90,in=270] node[left]{Reduce} (t2);
\end{tikzpicture}
 \caption{Proof outline for the mixing rates of CRHMC\label{app:proofoutline}}
\end{figure}
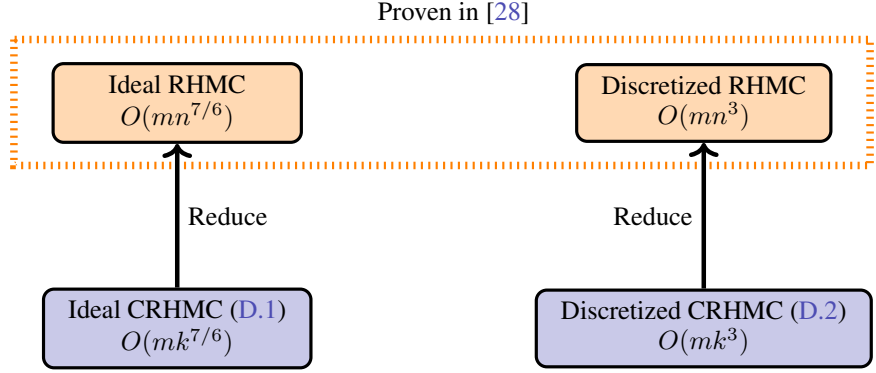

In this section, we analyze the convergence rates of the ideal CRHMC and discretized RHMC in our setting respectively, showing that both are independent of condition numbers.
We only show the case when {\bf $f$ is linear},
\begin{equation*}\label{eq:flinear}  \pi(x) \propto e^{-f(x)}=e^{-\alpha^{\top}x},  \text{ for some } \alpha\in \Rn. \end{equation*}
However, recall that {\bf all logconcave densities} can be reduced to this linear case (see \eqref{eq:problem-2}). 
We also focus on when a manifold $\mcal$ is a polytope in the form of $\{x\in \Rn : A'x \geq b',\, Ax=0\}$ for full-rank $A'\in \R^{m\times n},  A\in \R^{p\times n}$ and $b'\in \R^m$, with the Riemannian metric induced by the Hessian of the logarithmic barrier of the polytope.
For simplicity, we consider the case when there is no momentum (i.e., $\beta = 0$) in Algorithm~\ref{alg:CHMC_JL}. 
In addition, we consider a \emph{lazy} version of Algorithm~\ref{alg:CHMC_JL} to avoid a uniqueness issue of the stationary distribution of the Markov chain.
The lazy version of the Markov chain, at each step, does nothing with probability $\frac 1 2$ (in other words, stays at where it is and does not move). 
Note that this change for the purpose of proof  worsens the mixing rate only by a factor of $2$.

In this setting, we show that the mixing rates of the ideal CRHMC and the discretized CRHMC (Algorithm~\ref{alg:CHMC_JL}) are 
$O\Par{mk^{7/6} \log^2 \frac {\Lambda}{\epsilon}}$ and $O\Par{mk^3\log^3 \frac {\Lambda}{\epsilon}}$,
where $\Lambda$ is a warmness parameter and $k$ is the dimension of the constrained space defined by $\{x\in \Rn: Ax= 0\}$.
We remark that our algorithm is actually independent of condition number (i.e., no dependency on $\norm{\alpha}_2$ and the geometry of polytope).
This is the key reason that our sampler is much more efficient for skewed instances than previous samplers.

We first shed light on how RHMC and CRHMC can be related (see Figure~\ref{app:proofoutline}), establishing a correspondence between RHMC (full-dimensional space) and CRHMC (constrained space).
This connection enables us to refer to the mixing rates of the ideal RHMC and discretized RHMC proven in \cite{kook2022conditionnumberindependent}. 
To be precise, we prove in Section~\ref{subsec:ideal-mixing} the following theorem on the mixing rate of the ideal CRHMC, which can solve the Hamiltonian equations accurately without any error.

\begin{thm}[Mixing rate of ideal CRHMC]\label{thm:ideal-mixing}
	Let $\pi_T$ be the distribution obtained after $T$ iterations of the ideal CRHMC on a convex body 
	$\mcal=\{x\in \Rn: A'x\geq b',\, Ax=0\}$. 
	Let $\Lambda= \sup_{S\subseteq \mcal} \frac {\pi_0(S)}{\pi(S)}$ be the warmness of the initial distribution $\pi_0$. 
	For any $\epsilon>0$, there exists $T = O\Par{mk^{7/6}\log^2 \frac {\Lambda}{\epsilon}}$ such that $\norm{\pi_T- \pi}_{\TV} \leq \epsilon$, where $k$ is the dimension of the constrained space defined by $\{x\in \Rn: Ax = 0\}$.
\end{thm}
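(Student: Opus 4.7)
The strategy, as hinted by Figure~\ref{app:proofoutline}, is to reduce the ideal CRHMC on the constrained polytope $\mcal=\{x\in\Rn : A'x\geq b',\ Ax=0\}$ to the ideal RHMC on an isometric full-dimensional polytope $P\subset\R^{k}$, and then directly invoke the mixing bound for ideal RHMC proved in~\cite{kook2022conditionnumberindependent}, which gives $O(mk^{7/6}\log^{2}(\Lambda/\epsilon))$ for a polytope of dimension $k$ with $m$ facets.

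\paragraph{Step 1: reparameterization.}
Pick $B\in\R^{n\times k}$ with orthonormal columns spanning $\nulls(A)$, so that the linear map $y\mapsto By$ is an isometry from $\R^{k}$ onto $\{x\in\Rn : Ax=0\}$. Under this map $\mcal$ is in bijection with the full-dimensional polytope $P=\{y\in\R^{k}:A'By\geq b'\}$, which has exactly $m$ facets. Let $\phi(x)=-\sum_i\log(A'_ix-b'_i)$ be the logarithmic barrier of $\{A'x\geq b'\}$ and $\bar\phi(y)=\phi(By)$ be the logarithmic barrier of $P$; then $\bar g(y)\defeq\nabla^{2}\bar\phi(y)=B^{\top}g(By)B$.

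\paragraph{Step 2: conjugation of the dynamics.}
I would show that $x=By$ intertwines the ideal CRHMC on $\mcal$ with the ideal RHMC on $P$ equipped with metric $\bar g$. The orthogonal projection onto $\nulls(A)$ is $Q(x)=BB^{\top}$, so $M(x)=Q(x)g(x)Q(x)=BB^{\top}g(x)BB^{\top}=B\bar g(y)B^{\top}$. A direct computation then shows that for every $v=Bu\in\range(B)$,
\[
M(x)^{\dagger}v=B\bar g(y)^{-1}u,\qquad v^{\top}M(x)^{\dagger}v=u^{\top}\bar g(y)^{-1}u,\qquad \pdet M(x)=\det\bar g(y),
\]
so the CRHMC Hamiltonian~\eqref{eq:const_H} on the constraint surface satisfies
\[
H(By,Bu)=\alpha^{\top}By+\tfrac{1}{2}u^{\top}\bar g(y)^{-1}u+\tfrac{1}{2}\log\det\bar g(y),
\]
which is exactly the RHMC Hamiltonian on $P$ with metric $\bar g$ (the Lagrangian term vanishes on $\mcal$, and enforcing $v\in\nulls(A)$ is automatic in $y$-coordinates). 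Furthermore $v\sim\mathcal{N}(0,M(x))$ on $\range(B)$ pulls back to $u\sim\mathcal{N}(0,\bar g(y))$, and the ideal Metropolis filter probabilities coincide since the Hamiltonians agree pointwise. Hence the two Markov kernels and their stationary distributions are conjugate via $x=By$.

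\paragraph{Step 3: invoke the RHMC bound.}
Because the chains are conjugate through the isometry $x=By$, total variation distance and the warmness parameter $\Lambda$ are preserved exactly. Applying the mixing bound from~\cite{kook2022conditionnumberindependent} for the ideal RHMC on the $k$-dimensional polytope $P$ with $m$ facets yields $T=O(mk^{7/6}\log^{2}(\Lambda/\epsilon))$. The main obstacle is Step~2: the CRHMC Hamiltonian is explicitly built out of a pseudo-inverse, pseudo-determinant, and Lagrangian multiplier designed to keep velocities tangent to $\mcal$, and one must check that the net effect of these ingredients on the constraint surface matches the ordinary RHMC Hamiltonian in reduced coordinates. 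Fortunately the algebraic identities of Lemmas~\ref{lem:formula_inverse} and~\ref{lem:formula_pdet}, together with the orthonormality $B^{\top}B=I$, carry out exactly this identification, after which the reduction is immediate.
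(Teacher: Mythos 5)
Your proposal is correct and follows essentially the same route as the paper's own proof: both choose an orthonormal basis of $\nulls(A)$ (your $B$, the paper's $U_k$), show that the pullback metric is the Hessian of the log-barrier of the reduced $k$-dimensional polytope and that the CRHMC Hamiltonian on the constraint surface equals the RHMC Hamiltonian in reduced coordinates (via $M(x)^{\dagger}=B\bar g^{-1}B^{\top}$, $\pdet M(x)=\det\bar g(y)$, vanishing Lagrangian term), and then invoke the ideal RHMC bound of \cite{kook2022conditionnumberindependent}. The one place your sketch is thinner than the paper is that you verify only the Hamiltonian values match, whereas the paper's Lemma~\ref{lem:equiv} explicitly checks that the Hamiltonian vector fields (i.e., the ODEs for $dx/dt$ and $dv/dt$) are intertwined by $x=By$ — a step that does not follow solely from equality of Hamiltonians, since the constrained symplectic structure must also be matched; but the computation you gesture at is exactly the one the paper carries out.
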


We then prove in Section~\ref{subsec:disc-mixing} the convergence rate of the discretized RHMC (Algorithm~\ref{alg:CHMC_JL}).

\begin{thm}[Mixing rate of discretized CRHMC]\label{thm:disc-mixing}
	Let $\pi_T$ be the distribution obtained after $T$ iterations of Algorithm~\ref{alg:CHMC_JL} on a convex body $\mcal=\{x\in \Rn: A'x\geq b',\, Ax=0\}$.
	Let $\Lambda= \sup_{S\subseteq \mcal} \frac {\pi_0(S)}{\pi(S)}$ be the warmness of the initial distribution $\pi_0$. 
	For any $\epsilon>0$, there exists $T = O\Par{mk^3 \log^3 \frac {\Lambda}{\epsilon}}$ such that $\norm{\pi_T- \pi}_{\TV} \leq \epsilon$, where $k$ is the dimension of the constrained space defined by $\{x\in \Rn: Ax = 0\}$.
\end{thm}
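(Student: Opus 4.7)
The plan is to reduce Theorem~\ref{thm:disc-mixing} to the corresponding mixing result for discretized RHMC on a full-dimensional polytope established in \cite{kook2022conditionnumberindependent}, along the arrow in Figure~\ref{app:proofoutline}. The key observation is that the constraint $Ax=0$ confines the chain to a $k$-dimensional affine subspace, and on this subspace CRHMC is isometric to an ordinary RHMC run on a full-dimensional polytope in $\R^k$. Since the cited paper already proves the $O(mk^3\log^3(\Lambda/\epsilon))$ bound for discretized RHMC on a full-dimensional polytope with $m$ inequalities and ambient dimension $k$, pushing the bound through the isometry will give the claim.

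Concretely, I would fix an orthonormal basis $U\in\R^{n\times k}$ of $\nulls(A)$ and parameterize $\mcal$ by $\widetilde{\mcal}=\{y\in\R^k : A'Uy\geq b'\}$ via $x=Uy$. The pullback of the log-barrier $\phi$ of $\mcal$ is $\widetilde{\phi}(y)=\phi(Uy)$, a log-barrier for $\widetilde{\mcal}$ with the same $m$ inequalities, and its Hessian is $\widetilde{g}(y)=U^\top g(Uy)U$. Using Lemma~\ref{lem:formula_inverse}, one checks that $M(Uy)^\dagger = U\widetilde{g}(y)^{-1}U^\top$, so the Gaussian $\mathcal{N}(0,M(x))$ restricted to $\nulls(A)=\range(U)$ pushes forward under $U^\top$ to $\mathcal{N}(0,\widetilde{g}(y))$. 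In the same way, the simplified Hamiltonian in Lemma~\ref{lem:ham_deri} and its $\log\pdet$ term from Lemma~\ref{lem:formula_pdet} pull back exactly to the standard RHMC Hamiltonian $\widetilde{H}(y,w)=\alpha^\top Uy+\tfrac{1}{2}w^\top\widetilde{g}(y)^{-1}w+\tfrac{1}{2}\log\det\widetilde{g}(y)$ on $\widetilde{\mcal}$; the determinant mismatch coming from $\log\det(Ag^{-1}A^\top)-\log\det(AA^\top)$ is absorbed by the change-of-variables Jacobian between the two representations.

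Given this, I would verify that each of the three components of Algorithm~\ref{alg:CHMC_JL} (velocity refresh, implicit midpoint integration via Algorithm~\ref{alg:IMM}, Metropolis filter) commutes with the map $x=Uy$, $v=Uw$. The velocity refresh does by the Gaussian identity above. The filter does because both Hamiltonians agree along corresponding trajectories. The integrator step matches because the Lagrangian term in $H$ only serves to keep the trajectory inside $\nulls(A)$ and the Newton iteration in Algorithm~\ref{alg:IMM} converges (by Lemma~\ref{lem:imm_correctness}) to the same fixed point as the implicit midpoint iteration for $\widetilde{H}$ on $\widetilde{\mcal}$. Consequently, the one-step transition kernels of discretized CRHMC on $\mcal$ and discretized RHMC on $\widetilde{\mcal}$ are intertwined by the isometry, so their total variation distances to the corresponding stationary distributions coincide. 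Applying the result of \cite{kook2022conditionnumberindependent} with $n\rightsquigarrow k$ immediately yields the stated bound $T=O(mk^3\log^3(\Lambda/\epsilon))$.

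The main obstacle I expect is the bookkeeping required to show that the CRHMC integrator really produces the same trajectory as the RHMC integrator on $\widetilde{\mcal}$ once one accounts for the Lagrangian term, the pseudo-inverse $M^\dagger$, and the implicit Newton iteration on $\nu$; in particular, showing that $\range(v_t)\subseteq\nulls(A)$ is preserved exactly by the discretization (not merely by the continuous dynamics of Lemma~\ref{lem:CHMC_inva}) requires tracking the update rules in Algorithm~\ref{alg:IMM} step-by-step. The remaining arguments, warmness preservation under the linear map $U$ and aperiodicity/irreducibility inherited from Theorem~\ref{thm:balance}, are straightforward.
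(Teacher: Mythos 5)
Your proposal takes essentially the same route as the paper. You correctly identify the reduction: pass to the orthonormal coordinate $y=U^\top x$ on $\nulls(A)$, observe that $M^\dagger=U\widetilde g^{-1}U^\top$ and $\mathcal N(0,M(x))$ pushes forward to $\mathcal N(0,\widetilde g(y))$, check that the implicit-midpoint integrator commutes with the linear map, and then invoke the $O(mk^3\log^3(\Lambda/\epsilon))$ bound for discretized RHMC in $\R^k$ from \cite{kook2022conditionnumberindependent}. This is precisely the architecture of Appendix~\ref{subsec:disc-mixing}, including the acknowledged ``main obstacle'' — that the discretized dynamics, not just the continuous dynamics of Lemma~\ref{lem:CHMC_inva}, must respect $\nulls(A)$.

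Two small points of imprecision are worth flagging. First, the ``determinant mismatch'' $\log\det(Ag^{-1}A^\top)-\log\det(AA^\top)$ is not absorbed by a change-of-variables Jacobian (which is trivially $1$ since $U$ has orthonormal columns); the clean statement is that $\log\pdet M(U_ky)=\log\det\bg(y)$ exactly (Lemma~\ref{lem:bg_relationship}), and the leftover $\log\det AA^\top$ in the simplified Hamiltonian is a constant that cancels in the Metropolis ratio. Second, the ``Lagrangian term'' issue is subtler than ``the Newton iteration on $\nu$ converges to the right fixed point'': the paper handles it via an explicit two-lemma structure. First (Appendix~\ref{subsubsec:disc-corr1}) it shows that IMM with the simplified Hamiltonian $\overline H$ produces the same $x$-trajectory and the same Metropolis ratio as IMM with the full Hamiltonian $H=\overline H+\lambda^\top c$ — the iterates $\bv$ under $\overline H$ may stray from $\nulls(A)$, but their projections $U_kU_k^\top\bv$ agree with the full-Hamiltonian iterates, and the $x$-dynamics only sees the projection. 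Second (Appendix~\ref{subsubsec:disc-corr2}) it shows the full-Hamiltonian IMM in $\R^n$ is equivalent to RHMC-IMM in $\R^k$. Your sketch compresses these into one step, which is fine as a plan, but the two-step decomposition is what makes the bookkeeping you worry about actually go through.
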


We believe there is room for improvement on the $n$-dependence via a more careful analysis.

\subsection{Convergence rate of ideal CRHMC}\label{subsec:ideal-mixing}

Lee and Vempala \cite{lee2018convergence} first analyzed Riemannian Hamiltonian Monte Carlo (RHMC) on $n$-dimensional polytopes embedded in $\Rn$, with an invertible metric induced by the Hessian of the logarithmic barrier of the polytopes.
They bounded the mixing rate in terms of \emph{smoothness} parameters that depend on the manifold.
In particular for uniform sampling, they showed that the convergence rate of RHMC is $O(mn^{2/3})$.
Subsequently, \cite{kook2022conditionnumberindependent} extended their analysis to exponential densities and further analyzed the convergence rate of RHMC discretized by the implicit midpoint method, showing that the mixing rates are $O(mn^{7/6})$ and $O(mn^3)$, respectively.

However, our metric $M(x)$ defined for the constrained space could be singular in the underlying space $\Rn$, so we cannot directly refer to any theoretical results from \cite{lee2018convergence, kook2022conditionnumberindependent}.
To address this challenge, we establish a formalism that allows us to reduce the ideal CRHMC to the ideal RHMC, obtaining the mixing rate through this reduction.

Even though our convex body $\mcal= \{x\in \Rn: A'x\geq b',\, Ax = 0 \}$ of dimension $k$ is embedded in $\Rn$, we can handle it with an invertible metric $\bg$ on $\mcal$ properly defined as if it is embedded in $\R^k$.
To this end, we use $\{u_1,...,u_k\}$ to denote an orthonormal basis of the constrained space (which is the null space of $A$) and extend it to an orthonormal basis of $\Rn$ denoted by $\{u_1,...,u_k,...,u_n\}$.
We also define two matrices $U_k \in\R^{n\times k}$ and $U\in \R^{n \times n}$ by
\[
U_{k}=\left[\begin{array}{ccc}
	u_{1} & \cdots & u_{k}\end{array}\right]\quad\&\quad U=\left[\begin{array}{cccc}
	U_{k} & u_{k+1} & \cdots & u_{n}\end{array}\right].
\]

Using this orthonormal basis $\{u_1,...,u_k\}$, we can consider a new coordinate system $y=(y_1,...,y_k)\in \R^k$ on the $k$-dimensional manifold $\mcal$.
Moreover, there exists one-to-one correspondence between $y$ and $x$; for any $x\in \mcal$ there is a unique $y$ such that $x=U_ky$, and we can recover this $y$ by multiplying $U_k^\top$ (i.e., $y = U_k^\top x$).

Let us define the invertible local metric $\bg$ at $y \in \mcal$ by 
\[
\bg(y)(u_i,u_j)\defeq g(x)(u_i, u_j )\quad \text{for }i,j\leq k.
\]
With abuse of notations, we also use $\bg(y)$ to denote the $k\times k$ matrix with its $(i,j)$-entry being  $\bg(y)(u_i,u_j)$.
We first establish relationships between $\bg(y)$ (and its inverse $\bg^{-1}$) and $M(x)$ (and its pseudoinverse $W\defeq M(x)^{\dagger}$).
We recall that for the orthogonal projection $Q$ to the null space of $A$
\begin{align*}
	M(x) = Q^\top g(x) Q,\,\,
	W(x) = g(x)^{-\half}(I-P(x)) g(x)^{-\half}.
\end{align*}

\begin{lem}\label{lem:bg_relationship}
	We have $\bg(y) = U_k^\top M(x) U_k =  U_k^\top g(x) U_k$  and $\bg(y)^{-1} = U_k^\top W(x)U_k$.
\end{lem}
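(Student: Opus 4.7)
The plan is to prove the three claimed identities in sequence, exploiting throughout the key geometric fact that the columns of $U_k$ form an orthonormal basis of $\mathrm{Null}(A)$, which is exactly the range of the orthogonal projection $Q$. Equivalently, $QU_k = U_k$ and $U_k U_k^\top = Q$.

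The first identity $\bg(y) = U_k^\top g(x) U_k$ is immediate from the definition: the $(i,j)$-entry of $U_k^\top g(x) U_k$ is $u_i^\top g(x) u_j = g(x)(u_i,u_j)$, and by construction this equals $\bg(y)(u_i,u_j)$ for $i,j \le k$. For the second identity $U_k^\top M(x) U_k = U_k^\top g(x) U_k$, I would plug in $M(x) = Q^\top g(x) Q$ and use $QU_k = U_k$ together with $Q^\top = Q$ to collapse the two outer projections, yielding $U_k^\top (Q g(x) Q) U_k = (QU_k)^\top g(x) (QU_k) = U_k^\top g(x) U_k$.

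The third identity $\bg(y)^{-1} = U_k^\top W(x) U_k$ is the main step. Since $\bg(y) = U_k^\top g(x) U_k$ is positive definite (as $g(x)\succ 0$ and $U_k$ has full column rank), it suffices to verify the one-sided identity $(U_k^\top M(x) U_k)(U_k^\top W(x) U_k) = I_k$. The algebraic inputs I would use are: $U_k U_k^\top = Q$; the identity $MW = Q$ already established inside the proof of Lemma~\ref{lem:formula_inverse}; and $MQ = M$, which follows from $M = QgQ$ and $Q^2 = Q$. Then the product collapses as
\[
U_k^\top M U_k \cdot U_k^\top W U_k = U_k^\top M (U_k U_k^\top) W U_k = U_k^\top (MQ) W U_k = U_k^\top (MW) U_k = U_k^\top Q U_k = U_k^\top U_k = I_k,
\]
where the last equality uses orthonormality of the columns of $U_k$.

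No deep obstacle is expected: all the needed algebraic identities among $M$, $W$, and $Q$ were already worked out in the proof of Lemma~\ref{lem:formula_inverse}. The main thing to keep track of is that $U_k$ and $Q$ interact in two different ways depending on context -- sometimes as $QU_k = U_k$ (to absorb a $Q$ on the outside), and sometimes as $U_k U_k^\top = Q$ (to create a $Q$ between two matrix factors) -- and the correct one must be applied at each step of the reduction.
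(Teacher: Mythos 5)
Your proof is correct, and it takes a genuinely different route from the paper for the inverse identity $\bg(y)^{-1} = U_k^\top W(x) U_k$. The paper introduces the full orthonormal extension $U$, writes $g'(x) = U^\top g(x) U$ in $2\times 2$ block form, invokes the Schur-complement formula $\bg(y)^{-1} = B_1 - B_2 B_3^{-1} B_2^\top$, and then reconciles the resulting expression with $W(x)$ by an orthogonal-projection uniqueness argument (recognizing $g^{-1/2}U_r(U_r^\top g^{-1}U_r)^{-1}U_r^\top g^{-1/2}$ and $g^{-1/2}A^\top(Ag^{-1}A^\top)^{-1}Ag^{-1/2}$ as the same projector). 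Your proof instead sidesteps block inversion entirely: it checks directly that $(U_k^\top M U_k)(U_k^\top W U_k) = I_k$ by chaining $U_kU_k^\top = Q$, $MQ = M$, $MW = Q$ (all established in or immediate from the proof of Lemma~\ref{lem:formula_inverse}), and $U_k^\top U_k = I_k$, then invokes positive definiteness of $U_k^\top g U_k$ to conclude. This is shorter, avoids introducing the complementary basis $U_r$ and the block-matrix machinery in Appendix~\ref{subsec:Details}, and makes the reliance on Lemma~\ref{lem:formula_inverse} explicit rather than re-deriving the projection identity; the paper's Schur-complement approach, by contrast, is self-contained and mirrors standard manifold-restriction computations. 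Both are sound, and your observation about the two distinct roles of $Q$ versus $U_k U_k^\top$ is exactly the bookkeeping that makes the direct verification go through cleanly.
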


\begin{proof}
{}	It is immediate from the definition of $\bg$ that $\bg(y) = U_k^\top g(x) U_k$. 
	Since the quadratic forms of $M(x)$ and $g(x)$ agree on the constrained space, we also have $\bg(y) = U_k^\top M(x) U_k$.

	For $\bg^{-1}$,
	we define two matrices $P_k \in \R^{n\times k}$ and $P_r\in \R^{n\times (n-k)}$ by
	\[
		P_{k}=\left[\begin{array}{c}
	I_{k}\\
	0_{(n-k)\times k}
	\end{array}\right],\,\,
	P_{r}=\left[\begin{array}{c}
0_{k\times(n-k)}\\
I_{n-k}
\end{array}\right]
	\]
	where $0_{(n-k)\times k}$ is the zero matrix of size $(n-k)\times k$, $I_k$ is the identity matrix of size $k\times k$ and so on.
	Due to $U_k = U P_k$, the upper-left $k\times k$ submatrix of $g'(x):= U^\top g(x)U$ is exactly $\bg(y)$.
	Let us represent the inverse of $g'$ in the form of block matrix:
	\[
	g'(x)^{-1}=\left[\begin{array}{cc}
	B_{1} & B_{2}\\
	B_{2}^{\top} & B_{3}
	\end{array}\right],
	\]
	for $B_1\in\R^{k\times k}$, $B_2 \in \R^{k\times (n-k)}$ and $B_3 \in \R^{(n-k)\times (n-k)}$.
	Using the formula of the inverse of block matrices (see App.~\ref{subsec:Details}),
	\[
		\bg(y)^{-1} = B_1 - B_2 B_3^{-1} B_2^\top.
	\]
	It is straightforward to check
	\begin{align*}
		B_1 
		&=
		P_k^\top g'(x)^{-1} P_k
		= P_k^\top U^\top g(x)^{-1} U P_k\\
		&= U_k^\top g(x)^{-1} U_k,\\
		B_2
		&= P_k^\top g'(x)^{-1}P_r
		= U_k^\top g(x)^{-1} U_r,\\
		B_3
		&= P_r^\top g'(x)^{-1} P_r
		= U_r^\top g(x)^{-1} U_r,
	\end{align*}
	for 
	$U_{r}
	=
	\left[
	\begin{array}{ccc} u_{k+1} & \cdots & u_{n}
	\end{array}
	\right] \in \R^{n\times(n-k)}$.
	Therefore,
	\begin{align*}
		\bg(y)^{-1}
		&=
		U_k^\top g(x)^{-1} U_k - 
		U_k^\top g(x)^{-1} U_r
		(U_r^\top g(x)^{-1} U_r)^{-1}
		U_r^\top g(x)^{-1} U_k\\
		&=
		U_k^\top\Par{g(x)^{-1} - g(x)^{-1}U_r(U_r^\top g(x)^{-1} U_r)^{-1}U_r^\top g(x)^{-1}}U_k.
	\end{align*}
	Since $g(x)^{-\half}U_r(U_r^\top g(x)^{-1}U_r)^{-1}U_r^\top g(x)^{-\half}$ is the orthogonal projection to the row space of $U_r^\top g(x)^{-\half}$ and this row space is the same with the row space of $Ag(x)^{-\half}$, the uniqueness of orthogonal projection matrices implies 
	\[
		g(x)^{-\half}A^\top(Ag(x)^{-1}A^\top)^{-1}Ag(x)^{-\half}
		= 
		g(x)^{-\half}U_r(U_r^\top g(x)^{-1}U_r)^{-1}U_r^\top g(x)^{-\half}.
	\]
	Therefore,
	\begin{align*}
		\bg(y)^{-1}
		&=
		U_k^\top\Par{g(x)^{-1} - g(x)^{-1}
		A^\top(Ag(x)^{-1}A^\top)^{-1}A
		g(x)^{-1}}U_k\\
		&=
		U_k^\top \Par{g(x)^{-\half} \Par{I - g(x)^{-\half} A^\top(Ag(x)^{-1}A^\top)^{-1}A g(x)^{-\half} } g(x)^{-\half}} U_k\\
		&= U_k^\top \Par{g(x)^{-\half} (I - P(x)) g(x)^{-\half}} U_k\\
		&= U_k^\top W(x) U_k.
	\end{align*}
\end{proof}

We can now view the ideal CRHMC with the metric $M(x)$ as the ideal RHMC with the metric $\bg$ on the $k$-dimensional manifold.
Note that we have to ensure that the local metric $\bg$ is also induced by the Hessian of a logarithmic barrier, in order to refer to results from it.

\begin{lem}
	Let $\overline{A'} = A'U_k$ and $\psi(y)$ be the logarithmic barrier of the $k$-dimensional polytope defined by $\{y\in \R^k : \overline{A'}y\leq b'\}$.
	Then $\nabla^2_y \psi(y) = \bg(y)$.
\end{lem}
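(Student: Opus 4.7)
The plan is a direct computation. Since $\phi$ is the logarithmic barrier of the polytope $\{x \in \Rn : A'x \geq b'\}$, we can write $\phi(x) = -\sum_{i=1}^{m}\log\bigl((A'x-b')_i\bigr)$ (up to a sign convention for the inequality that does not change the Hessian). A direct calculation then gives
\[
g(x) = \nabla^2_x\phi(x) = (A')^\top S(x)^{-2} A',
\]
where $S(x) = \diag(A'x - b')$ is the diagonal slack matrix. This is the standard closed form for the Hessian of a log barrier that I would quote rather than re-derive.

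Next I would parametrize the manifold explicitly: for each $y \in \R^k$, set $x = U_k y$, which by construction lands on $\{Ax = 0\}$ since the columns of $U_k$ span the null space of $A$. Under this parametrization, the slack vector transforms trivially: $A'x - b' = A'U_k y - b' = \overline{A'}y - b'$, so $\psi(y) = -\sum_i \log\bigl((\overline{A'}y - b')_i\bigr)$ is defined via exactly the same slacks as $\phi(x)$. By the same Hessian formula applied to $\psi$,
\[
\nabla^2_y\psi(y) = \overline{A'}^\top S(x)^{-2}\overline{A'}.
\]

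Finally, I would substitute $\overline{A'} = A' U_k$ and pull the $U_k$ factors outside:
\[
\nabla^2_y\psi(y) = (A'U_k)^\top S(x)^{-2}(A'U_k) = U_k^\top\bigl((A')^\top S(x)^{-2} A'\bigr)U_k = U_k^\top g(x) U_k.
\]
By Lemma~\ref{lem:bg_relationship}, the right-hand side is exactly $\bg(y)$, finishing the proof.

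There is no real obstacle here; the only thing to be careful about is the sign/direction convention of the inequality defining the auxiliary polytope (the statement uses $\overline{A'}y \leq b'$ while $\mcal$ uses $A'x \geq b'$), but since the Hessian of $-\log|\text{slack}|$ depends only on the squared slacks it is invariant under this flip, so the calculation goes through either way.
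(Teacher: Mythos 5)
Your proof is correct and follows essentially the same route as the paper: write out the closed-form log-barrier Hessian $g(x) = (A')^\top S(x)^{-2} A'$, observe that the slacks are preserved under the substitution $x = U_k y$ (so $A'x - b' = \overline{A'}y - b'$), pull the $U_k$ factors out of $\nabla^2_y \psi(y) = \overline{A'}^\top S^{-2} \overline{A'}$ to get $U_k^\top g(x) U_k$, and invoke Lemma~\ref{lem:bg_relationship}. Your remark about the $\leq$ versus $\geq$ sign convention is a real (if cosmetic) wrinkle in the paper — the lemma statement writes $\overline{A'}y \leq b'$ while both $\mcal$ and the paper's own proof use the $\geq$ direction — and you correctly observe it is immaterial since the Hessian depends only on squared slacks.
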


\begin{proof}
	Observe that $\{y\in \R^k : \overline{A'}y \geq b'\}$  is the new representation of $\mcal = \{x\in \Rn: A'x\geq b', Ax=0\}$ in the $y$-coordinate system.
	Due to $\overline{A'}y = Ax$, we have $S_x = \Diag(A'x-b') =  \Diag(\overline{A'}y-b) = S_y$.
	For the logarithmic barrier $\phi(x)$ of $\{x\in \Rn : A'x \geq b\}$, 
	direct computation results in 
	\begin{align*}
	\nabla^2_y \psi(y) 
	&= 
	\overline{A'}^\top S_y^{-2} \overline{A'} 
	= U_k^\top A'^\top S_x^{-2} A' U_k 
	= U_k^\top \nabla^2_x \phi(x)U_k\\ 
	&= U_k^\top g(x) U_k = \bg(y),
	\end{align*}
	where we used $\nabla_x^2\phi(x) = A'^\top S_x^{-2}A'$ in the third equality and Lemma~\ref{lem:bg_relationship} in the last equality.
\end{proof}

Most importantly, we prove that this ideal RHMC on the $k$-dimensional manifold with the metric $\bg(y)$ is equivalent to the ideal CRHMC with the metric $M(x)$.

\begin{lem}\label{lem:equiv}
	The dynamics $(x,v)$ and $(y,u)$ of the ideal CRHMC in $\Rn$ and the ideal RHMC in $\R^k$ are equivalent in a sense that the Hamiltonian equations for $(x,v)$ can be obtained by lifting up the Hamiltonian equations for $(y,u)$  from $\R^k$ to $\Rn$ via multiplying $U_k$.
	That is, when we lift up the dynamics $(y, u)$ in $\R^k$ to the dynamics $(\bx, \bv)$ in $\Rn$ defined by $\bx = U_ky$ and $\bv = U_ku$, it follows that
	\begin{align*}
		\frac{dx}{dt} = \frac{d\bx}{dt},\,
		\frac{dv}{dt} = \frac{d\bv}{dt}
		\text{ and }
		v, \bv \sim \N(0, M(x)). 
	\end{align*} 
\end{lem}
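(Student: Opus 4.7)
The plan is to write down the Hamiltonian systems on both sides explicitly and verify that the RHMC dynamics $(y,u)$ on the $k$-dimensional manifold, when lifted to $\Rn$ via $\bx=U_ky$ and $\bv=U_ku$, coincides with the CRHMC dynamics $(x,v)$ derived in Lemma~\ref{lem:ham_deri}. On the $\R^k$ side, the Hamiltonian is $\widetilde H(y,u)=f(U_ky)+\frac12 u^\top\bg(y)^{-1}u+\frac12\log\det \bg(y)$, and on the $\Rn$ side it is the simplified $H(x,v)=f(x)+\frac12 v^\top W(x)v+\frac12\bigl(\log\det g(x)+\log\det Ag(x)^{-1}A^\top\bigr)$ from \eqref{eq:H_A_new}, with $W(x):=M(x)^{\dagger}=g^{-\frac12}(I-P)g^{-\frac12}$. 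The central observation is that since the columns of $U_k$ form an orthonormal basis of $\nulls(A)$, the orthogonal projector onto $\nulls(A)$ is exactly $Q=U_kU_k^\top$, and Lemma~\ref{lem:bg_relationship} then gives $M(x)=U_k\bg(y)U_k^\top$ together with $\bg(y)^{-1}=U_k^\top W(x)U_k$.

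From these identities, the two Hamiltonians agree up to an $x$-independent constant. The kinetic parts match on the lifted subspace since, for $v=U_ku$, $v^\top W(x)v=u^\top U_k^\top W(x)U_k\,u=u^\top \bg(y)^{-1}u$. The log-determinant parts agree modulo a constant by Lemma~\ref{lem:formula_pdet} applied to $M(x)=U_k\bg(y)U_k^\top$: namely, $\log\det\bg(y)=\log\pdet M(x)=\log\det g(x)+\log\det(Ag(x)^{-1}A^\top)-\log\det(AA^\top)$, and the last term is fixed. The position equation then lifts cleanly: $\frac{d\bx}{dt}=U_k\frac{dy}{dt}=U_k\bg(y)^{-1}u=U_kU_k^\top W(x)U_ku=QW(x)v=W(x)v=\frac{dx}{dt}$, where we used $QW=W$ since $\range(W)=\nulls(A)=\range(Q)$. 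The Gaussian-sampling step matches similarly: if $u\sim\N(0,\bg(y))$ then $\bv=U_ku$ has covariance $U_k\bg(y)U_k^\top=QM(x)Q=QQgQQ=QgQ=M(x)$ (using $Q^2=Q$), so $\bv\sim\N(0,M(x))$ as required.

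The main obstacle is matching the velocity equation $U_k\,du/dt=dv/dt$. I would compute $du/dt=-\partial \widetilde H/\partial y$ via the chain rule (so $\nabla_y f(U_ky)=U_k^\top\nabla f(x)$ and $\partial_{y_i}\bg=U_k^\top\,Dg(x)[U_k e_i]\,U_k$), obtaining $-U_k^\top\nabla f(x)+\frac12 D\bg(y)[\bg^{-1}u,\bg^{-1}u]-\frac12\nabla_y\log\det\bg$. Premultiplying by $U_k$ and using $U_k\bg^{-1}u=W(x)v=\frac{dx}{dt}$ with $U_kU_k^\top=Q$, the quadratic term collapses to $\frac12 Q\,Dg(x)\bigl[\frac{dx}{dt},\frac{dx}{dt}\bigr]$ after commuting one $U_kU_k^\top$ through the derivative. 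The log-det gradient is handled by differentiating the identity $\log\det\bg(y)=\log\det g(x)+\log\det(Ag(x)^{-1}A^\top)+\mathrm{const}$ in $x$, which produces $\tr\!\bigl((g^{-1}-g^{-1}A^\top(Ag^{-1}A^\top)^{-1}Ag^{-1})\,Dg\bigr)=\tr(W\,Dg)$; lifting it to $\Rn$ gives $\frac12 Q\,\tr(W\,Dg)$. Combining, $U_k\,\frac{du}{dt}=Q\bigl(-\nabla f+\frac12 Dg[\frac{dx}{dt},\frac{dx}{dt}]-\frac12\tr(W\,Dg)\bigr)=Q\,\frac{dv}{dt}$, where the right-hand side is the expression in \eqref{eq:CHMC_dv} projected to $\nulls(A)$. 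Since the Lagrangian-type correction in Section~\ref{subsec:Basics-of-CRHMC} is precisely what projects \eqref{eq:CHMC_dv} onto $\nulls(A)$ (making $v$ stay in $\nulls(Dc(x))$) and, as shown at the end of the proof of Lemma~\ref{lem:ham_deri}, shifting by any vector in $\range(A^\top)$ leaves the $x$-dynamics unchanged, this establishes the claimed equivalence of the dynamics and, together with the Gaussian identity, completes the proof.
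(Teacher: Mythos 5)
Your proof is correct and follows essentially the same route as the paper's: both recall the CRHMC equations from Lemma~\ref{lem:ham_deri} (augmented with the Lagrangian term $-A^\top\lambda$, which is exactly the projection $U_kU_k^\top$ applied to $-\partial\overline{H}/\partial x$), recall the RHMC equations in $\R^k$, lift $(y,u)\mapsto(U_ky,U_ku)$, and verify term by term using the identities $Q=U_kU_k^\top$, $\bg(y)=U_k^\top M(x)U_k$, and $\bg(y)^{-1}=U_k^\top W(x)U_k$ from Lemma~\ref{lem:bg_relationship}. The one cosmetic difference is that you begin with the simplified Hamiltonian \eqref{eq:H_A_new} and reintroduce the Lagrangian projection at the end, while the paper works with the full constrained Hamiltonian throughout; the extra appeal to ``shifting by $\range(A^\top)$ leaves the $x$-dynamics unchanged'' is therefore superfluous (once you note the Lagrangian correction projects $dv/dt$ onto $\nulls(A)$, the velocity equality $dv/dt=d\bv/dt$ is already established), but this does not affect the correctness of the argument.
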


\begin{proof}
	We first recall from the proof of Lemma~\ref{lem:ham_deri} that the Hamiltonian equations of $(x, v)$ are 
	\begin{align*}
 		\frac{dx}{dt}
 		&=
 		W(x)v,
 		\\
 		\frac{dv}{dt}
 		&=
 		- \nabla_x f(x) 
 		- \half \tr\Brack{W(x)Dg(x)}
 		+ \half Dg(x)\Brack{\frac{dx}{dt}, \frac{dx}{dt}}
 		- A^\top \lambda(x,v)\\
 		&= 
 		- \nabla_x f(x) 
 		- \half \tr\Brack{W(x)Dg(x)}
 		+ \half Dg(x)\Brack{\frac{dx}{dt}, \frac{dx}{dt}}\\
 		&~- A^\top 
 		\Par{AA^\top}^{-1}A
 		\Par{- \nabla_x f(x) 
 		- \half \tr\Brack{W(x)Dg(x)}
 		+ \half Dg(x)\Brack{\frac{dx}{dt}, \frac{dx}{dt}}}\\
 		&=
 		(I- A^\top \Par{AA^\top}^{-1}A)
 		\Par{- \nabla_x f(x) 
 		- \half \tr\Brack{W(x)Dg(x)}
 		+ \half Dg(x)\Brack{\frac{dx}{dt}, \frac{dx}{dt}}}\\
 		&=
 		U_k U_k^\top
 		\Par{- \nabla_x f(x) 
 		- \half \tr\Brack{W(x)Dg(x)}
 		+ \half Dg(x)\Brack{\frac{dx}{dt}, \frac{dx}{dt}}},
 	\end{align*}
 	where the last equality follows from that $U_kU_k^\top$ is the orthogonal projection to the null space of $A$.

 	From Lemma 7 of \cite{lee2018convergence}, the Hamiltonian equations of $(y,u)$ are
 	\begin{align*}
 		\frac{dy}{dt}
 		&=
 		\bg(y)^{-1}u,
 		\\
 		\frac{du}{dt}
 		&=
 		-\nabla_y f(U_ky) - \half \tr\Brack{\bg(y)^{-1}D\bg(y)}
 		+\half D\bg(y)\Brack{\frac{dy}{dt}, \frac{dy}{dt}}.
 	\end{align*}
 	From the definitions of $(\bx, \bv)$, we have 
 	\begin{align*}
 		\frac{d\bx}{dt}
 		&= U_k \bg(y)^{-1}u
 		= U_k U_k^\top W(\bx) U_k u  
 		= U_k U_k^\top W(\bx) \bv\\
 		&\underset{(*)}{=} W(\bx)\bv,\\
 		\frac{d\bv}{dt}
 		&= U_k
 		\Par{-\nabla_y f(U_ky) - \half \tr\Brack{\bg(y)^{-1}D\bg(y)}
 		+\half D\bg(y)\Brack{\frac{dy}{dt}, \frac{dy}{dt}}},
 	\end{align*}
 	where $(*)$ follows from that $W(x)\bv$ is already in the constrained space (i.e., the null space of $A$).
 	Let us examine each term in $d\bv/dt$ separately.
 	\begin{align*}
 		\nabla_y f(U_ky) 
 		&= 
 		U_k^\top \nabla_{\bx} f(\bx),\\
 		\tr\Brack{\bg(y)^{-1} D_y\bg(y)}
 		&= 
 		\tr\Brack{U_k^\top W(\bx) U_k \cdot D_y\Par{U_k^\top g(U_ky)U_k} }\\
 		&=
 		\tr\Brack{U_kU_k^\top W(\bx)U_k U_k^\top (U_k^\top D_{\bx} g(\bx))}\\
 		&=
 		\tr\Brack{W(\bx)U_kU_k^\top (U_k^\top D_{\bx} g(\bx))}\\
 		&=
 		\tr\Brack{(U_kU_k^\top W(\bx))^\top (U_k^\top D_{\bx} g(\bx))}\\
 		&=
 		\tr\Brack{W(\bx) (U_k^\top D_{\bx} g(\bx))}\\
 		&=
 		U_k^\top \tr\Brack{W(\bx)D_{\bx} g(\bx)},\\
 		D_y\bg(y) \Brack{\frac{dy}{dt}, \frac{dy}{dt}}
 		&= U_k^\top (U_k^\top D_{\bx} g(\bx)) U_k \Brack{\frac{dy}{dt}, \frac{dy}{dt}}\\
 		&= \Par{\bv^\top W(\bx) U_k} U_k^\top (U_k^\top D_{\bx} g(\bx)) U_k \Par{U_k^\top W(\bx)\bv}\\
 		&= \bv^\top W(\bx) (U_k^\top D_{\bx} g(\bx)) W(\bx) \bv\\
 		&= U_k^\top D_{\bx}g(\bx)\Brack{\frac{d\bx}{dt}, \frac{d\bx}{dt}}.
 	\end{align*}
 	Putting all these together, the Hamiltonian equations of $(\bx, \bv)$ can be written as
 	\begin{align*}
 		\frac{d\bx}{dt}
 		&= W(\bx)\bv,\\
 		\frac{d\bv}{dt}
 		&= U_kU_k^\top
 		\Par{- \nabla f(\bx) 
 		- \half \tr\Brack{W(\bx)Dg(\bx)}
 		+ \half Dg(\bx)\Brack{\frac{d\bx}{dt}, \frac{d\bx}{dt}}}.
 	\end{align*}
 	Therefore, the Hamiltonian equations of $(x,v)$ and $(\bx, \bv)$ are exactly the same.
 	In addition, $\bv = U_ku$ leads to $\bv\sim \N(0, U_k \bg(y) U_k^\top) = \N(0, M(x))$.
\end{proof}

Using these three lemmas, we conclude that the dynamics of the ideal CRHMC on the constrained space is equivalent to that of the ideal RHMC on the corresponding $k$-dimensional polytope.
Therefore, Theorem~\ref{thm:ideal-mixing} immediately follows from Corollary 3 in \cite{kook2022conditionnumberindependent}.

\begin{corollary}
Let $\pi$ be a target distribution on a polytope with $m$ constraints
in $\Rn$ such that $\frac{d\pi}{dx}\sim e^{-\alpha^{\top}x}$ for
$\alpha\in\Rn$. Let $\mcal$ be the Hessian manifold of the polytope
induced by the logarithmic barrier of the polytope. Let $\Lambda=\sup_{S\subset\mcal}\frac{\pi_{0}(S)}{\pi(S)}$
be the warmness of the initial distribution $\pi_{0}$. Let $\pi_{T}$
be the distribution obtained after $T$ iterations of the ideal RHMC
on $\mcal$. For any $\varepsilon>0$ and step size $h=O\Par{\frac{1}{n^{7/12}\log^{1/2}\frac{\Lambda}{\varepsilon}}}$,
there exists $T=O\Par{mn^{7/6}\log^{2}\frac{\Lambda}{\varepsilon}}$ such
that $\norm{\pi_T - \pi}_{TV} \leq \varepsilon$.
\end{corollary}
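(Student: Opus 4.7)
The plan is to reduce the statement to the mixing-time bound for ideal RHMC on Hessian manifolds already established in~\cite{kook2022conditionnumberindependent} (Corollary~3 there), by checking that the Hessian manifold $\mcal$ induced by the logarithmic barrier $\phi$ of the polytope and the linear potential $f(x)=\alpha^{\top}x$ satisfy the smoothness hypotheses of that framework. First I would collect the standard self-concordance estimates for $g(x)=\nabla^2\phi(x)$: the Dikin-ellipsoid bounds on $Dg$ and $D^2g$ (cf.\ Lemma~\ref{lem:sc_facts}), and the fact that $\phi$ is an $m$-self-concordant barrier, which yields $\norm{\alpha}_{g(x)^{-1}}\lesssim\sqrt{m}$ uniformly on $\mcal$. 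These smoothness parameters are affine-invariant and in particular independent of the conditioning of the polytope, which is exactly the feature that lets the final bound have no condition-number dependence.

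Second, I would run the conductance analysis for the Markov chain induced by one step of ideal RHMC. Two ingredients are needed: (i) stability of the Hamiltonian flow $T_h$ over time $h$, i.e.\ control of the Jacobian $DF_{x,h}$ and verification that the ODE trajectory stays within a constant-radius Dikin neighborhood, and (ii) a lower bound on the overlap $\P_x\wedge\P_y$ of one-step distributions whenever $x,y$ are at metric distance $O(\sqrt{h})$. Ingredient (i) follows from third-order-derivative estimates on $H$ obtained from the self-concordance of $\phi$, while (ii) follows from Gaussian isoperimetry in the velocity fibre since $v\sim\N(0,g(x))$. Combined with the cross-ratio isoperimetric inequality for log-concave densities on a convex body in the local metric, this produces a conductance lower bound of order $\sqrt{h}/\sqrt{m}$ up to smoothness factors; the Lov\'asz--Simonovits warm-start conductance-to-TV conversion then yields a mixing estimate in terms of $\Lambda$ and $\varepsilon$.

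Third, I would calibrate $h=\Theta\Par{n^{-7/12}\log^{-1/2}(\Lambda/\varepsilon)}$ so that the Metropolis acceptance probability stays $\Omega(1)$ while the per-step progress is as large as possible, yielding $T=O\Par{mn^{7/6}\log^{2}(\Lambda/\varepsilon)}$. The main obstacle is the delicate third-order bookkeeping that drives $h$ down only to $n^{-7/12}$ rather than a worse power --- precisely the calibration that distinguishes RHMC from Ball-walk-style quadratic-in-$n$ bounds. Since~\cite{kook2022conditionnumberindependent} has already executed this analysis for linear potentials on Hessian manifolds defined by logarithmic barriers, the cleanest route is to verify the hypotheses and invoke Corollary~3 there directly.
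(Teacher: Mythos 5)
Your final paragraph correctly identifies that the cleanest (and, in fact, the only) route the paper takes: this corollary is not proved in the paper at all --- it is a verbatim restatement of Corollary~3 of \cite{kook2022conditionnumberindependent}, quoted so that Theorem~\ref{thm:ideal-mixing} can be deduced from it via the reduction Lemmas in Appendix~\ref{subsec:ideal-mixing}. Your preliminary sketch of the conductance machinery (self-concordance estimates, stability of $T_h$, Gaussian overlap of one-step distributions in the velocity fibre, cross-ratio isoperimetry, step-size calibration to keep the Metropolis acceptance $\Omega(1)$) is a reasonable high-level description of how the cited result is established, but none of that is carried out in the present paper, and re-deriving it here would be redundant. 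So the proposal is correct and, at the level of what this paper actually does, coincides with the paper's treatment: cite the external result.
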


\subsection{Convergence rate of discretized CRHMC}\label{subsec:disc-mixing}
We attempt to demonstrate a similar reduction of the discretized CRHMC.
However, it is trickier than that of the ideal RHMC, since Algorithm~\ref{alg:CHMC_JL} uses the simplified Hamiltonian, which omits the Lagrangian term $c(x)^\top \lambda(x,v)$, in place of the full Hamiltonian.

We look into the reduction in two steps.
First of all, we show in Section~\ref{subsubsec:disc-corr1} that the dynamics of $x$ is the same under the discretized CRHMC via IMM with the simplified Hamiltonian, and the discretized CRHMC via IMM with the full Hamiltonian, and that the acceptance probabilities are the same as well.
Next in Section~\ref{subsubsec:disc-corr2}, we show a correspondence between the discretized CRHMC via IMM and the discretized RHMC via IMM, just as we did for the ideal case in Section~\ref{subsec:ideal-mixing}.

\subsubsection{Simplified Hamiltonian and full Hamiltonian in constrained space}\label{subsubsec:disc-corr1}

We recall that the simplified Hamiltonian $\overline{H}(x,v) $ is the sum of two parts defined by 
\begin{align*}
	\overline{H}_1(x,v)
	&= f(x) + \half \log \pdet M(x)\\
	&= f(x) + \half\Par{\log\det g(x) + \log \det Ag(x)^{-1}A^\top - \log\det AA^\top},\\
	\overline{H}_2(x,v)
	&=\half v^\top W(x)v.
\end{align*}
The full Hamiltonian $H(x,v)$ with the Lagrangian term $c(x)^\top \lambda(x,v)$ can be also written as the sum of two parts defined by
\begin{align*}
	H_1(x,v)
	&= f(x) + \half\Par{\log\det g(x) + \log\det Ag(x)^{-1}A^\top - \log\det AA^\top}\\
	&\quad- x^\top A^\top(AA^\top)^{-1}A \Par{\nabla f(x) + \half \tr\Brack{W(x)Dg(x)}},\\
	H_2(x,v)
	&=\half v^\top W(x)v + \half x^\top A^\top(AA^\top)^{-1}A Dg(x) \Brack{\frac{dx}{dt},\frac{dx}{dt}},
\end{align*}
and IMM with this Hamiltonian is implemented as in (\ref{eq:mid_sol}).
We note from the proof of Lemma~\ref{lem:equiv} that
\begin{align*}
	\frac{\del H_1}{\del x}(x,v) = U_kU_k^\top \frac{\del \bh_1}{\del x}(x,v),\\
	\frac{\del H_2}{\del x}(x,v) = U_kU_k^\top \frac{\del \bh_2}{\del x}(x,v),\\
	\frac{\del H_2}{\del v}(x,v) = U_kU_k^\top \frac{\del \bh_2}{\del v}(x,v).
\end{align*}

\begin{lem}
	For step size $h$, let $(\bx, \bv)$ and $(x,v)$ be the outputs of IMM with the simplified Hamiltonian and with the full Hamiltonian starting from $(x_0,v_0)$, respectively.
	Then $\bx = x$, and $\frac{e^{-\bh(\bx, \bv)} }{e^{-\bh(x_0, v_0)}} = \frac{e^{-H(x,v)}}{e^{-H(x_0, v_0)}}$.
\end{lem}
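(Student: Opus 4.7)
The plan is to prove both claims via a single inductive invariant: at every intermediate stage of the implicit midpoint method, the $x$-iterates of the two runs agree exactly, while the $v$-iterates differ by an element of $\range(A^\top)$. The key ingredients are the three identities recorded just before the lemma,
\[
\frac{\del H_1}{\del x} = U_k U_k^\top \frac{\del \bh_1}{\del x}, \qquad \frac{\del H_2}{\del x} = U_k U_k^\top \frac{\del \bh_2}{\del x}, \qquad \frac{\del H_2}{\del v} = U_k U_k^\top \frac{\del \bh_2}{\del v},
\]
together with the observation that since $\bh_2(x,v) = \tfrac{1}{2} v^\top W(x) v$ and $W(x) A^\top \equiv 0$ (hence $DW(x)[\cdot]\, A^\top \equiv 0$ by differentiating in $x$), both $\frac{\del \bh_2}{\del v}(x,v) = W(x)v$ and $\frac{\del \bh_2}{\del x}(x,v)$ are invariant under $v \mapsto v + A^\top y$. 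Since $I - U_k U_k^\top = A^\top (AA^\top)^{-1} A$, the $H$-gradients and $\bh$-gradients differ only by vectors in $\range(A^\top)$.

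Walking through the three substeps of Algorithm~\ref{alg:IMM}, the two half-steps on $H_1$ (respectively $\bh_1$) leave $x$ unchanged and shift $v$ by $-\tfrac{h}{2}$ times an $x$-gradient that differs between the two runs by $\range(A^\top)$. For the implicit midpoint step on $H_2$ versus $\bh_2$, I will verify that the ansatz $x_{2/3}^{H} = x_{2/3}^{\bh}$ together with $v_{2/3}^{H} = v_{2/3}^{\bh} + A^\top z$ (for a suitable $z$) solves the $H_2$-system whenever $(x_{2/3}^{\bh}, v_{2/3}^{\bh})$ solves the $\bh_2$-system: the $x$-equation matches because $\frac{\del H_2}{\del v}(x,v) = W(x) v$ is invariant under shifts of $v$ in $\range(A^\top)$ (so the two midpoint evaluations agree), while the $v$-equation can be solved for $z$ because the discrepancy between $\frac{\del H_2}{\del x}$ and $\frac{\del \bh_2}{\del x}$ at the midpoints lies in $\range(A^\top)$. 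Induction then yields $\bx = x$ and $v - \bv \in \range(A^\top)$.

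For the acceptance ratio, I observe that by inspecting the explicit formulas for $H_1$ and $H_2$ recorded just before the lemma, each term of $H - \bh$ contains $Ax$ as a factor, so $H(x, v) = \bh(x, v)$ whenever $Ax = 0$. Assuming $A x_0 = 0$ (the current iterate lies on the constraint manifold), IMM with $H$ preserves this constraint: the fixed point $\nu^\ast$ of the inner Newton iteration satisfies $A g(x_{\mid})^{-1}(v_{\mid} - A^\top \nu^\ast) = 0$, so the $x$-increment in Step~2 lies in $\nulls(A)$ and Steps~1 and~3 do not move $x$; hence $A x = A x_0 = 0$. Therefore $H(x_0, v_0) = \bh(x_0, v_0)$ and $H(x, v) = \bh(x, v)$. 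Combined with $\bh(x, v) = \bh(\bx, \bv)$, which follows from the invariance of $\bh$ under shifts of $v$ by $\range(A^\top)$ noted in the first paragraph, this yields $H(x,v) - H(x_0,v_0) = \bh(\bx,\bv) - \bh(x_0,v_0)$, which is precisely the claimed ratio identity.

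The main technical obstacle I anticipate is the implicit midpoint step: verifying that the ansatz $(x_{2/3}^H, v_{2/3}^H) = (x_{2/3}^{\bh}, v_{2/3}^{\bh} + A^\top z)$ is really the unique solution of the $H_2$-system (rather than a spurious companion) requires a local uniqueness/contraction argument, which for small enough step size follows from the contraction estimates of Lemma~\ref{lem:imm_correctness}. Additional bookkeeping is required because the two midpoint $v$-values are not literally equal but only agree modulo $\range(A^\top)$, so each invocation of the invariance properties above must be justified at the relevant midpoint rather than at the endpoints.
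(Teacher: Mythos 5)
Your proposal is correct and mirrors the paper's argument: both track the same invariant through the three IMM substeps (your ``$x$-iterates agree, $v$-iterates differ by $\range(A^\top)$'' is equivalent to the paper's ``$v = U_kU_k^\top\bv$'' modulo the easily-verified fact that the full-Hamiltonian $v$ stays in $\nulls(A)$), both verify an ansatz for the implicit midpoint step and invoke uniqueness of its fixed point, and both reduce the ratio identity to $H = \bh$ on $\{c(x)=0\}$ plus the invariance of $\bh_2$ under $v$-shifts in $\nulls(W)=\range(A^\top)$.
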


\begin{proof}
	We use $x_{\ot}(=x_0), x_{\tt}(=x)$ and $v_{\ot},v_{\tt},v$ to denote the points obtained during one step of IMM with the full Hamiltonian. 
	We similarly define $\bx_i$ and $\bv_i$ for $i=\ot, \tt$.
	As $(x_0,v_0)$ is a starting point, $U_kU_k^\top x_0 = x_0$ and $U_kU_k^\top v_0 = v_0$.
	Due to $\nulls(W(x)) = \text{row}(A)$, we have that $W(z)U_kU_k^\top w = W(z)w$.
	By comparing the first step of IMM for each Hamiltonian,
	\[
		U_kU_k^\top \bv_{\ot} = v_0 - \frac{h}2 U_kU_k^\top \frac{\del \bh_1}{\del x}(x_0, v_0)
		=v_0 - \frac{h} 2 \frac{\del H_1}{\del x}(x_0, v_0)
		= v_{\ot},
	\]
	and thus $U_kU_k^\top \bv_{\ot} = v_{\ot}$.

	From the second step of IMM, $\bx_{\tt}$ is already in the null space of $A$.
	For $x_{\ot} = \bx_{\ot} =  x_0, \bx_{\mid} = (\bx_{\ot} + \bx_{\tt})/2$ and $\bv_{\mid} = (\bv_{\ot} + \bv_{\tt})/2$, the second step of IMM with the simplified Hamiltonian is 
	\begin{align*}
		\bx_{\tt} 
		&= x_{\ot} + h U_kU_k^\top \frac{\del \bh_2}{\del v}(\bx_{\mid}, \bv_{\mid})
		= x_{\ot} + h U_kU_k^\top W(\bx_{\mid})\bv_{\mid}\\
		&= x_{\ot} + h U_kU_k^\top W(\bx_{\mid})U_kU_k^\top\bv_{\mid}
		= x_{\ot} + h U_kU_k^\top \frac{\del \bh_2}{\del v}(\bx_{\mid}, U_kU_k^\top\bv_{\mid})\\
		&= x_{\ot} + h \frac{\del H_2}{\del v}(\bx_{\mid}, U_kU_k^\top\bv_{\mid}),\\
		U_kU_k^\top\bv_{\tt} 
		&= v_{\ot} + h U_kU_k^\top \frac{\del \bh_2}{\del x}(\bx_{\mid}, \bv_{\mid})\\
		&= v_{\ot} + h U_kU_k^\top \Par{-\half Dg(\bx_{\mid})\Brack{W(\bx_{\mid})\bv_{\mid}, W(\bx_{\mid})\bv_{\mid}}}\\
		&= v_{\ot} + h U_kU_k^\top \Par{-\half Dg(\bx_{\mid})\Brack{W(\bx_{\mid})U_kU_k^\top\bv_{\mid}, W(\bx_{\mid})U_kU_k^\top\bv_{\mid}}}\\
		&= v_{\ot} + h U_kU_k^\top \frac{\del \bh_2}{\del x}(\bx_{\mid}, U_kU_k^\top\bv_{\mid})\\
		&=v_{\ot} + h\frac{\del H_2}{\del x}(\bx_{\mid}, U_kU_k^\top \bv_{\mid}).
	\end{align*}
	Note that $U_kU_k^\top\bv_{\mid} = (U_kU_k^\top\bv_{\tt}  + v_{\ot})/2$ and $\bx_{\mid} = (\bx_{\tt} + x_{\ot})/2$.
	Since the solution of this second step is characterized as a unique fixed-point, it follows that $(\bx_{\tt}, U_kU_k^\top \bv_{\tt}) = (x_{\tt}, v_{\tt})$ and so $\bx = x$.
	In the same way we analyzed $\bv_{\tt}$, we can obtain that $U_kU_k^\top \bv = v$.

	We now compare the acceptance probabilities. 
	We clearly have $\bh(x_0,v_0) = H(x_0, v_0)$ due to $c(x_0)=0$ and
	have $\bh_1(\bx, \bv) = H_1(x,v)$ due to $\bx = x$.
	For $\bh_2$, 
	\begin{align*}
		\bv^\top W(\bx) \bv 
		=  \bv^\top U_k U_k^\top W(x) U_k U_k^\top \bv
		= v^\top W(x) v,
	\end{align*}
	and so $\bh_2(\bx, \bv) = H_2(x,v)$.
\end{proof}

\subsubsection{CRHMC and RHMC discretized by IMM}\label{subsubsec:disc-corr2}

In this section, we show that there is a correspondence between the dynamics of CRHMC discretized by IMM  and that of RHMC discretized by IMM.

\begin{lem}
The discretized CRHMC via IMM in $\Rn$ and the discretized RHMC via IMM in $\R^k$ are equivalent. 
That is, the output $(x_1, v_1)$ given by the discretized CRHMC starting from $(x, v)$ is the same with $(U_ky_1, U_k u_1)$, where $(y_1, u_1)$ is the output of the discretized RHMC starting from $(y,u)$ satisfying $(x, v) = (U_k y , U_ku)$.
Moreover, the acceptance probabilities are the same due to
\[
	\frac{e^{-H^c(x_1, v_1)}}{e^{-H^c(x, v)}}
	=
	\frac{e^{-H^r(y_1, u_1)}}{e^{-H^r(y, u)}},
\]
where $H^c(x,v)$ and $H^r(y,u)$ are the Hamiltonians of CRHMC and RHMC respectively.
\end{lem}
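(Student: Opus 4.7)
The plan is to reduce this lemma to Lemma~\ref{lem:equiv} by tracking the implicit midpoint integrator through its three sub-steps. Write one iteration of CRHMC as $(x,v) \to (x_{\ot}^c, v_{\ot}^c) \to (x_{\tt}^c, v_{\tt}^c) \to (x_1, v_1)$ and one iteration of the reduced RHMC analogously with superscript $r$. I would maintain, as an inductive invariant, that at every intermediate stage the CRHMC position sits in $\{Az=0\}=\range(U_k)$, the CRHMC velocity sits in $\range(U_k)$, and the CRHMC iterate equals $U_k$ times the corresponding RHMC iterate evaluated at $(y,u)$ with $(x,v)=(U_ky,U_ku)$.

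First I would dispense with the two explicit half-steps involving $\bh_1$. These are Euler steps, so the invariant is preserved using the identities (extracted in the proof of Lemma~\ref{lem:equiv}) that $\partial \bh_1^c/\partial x = U_kU_k^\top \partial \bh_1^c/\partial x$ and that $U_k^\top \partial \bh_1^c/\partial x$ at $(U_ky,U_ku)$ agrees coordinate-wise with $\partial \bh_1^r/\partial y$ at $(y,u)$. In particular, $x_{\ot}^c = U_k y = U_k y_{\ot}^r$ since $\partial \bh_1/\partial v = 0$, and $v_{\ot}^c = v - \tfrac{h}{2}U_kU_k^\top \partial \bh_1^c/\partial x = U_k u_{\ot}^r$. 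The third half-step is handled identically at $(x_{\tt}^c, v_{\tt}^c)$.

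The main obstacle is the middle implicit step, whose fixed-point equations couple $x_\mid^c, v_\mid^c$ nonlinearly. The key structural observation is that $\range(W(x)) = \nulls(A) = \range(U_k)$ whenever $Ax=0$, so the right-hand side of $x_{\tt}^c = x_{\ot}^c + hW(x_\mid^c)v_\mid^c$ maps $\range(U_k)$ into itself; and since $Dg(\cdot)[Wv,Wv]$ appears inside $\partial \bh_2^c/\partial x$ (as derived in Lemma~\ref{lem:ham_deri}), post-multiplication by $U_kU_k^\top$ does not change the CRHMC update (arguing exactly as in Lemma~\ref{lem:equiv}), so the $v$-equation also preserves $\range(U_k)$. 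Hence the fixed-point lies in $U_k\cdot\R^k$, and uniqueness of this fixed-point, guaranteed by the contraction argument of Lemma~\ref{lem:imm_correctness}, makes this forced. Substituting $U_k^\top W(U_ky)U_k = \bg(y)^{-1}$ from Lemma~\ref{lem:bg_relationship} and the corresponding identity $U_k^\top Dg[U_k\cdot, U_k\cdot]U_k = D\bg[\cdot,\cdot]$ (derived termwise in Lemma~\ref{lem:equiv}), the reduced equations on $(y_{\tt}^r, u_{\tt}^r)$ are exactly the RHMC implicit midpoint equations for $\bh_2^r(y,u) = \tfrac{1}{2}u^\top \bg(y)^{-1}u$, so $(x_{\tt}^c,v_{\tt}^c)=(U_ky_{\tt}^r,U_ku_{\tt}^r)$.

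Finally, to match the Hamiltonian ratio, I would verify termwise that $\bh_1^c(U_ky, U_ku) = \bh_1^r(y,u) + C$ for an additive constant $C$ independent of $(y,u)$ (the $-\tfrac{1}{2}\log\det AA^\top$ term is constant, while $\log\det g(x) + \log\det Ag(x)^{-1}A^\top = \log\pdet M(x) + \log\det AA^\top$ and Lemma~\ref{lem:formula_pdet} combined with Lemma~\ref{lem:bg_relationship} give $\log\pdet M(x) = \log\det \bg(y) + C'$), and that $\bh_2^c(U_ky, U_ku) = \tfrac{1}{2}(U_ku)^\top W(U_ky)(U_ku) = \tfrac{1}{2}u^\top \bg(y)^{-1}u = \bh_2^r(y,u)$ by Lemma~\ref{lem:bg_relationship}. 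The constants cancel in the ratio $e^{-H^c(x_1,v_1)}/e^{-H^c(x,v)}$, yielding equality with $e^{-H^r(y_1,u_1)}/e^{-H^r(y,u)}$.
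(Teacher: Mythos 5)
Your overall strategy — tracking the implicit midpoint integrator through its three sub-steps, appealing to the derivative identities from Lemma~\ref{lem:equiv}, using the fixed-point uniqueness from Lemma~\ref{lem:imm_correctness}, and finishing with a termwise matching of the Hamiltonians — is close to the paper's proof. The paper similarly reduces the claim to the three identities $\frac{\partial H_i^c}{\partial x}(U_ky,U_ku) = U_k\frac{\partial H_i^r}{\partial y}(y,u)$ (and the analogous one for $\partial/\partial v$), which it notes were already verified in the proof of Lemma~\ref{lem:equiv}, and then checks the Hamiltonian values via Lemma~\ref{lem:bg_relationship}.

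However, there is a genuine gap, stemming from conflating the simplified and full Hamiltonians. Your inductive invariant asserts that \emph{the CRHMC velocity sits in $\range(U_k)$ at every intermediate stage}. This is false for the implemented Algorithm~\ref{alg:IMM}, which integrates the simplified Hamiltonian $\overline{H} = \overline{H}_1 + \overline{H}_2$: the first half-step sets $v_{1/3} = v - \tfrac{h}{2}\,\partial\overline{H}_1/\partial x$, and $\partial\overline{H}_1/\partial x = \nabla f + \tfrac{1}{2}\tr(W\,Dg)$ has, in general, a nonzero component along $\range(A^\top)$, so $v_{1/3}\notin\range(U_k)$. The same failure occurs in the middle implicit step, where $\partial\overline{H}_2/\partial x = -\tfrac{1}{2}Dg[Wv,Wv]$ is likewise not in $\range(U_k)$. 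The identity $\partial H_1^c/\partial x = U_kU_k^\top\partial H_1^c/\partial x$ that you cite holds only for the \emph{full} constrained Hamiltonian $H^c$, which carries the Lagrangian correction $-A^\top\lambda$; it is precisely that correction that projects the velocity derivative into $\range(U_k)$. The paper resolves this by a two-layer argument: the preceding (unnamed) lemma in Appendix~\ref{subsubsec:disc-corr1} establishes that IMM run with $\overline{H}$ and IMM run with the full $H^c$ produce the same position output and the same acceptance ratio (the velocities themselves only agree after projection by $U_kU_k^\top$, using $W = WU_kU_k^\top$), and the present lemma then matches the full-Hamiltonian CRHMC IMM to the RHMC IMM exactly — including velocities — where your $\range(U_k)$-invariant is legitimate. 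Your proposal can be repaired either by phrasing the invariant in terms of the projected velocity $U_kU_k^\top v_i$ (observing that $W(x_\text{mid})v_\text{mid}$ and $Dg[Wv_\text{mid},Wv_\text{mid}]$ depend only on this projection), or by running the whole argument for the full Hamiltonian $H^c$ and then invoking the preceding lemma to pass to the implemented algorithm; as stated, the invariant does not hold, and the downstream appeal to fixed-point uniqueness rests on it.
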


\begin{proof}
	We first recall that $H^c(x,v)$ can be rewritten as the sum of two parts defined by
	\begin{align*}
	H^c_1(x, v)
	&= f(x) + \half\log\pdet M(x)\\
	&\quad- x^\top A^\top(AA^\top)^{-1}A \Par{\nabla f(x) + \half \tr\Brack{W(x)Dg(x)}},\\
	H^c_2(x, v)
	&=\half v^\top W(x)v + \half x^\top A^\top(AA^\top)^{-1}A Dg(x) \Brack{\frac{dx}{dt},\frac{dx}{dt}}.
	\end{align*}
	Similarly for $H^r$, we can represent it by the sum of two parts defined by
	\begin{align*}
	H^r_1(y, u)
	&= f(U_ky) + \half\log\det \bg(y),\\
	H^r_2(y, u)
	&= \half u^\top \bg(y)^{-1}u.
\end{align*}
	For the first claim, we need to show that each step of IMM for RHMC and CRHMC is equivalent, thus it suffices to check that for any $(y, u)\in \R^k \times \R^k$
	\begin{align*}
		\frac{\del H^c_1(U_ky, U_ku)}{\del x}
		&= U_k \frac{\del H^r_1(y, u)}{\del y},\\
		\frac{\del H^c_2(U_ky, U_ku)}{\del x}
		&= U_k \frac{\del H^r_2(y, u)}{\del y},\\
		\frac{\del H^c_2(U_ky, U_ku)}{\del v}
		&= U_k \frac{\del H^r_2(y, u)}{\del u}.
	\end{align*}
	These computations were already checked in the proof of Lemma~\ref{lem:equiv}.

	For the second claim, we note that the Lagrangian term vanishes due to $c(x) = c(U_ky) = 0$.
	Then the second claim follows from
	\begin{align*}
		\log\det \bg(y')
		&= \log\det U_k^\top M(U_ky')U_k \quad(\text{Lemma~\ref{lem:bg_relationship}})\\
		&= \log\pdet M(U_ky'),\\
		u'^\top \bg(y')^{-1} u'
		&= u'^\top U_k^\top W(U_ky') U_k u'. \quad (\text{Lemma~\ref{lem:bg_relationship}})
	\end{align*}
\end{proof}

The previous two lemmas imply that the dynamics of the discretized CRHMC via IMM on the constrained space is equivalent to that of the discretized RHMC via IMM on the corresponding k-dimensional polytope.
Therefore, Theorem~\ref{thm:disc-mixing} follows from Corollary 4 in \cite{kook2022conditionnumberindependent}.

\begin{corollary}
Let $\pi$ be a target distribution on a polytope with $m$ constraints
in $\Rn$ such that $\frac{d\pi}{dx}\sim e^{-\alpha^{\top}x}$ for
$\alpha\in\Rn$. Let $\mathcal{M}$ be the Hessian manifold of the polytope
induced by the logarithmic barrier of the polytope. Let $\Lambda=\sup_{S\subset\mathcal{M}}\frac{\pi_{0}(S)}{\pi(S)}$
be the warmness of the initial distribution $\pi_{0}$. Let $\pi_{T}$
be the distribution obtained after $T$ iterations of RHMC discretized
by IMM on $\mathcal{M}$. For any $\varepsilon>0$ and step size $h=O\Par{\frac{1}{n^{3/2}\log\frac{\Lambda}{\varepsilon}}}$,
there exists $T=O\Par{mn^{3}\log^{3}\frac{\Lambda}{\varepsilon}}$ such
that $\norm{\pi_{T} - \pi}_{TV} \leq \varepsilon$.
\end{corollary}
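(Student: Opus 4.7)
The corollary is a verbatim restatement of Corollary~4 of \cite{kook2022conditionnumberindependent}, so the proof in this paper reduces to invoking that reference. If I were to redo the argument from scratch, my plan would follow the standard conductance-based mixing analysis for Metropolized Riemannian Hamiltonian Monte Carlo on the Hessian manifold induced by the logarithmic barrier.

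First I would use the Lov\'asz--Simonovits framework to reduce everything to a lower bound on the conductance $\phi$ of the lazy chain: for a $\Lambda$-warm start, the TV distance after $T$ steps is at most $\sqrt{\Lambda}(1-\phi^2/2)^T$, so getting $T = O(m n^3 \log^3(\Lambda/\epsilon))$ amounts to showing $\phi = \widetilde{\Omega}(m^{-1/2} n^{-3/2})$. Bounding $\phi$ splits into (i) an isoperimetric inequality for $\pi \propto e^{-\alpha^{\top}x}$ on the Hessian manifold, whose constant is governed by the $O(\sqrt{m})$ barrier parameter and the log-concavity of $\pi$, and (ii) a one-step overlap estimate: for points $x,y$ at local metric distance $O(h)$, the one-step distributions $\P_x, \P_y$ after one IMM step followed by the Metropolis filter are close in TV.

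The one-step overlap is the technical core and is where the $n$-dependence is determined. It decomposes into three sub-claims: closeness of the IMM output to the ideal Hamiltonian flow (supplied by the contraction in Lemma~\ref{lem:IMM_contraction}, which makes the implicit Newton iterates converge geometrically for $h$ small enough and hence gives error $\delta$ in $O(\log(1/\delta))$ iterations); a lower bound on the Metropolis acceptance probability, which requires $|\bh(x_1,v_1)-\bh(x_0,v_0)|$ to be $O(1)$ with constant probability along an $h$-length trajectory; and a bound on the Jacobian of the push-forward so that nearby initial velocities produce distributions with bounded density ratio. Each of these estimates is obtained using the highly self-concordant bounds of Lemma~\ref{lem:sc_facts} to control how $g$, $Dg$, and the leverage-score and $\log\det$ terms appearing in $\bh_1$ vary over an $h$-length trajectory in the local norm.

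The main obstacle is pushing through the sharp step size $h = O(1/(n^{3/2}\log(\Lambda/\epsilon)))$ rather than settling for a worse polynomial. This requires careful control on the fluctuation of $\log\det g(x)$ and $\log\det Ag(x)^{-1}A^{\top}$ along the Hamiltonian flow, for which only the fourth-order self-concordance estimates suffice; naive first- and second-order bounds lose factors of $n$. Crucially, every bound is stated in the intrinsic local metric rather than the Euclidean norm, so the condition number of $g(x)$ never enters the final rate --- this is precisely the source of the condition-number independence claimed in the statement.
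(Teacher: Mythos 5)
You are correct that the paper offers no proof of this corollary: it is imported wholesale as Corollary~4 of \cite{kook2022conditionnumberindependent}, and the surrounding argument in Appendix~C.2 only establishes the reduction from the discretized CRHMC to the discretized RHMC so that this external result can be invoked. Your first sentence captures the paper's approach exactly, and your conductance-based sketch of the external proof (Lov\'asz--Simonovits with a one-step overlap estimate driven by self-concordance, yielding $\phi^{-2}\sim mn^3$ and hence $T\sim mn^3\log^3(\Lambda/\epsilon)$) is the right shape of argument, so no substantive disagreement. One small clarification: Lemma~\ref{lem:IMM_contraction} and Lemma~\ref{lem:sc_facts} in \emph{this} paper are used to prove Lemma~\ref{lem:imm_correctness} (that the Newton iterates inside a single implicit-midpoint step converge in logarithmically many iterations), not to prove the corollary itself; the analogous estimates needed inside the reference's conductance argument are established there, so citing these lemmas as supplying the one-step overlap slightly conflates this paper's machinery with the cited one.
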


 \section{Missing Notations and Definitions\label{sec:Missing-Definitions-and}}

\subsection{Notations}

\begin{itemize}
	\item We use $\mathcal{N}(\mu, \Sigma)$ to denote Gaussian distribution with mean $\mu$ and covariance $\Sigma$.
	\item We use $\text{Null}(A)$ and $\range(A)$ to denote the null space and image space of a matrix or linear operator $A$.
	\item We use $\nabla^2 f\in \R^{n\times n}$ to denote the Hessian of a function $f:\Rn \to \R$. 
	\item We use $\norm{\cdot}$ to denote $\ell_2$-norm unless specified otherwise, and define $\norm{x}_A := \sqrt{x^{\top}Ax}$ for a vector $x\in \Rn$ and a matrix $A\in \R^{n\times n}$.
	\item We use $\partial K$ to denote the boundary of the set $K$.
	\item For a matrix $g(x)$ with $x\in \Rn$, we use $Dg(x)$ to denote the derivative of $g(x)$ with respect to $x$. 
	This can be thought of as the $n\times n \times n$ tensor such that $(Dg(x))(i,j,k) = \frac{\partial (g(x))_{ij}}{\partial x_k}$.
	In other words, $(Dg(x))(\cdot, \cdot, k)$ is the matrix, each of entries is the derivative of $g(x)$ with respect to $x_k$.
	In addition, for a vector $v\in \Rn$, $Dg(x)[v,v]$ is a vector in $\Rn$ such that $(Dg(x)[v,v])_i = v^\top Dg(x)(\cdot, \cdot, i)v$.
	\item For a matrix $A$ of size $n\times n$, we use $A\cdot Dg(x)$ to denote a $n\times n \times n$ tensor such that $(A\cdot Dg(x))(\cdot, \cdot, i) = A\cdot (Dg(x))(\cdot, \cdot, i)$.
	We use $\tr(A\cdot Dg(x))$ to denote a vector in $\Rn$ such that $(\tr(A\cdot Dg(x)))_i = \tr((A\cdot Dg(x))(\cdot, \cdot, i))$.
\end{itemize}

\subsection{Definitions\label{subsec:Definitions}}

\paragraph{Convex body.} A convex body is a compact and convex set.

\paragraph{Isotropy.} A random variable $X$ is said to be in isotropic position if $\E X = 0$ and $\E XX^{\top}=I$.

\paragraph{Pseudo-inverse.}

For a matrix $A\in\R^{m\times n}$, it is well known that there always
exists the unique pseudo-inverse matrix $A^{\dagger}$ that satisfies
the following conditions:
\begin{enumerate}
\item $A^{\dagger}AA^{\dagger}=A^{\dagger}$.
\item $AA^{\dagger}A=A$.
\item $AA^{\dagger}$ and $A^{\dagger}A$ are symmetric.
\end{enumerate}
It is also well known that $\nulls(A^{\dagger})=\nulls(A^{\top})$
and $\range(A^{\dagger})=\range(A^{\top})$.

\paragraph{Pseudo-determinant.}

For a square matrix $A$, its pseudo-determinant $\pdet(A)$ is defined
as the product of non-zero eigenvalues of $A$.

\paragraph{Leverage score.}

For a matrix $A\in\R^{m\times n}$, the leverage score of the $i^{th}$
row is $(A(A^{\top}A)^{\dagger}A^{\top})_{ii}$ for $i\in[m]$. When
$A$ is full-rank, it is simply $(A(A^{\top}A)^{-1}A^{\top})_{ii}$.

\paragraph{Log-barrier \& Dikin ellipsoid.}

For a polytope $P=\{x\in\Rn:Ax\leq b\}$ where $A\in\R^{m\times n}$
and $b\in\R^{m}$, let us denote the $i^{th}$ row of $A$ by $a_{i}$
and the $i^{th}$ row of $b$ by $b_{i}$. The log-barrier of $P$
is defined by
\[
\phi(x)=-\sum_{i=1}^{m}\log(b_{i}-a_{i}^{\top}x).
\]
For $x\in P$, the Dikin ellipsoid at $x$ is defined by $D(x) := \{y\in \Rn: (y-x)^{\top}\nabla^2\phi(x)(y-x)\leq 1\}$. 
The Dikin ellipsoid is always contained in $P$.

\paragraph{Analytic center.} The analytic center $x_{ac}$ of the polytope $P$ is the point minimizing the log-barrier (i.e., $x_{ac} = \arg\min \phi (x)$).

\paragraph{Self-concordant function.} A function $f:\Rn \to \R$ is self-concordant if it satisfies $\left|D^{3}f(x)[h,h,h]\right|\leq2\left(D^{2}f(x)[h,h]\right)^{3/2}$ for all $h\in\Rn$ and $x\in\Rn$.

\paragraph{Highly self-concordant function.}\label{def:selfcon}
	A barrier $\phi$ is called \emph{highly self-concordant} if it satisfies
	for all $h\in\R^{d}$ and $x\in\R^{d}$
	\[
	\left|D^{3}\phi(x)[h,h,h]\right|\leq2\left(D^{2}\phi(x)[h,h]\right)^{3/2}\quad\textrm{and}\quad\left|D^{4}\phi(x)[h,h,h,h]\right|\leq6\left(D^{2}\phi(x)[h,h]\right)^{2}.
	\]

\paragraph{Total variation.} For two probability distributions $P$ and $Q$ on support $K$, the total variation distance of $P$ and $Q$ is
\[
\norm{P-Q}_{\TV} 
\defeq \sup_{A\subseteq K} (P(A)-Q(A)).
\]

\subsection{Details\label{subsec:Details}}

\paragraph{Inverse and Determinant of block matrix.}

For a square matrix $M=\left[\begin{array}{cc}
A & B\\
C & D
\end{array}\right]$ with blocks $A,B,C,D$ of same size, if $D$ and $A-BD^{-1}C$ are
invertible, then its inverse and determinant can be computed by 

\begin{align*}
M^{-1} & =\left[\begin{array}{cc}
(A-BD^{-1}C)^{-1} & -(A-BD^{-1}C)^{-1}BD^{-1}\\
-D^{-1}C(A-BD^{-1}C)^{-1} & D^{-1}+D^{-1}C(A-BD^{-1}C)^{-1}BD^{-1}
\end{array}\right],\\
\det(M) & =\det(D)\det(A-BD^{-1}C).
\end{align*}

\paragraph{Orthogonal projection.} Let $S=\{x\in \Rn: Ax=b\}$ for $A\in \R^{m\times n}$ and $b\in \R^m$, and $x_0$ be a point in $S$.
Thus $S-x_0$ is the null space of $A$, due to $A(x-x_0)=0$.
The orthogonal projection $P$ to this null space is
\[
	P = I - A^\top (AA^\top )^{-1}A.
\]
Note that the range of $P$ always lies in the null space because
\[
	A(Pv) = A(I-A^\top (AA^\top )^{-1}A)v = 
	Av - AA^\top(AA^\top)^{-1}Av = Av-Av = 0.
\]
$I-P$ is also an orthogonal projection matrix, and eigenvalues of orthogonal projection matrices are either $0$ or $1$.

\paragraph{Matrix calculus.} Let $U(x)$ be a $n\times n$ matrix with a parameter $x\in \Rn$.
	\[
	\frac{\partial U^{-1}(x)}{\partial x_i} = - U(x)\frac{\partial U(x)}{\partial x_i}U(x).
	\]
	Hence using the notation $Dg$, we can write in a more compact way as 
	\[
		DU^{-1}(x) = -U(x)DU(x)U(x).
	\]
	For $\log \det$, 
	\[
		\frac{\partial \log \det U(x)}{\partial x_i} = \tr\Par{U^{-1}(x)\frac{\partial U(x)}{\partial x_i}}.
	\]
	In other words, 
	\[
		D(\log \det U(x)) = \tr(U^{-1}(x)DU(x)).
	\]

\paragraph{Cholesky decomposition.}

For a symmetric positive definite matrix $A$, there exists a lower
triangular matrix $L$ such that $LL^{\top}=A$.

\paragraph{Newton's method.} For $f$ convex and twice differentiable in $\Rn$, consider an unconstrained convex optimization $\min_x f(x)$.
Given a starting point $x_0 \in \Rn$, the Newton's method repeats 
\[
	x_i = x_{i-1} - (\nabla^2f(x_{i-1}))^{-1}\nabla f(x_{i-1})\quad \forall i\in \mathbb{N}
\]
to solve the optimization problem.

\end{document}